\definecolor{cb-red}   {RGB}{223,   80,  80}
\newlength{\dhatheight}
\newcommand{\doublehat}[1]{%
    \settoheight{\dhatheight}{\ensuremath{\hat{#1}}}%
    \addtolength{\dhatheight}{-0.35ex}%
    \hat{\vphantom{\rule{1pt}{\dhatheight}}%
    \smash{\hat{#1}}}}
\theoremstyle{plain}
\newtheorem{thm}{Theorem}[section]
\newtheorem{lem}[thm]{Lemma}
\newtheorem{prop}[thm]{Proposition}
\theoremstyle{definition}
\newtheorem{defi}[thm]{Definition}
\theoremstyle{remark}
\newtheorem{rem}[thm]{Remark}
\numberwithin{equation}{section}
\newcommand\bovermat[2]{%
  \makebox[0pt][l]{$\smash{\overbrace{\phantom{%
    \begin{matrix}#2\end{matrix}~~}}^{\text{#1}}}$}#2}
\newcommand{\R}{\mathbb{R}}
\newcommand{\N}{\mathbb{N}}
\newcommand{\PP}{\mathbb{P}}
\newcommand{\eps}{\varepsilon}
\newcommand{{\barV}}{{\overline{{V}}}}
\newcommand{\bv}{{\boldsymbol{{v}}}}
\newcommand{\bw}{{\boldsymbol{{w}}}}
\newcommand{\bXi}{{\boldsymbol{{\Xi}}}}
\newcommand{\bZ}{\boldsymbol{Z}}
\newcommand{\bG}{\boldsymbol{G}}
\newcommand{\bU}{\boldsymbol{U}}
\newcommand{\bV}{\boldsymbol{V}}
\newcommand{\bW}{\boldsymbol{W}}
\newcommand{\bPsi}{\boldsymbol{\Psi}}
\newcommand{\be}{{\boldsymbol{{e}}}}
\newcommand{\bl}{{\boldsymbol{{\lambda}}}}
\newcommand{\brU}{{\mathring{\boldsymbol{U}}}}
\newcommand{\brV}{{\mathring{\boldsymbol{V}}}}
\newcommand{\brW}{{\mathring{\boldsymbol{W}}}}
\newcommand{\brp}{{\mathring{\boldsymbol{p}}}}
\newcommand{\brq}{{\mathring{\boldsymbol{q}}}}
\newcommand{\bJ}{\boldsymbol{J}}
\newcommand{\bGamma}{\boldsymbol{\Gamma}}
\newcommand{\bK}{\boldsymbol{K}}
\newcommand{\bR}{\boldsymbol{R}}
\newcommand{\bS}{\boldsymbol{S}}
\newcommand{\w}{\boldsymbol{w}}
\newcommand{\bxi}{\boldsymbol{\xi}}
\newcommand{\brxi}{{\mathring{\boldsymbol{\xi}}}}
\newcommand{\B}{\boldsymbol{B}}
\newcommand{\balpha}{\boldsymbol{\alpha}}
\newcommand{\bbeta}{\boldsymbol{\beta}}
\newcommand{\bgamma}{\boldsymbol{\gamma}}
\newcommand{\A}{\boldsymbol{A}}
\newcommand{\M}{\boldsymbol{M}}
\newcommand{\bp}{\boldsymbol{p}}
\newcommand{\0}{\boldsymbol{0}}
\newcommand{\LL}{\boldsymbol{\Lambda}}
\newcommand{\E}{\mathbb{E}}
\newcommand{\G}{\boldsymbol{G}}
\newcommand{\bq}{\boldsymbol{q}}
\newcommand{\bO}{\boldsymbol{O}}
\newcommand{\bX}{\boldsymbol{X}}
\newcommand{\z}{\boldsymbol{z}}
\newcommand{\bN}{\boldsymbol{N}}
\newcommand{\bP}{\boldsymbol{P}}
\newcommand{\bPi}{\boldsymbol{\Pi}}
\newcommand{\kU}{\boldsymbol{\mathfrak{U}}}
\newcommand{\kV}{\boldsymbol{\mathfrak{V}}}
\newcommand{\kW}{\boldsymbol{\mathfrak{W}}}
\newcommand{\kE}{\boldsymbol{\mathfrak{E}}}
\newcommand{\mkU}{{\mathfrak{U}}}
\newcommand{\mkV}{{\mathfrak{V}}}
\newcommand{\mkW}{{\mathfrak{W}}}
\newcommand{\average}{{\mathchoice {\kern1ex\vcenter{\hrule height.4pt
width 6pt depth0pt} \kern-9.7pt} {\kern1ex\vcenter{\hrule
height.4pt width 4.3pt depth0pt} \kern-7pt} {} {} }}
\def\R{\mathbb{R}}
\author{L\'ena\"ic Chizat}
\address{EPFL SB MATH, Institute of Mathematics, Station 8, CH-1015 Lausanne, Switzerland}
\email{lenaic.chizat@epfl.ch}
\author{Maria Colombo}
\address{EPFL SB MATH, Institute of Mathematics, Station 8, CH-1015 Lausanne, Switzerland}
\email{maria.colombo@epfl.ch}
\author{Xavier Fern\'andez-Real}
\address{EPFL SB MATH, Institute of Mathematics, Station 8, CH-1015 Lausanne, Switzerland}
\email{xavier.fernandez-real@epfl.ch}
\author{Alessio Figalli}
\address{ETH Zurich, Department of Mathematics, R\"amistrasse 101, 8092 Z\"urich, Switzerland}
\email{alessio.figalli@math.ethz.ch}
\title[Infinite-width limit of deep linear neural networks]{Infinite-width limit of \\deep linear neural networks}
\thanks{M. C. and X. F. were supported by the SNF grant 200021\_182565 and by the Swiss State Secretariat for Education Research and Innovation (SERI) under contract number M822.00034. X. F. was furthermore supported by the SNF grant PZ00P2\_208930. A. F. was supported by the European Research Council (ERC) under grant agreement No 721675 ``Regularity and Stability in Partial Differential Equations (RSPDE)'' and by the Lagrange Mathematics and Computation Research Center.}
\subjclass[2020]{68T07, 35Q49.}
\begin{document}

\begin{abstract}
This paper studies the infinite-width limit of deep linear neural networks initialized with random parameters. We obtain that, when the number of neurons diverges, the training dynamics converge (in a precise sense) to the dynamics obtained from a gradient descent on an infinitely wide deterministic linear neural network. Moreover, even if the weights remain random, we get their precise law along the training dynamics, and prove a quantitative convergence result of the linear predictor in terms of the number of neurons. 

We finally study the continuous-time limit obtained for infinitely wide linear neural networks and show that the linear predictors of the neural network converge at an exponential rate to the minimal $\ell_2$-norm minimizer of the risk. 
\end{abstract}

\maketitle

\section{Introduction}
The description of the training dynamics of (artificial) neural networks (NNs) in the infinite-width limit, has in   recent years shed light on several aspects of deep learning theory, such as (i) the existence of well-posed limits, which suggests to interpret practical large scale models as approximations of those limits, (ii) the importance of the choice of scalings/parametrization\footnote{That is, the choice, as a function of the width, of the variance of the random initialization and of the learning rates for each layer.} when passing to the limit --- since several well-behaved but fundamentally different limits can be obtained, and (iii) the characterization of the long-term behavior of the dynamics --- such as global convergence or algorithmic regularization --- which in turn helps understanding the learning abilities of neural networks.

These aspects are rather well understood for two-layer neural networks, but the theory is lacunary for deeper NNs. A description of the infinite-width dynamics is available for the Neural Tangent (NTP) and Integrable (IP) parameterizations (discussed below), but both limits exhibit a form of degeneracy such as a lack of feature learning. In~\cite{yang2021tensor}, the Maximal Update Parameterization ($\mu$P) --- which is in a sense intermediate between (NTP) and (IP) in terms of scale --- was introduced and shown to preserve feature learning in the limit for certain architectures, such as fully-connected NNs, which suggests that $\mu$P is a natural case of study. However, the theoretical understanding of this limit is so far very limited, because this limit involves large random matrices in an intricate way. In particular, the following fundamental questions  are still open: 
\begin{itemize}
\item Is the infinite-width limit of Gradient Descent (GD) a GD trajectory in some infinite-dimensional space? 
\item Does it admit a well-posed continuous-time limit? 
\item Does it converge to minimizers? And when several minimizers exist, can we characterize which particular solution it selects? 
\end{itemize}

In this paper, we study the infinite-width limit of deep \emph{linear}\footnote{Linear NNs are NNs without nonlinear maps between layers. Although they are linear in the input data, we note that these models are non-linear in their parameters.} NNs under $\mu$P, and we answer positively to all these questions.  Throughout the paper, we focus on the three-layer case, although our tools and analysis could be extended to more layers (the main conceptual gap happens when going from two to three layers).  The last section shows, without technical details, how our three-layer results read in the case of deep neural networks with an arbitrary number of layers. Our analysis of linear NN is intended as a step towards understanding the dynamics in the general non-linear case, for which the three questions above are still unresolved.

\subsection{Related work and other limits} 
The first analysis of wide NNs may be traced back to~\cite{neal1996priors, barron1994approximation} for static considerations. The dynamics of wide NNs were first studied in~\cite{jacot2018neural,du2018gradient, daniely2017sgd, allen2019convergence} for the NTP (or related linear dynamics) and in~\cite{nitanda2017stochastic, mei2018mean, chizat2018global, rotskoff2018neural, sirignano2020meanI} for non-linear dynamics in two-layer NNs under $\mu$P, which is known as \emph{mean-field parameterization} in this case (see Remark~\ref{rmk:parameterizations} for a description of these various parameterizations). The importance of the choice of parameterization when passing to the limit was first highlighted in~\cite{chizat2019lazy,mei2019mean} and systematically studied in~\cite{golikov2020towards, yang2021tensor}. Parameterizations akin to $\mu$P were previously empirically studied in~\cite{geiger2020disentangling} as a natural extension of the two-layer mean-field parameterization and a fix to the degeneracy of IP using large initial learning rates was proposed in~\cite{hajjar2021training}.

Our work has strong connections to~\cite{yang2021tensor}, which shows, essentially,  that all the random vectors that are generated when running a finite number of GD steps on a (non-linear) NN converge jointly in law to a family of objects characterized by an abstract algorithm. Because of the intricate dependency that arises between random matrices and random vectors, this limit ``algorithm'' is, unfortunately, more complex than its finite-width counterpart and hard to study beyond a few GD steps. One of our contributions is, for the particular case of linear NNs, to exhibit a simple and theoretically tractable structure in this limit. From a technical viewpoint,~\cite{yang2021tensor} relies on the technique of Gaussian conditioning, which originated in the field of statistical physics to describe TAP equations~\cite{bayati2011dynamics, bolthausen2014iterative}, while we use the \emph{method of moments} which is another classical technique of random matrix theory that allows to easily obtain universality (i.e., our results  apply for non-Gaussian initializations as well; note that  the universality of the technique of~\cite{yang2021tensor} was proved recently in~\cite{golikov2022nongaussian}) and rates that are quantitative in the width. The random matrix statements of our work (in particular Proposition~\ref{prop:recurrence}) have thus their counterpart in the language of~\cite{yang2021tensor}; by proposing an independent proof with different techniques, our purpose is to make the analysis self-contained as well as to shed a different light on the objects appearing in   the limit.

Finally, there is a rich literature on the training dynamics of linear NNs. Some works show that the optimization landscape is benign~\cite{bah2022learning, eftekhari2020training, du2019width} (the latter studies the NTP and thus a dynamics that becomes linear in the large width case), other works study settings where the dynamics display a ``saddle to saddle'' behavior~\cite{li2020towards, jacot2021saddle, gidel2019implicit, saxe2019mathematical} and finally, a line of works studies the implicit bias of gradient descent~\cite{ji2019gradient, arora2019implicit}; that is, which solution is chosen when the problem is underdetermined. Our analysis in the last section borrows ideas from~\cite{ji2019gradient} which shows a  min-$\ell_2$ implicit bias for linear NNs with the logistic loss. Note that linear NN do not always exhibit this type of implicit bias: there are subtle results for architectures that are not fully connected~\cite{gunasekar2018implicit, woodworth2020kernel, pesme2021implicit}.

\subsection{Organization of the paper} In Section~\ref{sec:presentation}, we present our main results and illustrate them with numerical experiments. Section~\ref{sec:basis} studies the structure of iterated products of large random matrices with random vectors, and we show that they can be expanded in a basis of random vectors. These objects are the building blocks of the GD iterations and these results are exploited in Section~\ref{sec:main-proof}, which contains the proof of the infinite-width limit. In Section~\ref{sec:limit-properties} we study properties of the limit system.   Finally, in Section~\ref{sec:multi} we describe the analogous results for multi-layer NNs.

\section{Presentation of the main results}\label{sec:presentation}

 This section presents a rigorous discussion of linear neural networks under $\mu$P of width $m$, in the limit as $m\to \infty$. 
In the case of two-layer neural networks, the analogous problem has been qualitatively understood in \cite{nitanda2017stochastic, mei2018mean, chizat2018global, rotskoff2018neural, sirignano2020meanI, wojtowytsch2020convergence} (see also the reviews \cite{weinenE2020, fernfigalli2022, bach2021gradient}). The first striking special feature of the two layers case is that there is a natural choice of the parametrization --- which mathematically is represented by a suitable factor of $m$ in front of the output weights --- that allows the parameters to remain nondegenerate and deterministic in the limit $m\to \infty$. Under this parametrization, two-layer neural networks can be interpreted as a Wasserstein
gradient flow for the weights (also in the limit), and hence the problem as $m\to \infty$  is also a solution of a Wasserstein gradient flow (and in particular it can be written as  a family of parabolic equations).

For neural networks of more than two layers, several aspects of the previous analysis change. Firstly, as discussed in the introduction, it is not possible to find a natural parametrization (that is, a consistent rescaling of the  three layers of weights) such that one expects them to remain nondegenerate or deterministic in the limit   $m\to \infty$.  In fact, as we will also see a posteriori, with the right choice of parametrization outlined in subsections~\ref{ssec:settings} and \ref{ssec:settings2} below,  the evolution of the entries of the intermediate layer is negligible with respect to their initialization size, but these small variations change significantly the output.
Due to this issue with parametrizations, it is essential in our analysis to consider randomized initial data, and to expect such random effect to survive in our limit system with some averaging effects. 

Our limit system is expressed in a basis of independent, identically distributed gaussian random variables. In turn, its coefficients are obtained by solving an infinitely wide linear neural network, which in the continuous-time limit can be represented as an explicit collection of ODEs.

\subsection{Setting}\label{ssec:settings}
Let $\tilde h^m$ be a single output three-layer linear neural network with input $x\in \R^d$, width $m\in \N$, and weights $\tilde \bU^m\in \R^{m\times d}$, $\tilde \bW^m \in \R^{m\times m}$, and $\tilde \bV^m \in \R^{m}$:
\[
y = \tilde h^m(x, \tilde \bU^m, \tilde \bW^m, \tilde \bV^m) = \sum_{i = 1}^m \tilde \bV^m_i \sum_{j = 1}^m \tilde \bW^m_{ij} \sum_{\ell = 1}^d \tilde \bU^m_{j\ell} x_\ell = \langle \tilde \bV^m , \tilde \bW^m \tilde \bU^m x\rangle.
\]
Given a smooth loss function $\mathcal{L}:\mathbb{R}\times\mathbb{R}\to \R$, we study the behavior of Gradient Descent (GD) starting from a random initialization on the expected loss $\tilde F$ defined as
\[
\tilde F^m(\tilde \bU^m, \tilde \bW^m, \tilde \bV^m) :=\int_{\R^d\times \R}\mathcal{L}(\tilde h^m(x), y)\, d\rho(x, y).
\]
where $\rho \in \mathcal{P}(\R^d\times \R)$ is a probability distribution that represents the input/output data. Specifically, we consider the sequence initialized as
\begin{equation}
\label{eq:init1}
\tilde U^m_{j\ell}(0) \sim \mathcal{N}\left(0, 1\right),\qquad \tilde W^m_{ij}(0) \sim \mathcal{N}\left(0, \frac{1}{m}\right),\qquad \tilde V^m_{i}(0) \sim \mathcal{N}\left(0, \frac{1}{m^2}\right),
\end{equation}
and, with a step-size/learning rate $\tau$, defined recursively as
\[
 \left\{
\begin{split}
\tilde \bU^m(\kappa+1) & = \tilde \bU^m(\kappa) - \tau {\color{red}m} \int \mathcal{L}'( \tilde h^m_{\kappa, \tau}(x), y) \nabla_{\tilde \bU^m} \tilde h_{\kappa, \tau}^m(x)d\rho_\kappa(x, y),\\
\tilde \bW^m(\kappa+1) &= \tilde \bW^m(\kappa) - \tau \int \mathcal{L}'( \tilde h^m_{\kappa, \tau}(x), y)\nabla_{\tilde \bW^m}\tilde h_{\kappa, \tau}^m(x)d\rho_\kappa(x, y),\\
\tilde \bV^m(\kappa+1) &= \tilde \bV^m(\kappa)  - \tau {\color{red}m^{-1}} \int \mathcal{L}'(\tilde h^m_{\kappa, \tau}(x), y)\nabla_{\tilde \bV^m} \tilde h_{\kappa, \tau}^m(x)d\rho_\kappa(x, y).
\end{split}
\right.
\]
where for notational convenience we are denoting 
\[
\begin{split}
\tilde h_{\kappa, \tau}^m(x) & = \tilde h^m(x, \tilde \bU^m(\kappa), \tilde \bW^m(\kappa), \tilde \bV^m(\kappa))\\
 \nabla_{\bullet}\tilde h_{\kappa, \tau}^m(x) & = (\nabla_{\bullet} \tilde h^m)(x, \tilde \bU^m(\kappa), \tilde \bW^m(\kappa), \tilde \bV^m(\kappa)),
\end{split}
\]
and $\mathcal{L}'$ denotes the derivative of the loss function with respect to the first argument. For the sake of generality, we are also considering $\rho_\kappa$ depending on $\kappa$, so that (mini-batch) stochastic gradient descent (SGD) is covered by our analysis. Our only assumption is that these probability measures have uniformly bounded second moments in the first variable:
\begin{equation}
\label{eq:unif_second_moments}
\sup_{\kappa}\int |x|^2\rho_\kappa(x, y) <+\infty. 
\end{equation}
 The factors in red ($m$ and $m^{-1}$) are layer-wise learning rates introduced so that each layer contributes equally to the variations of the predictor in the limit, as the theory will verify.

The randomness of the initialization --- and in particular the large random matrix $\tilde \bW(0)$ --- play a key role in our analysis. The choice of scalings is motivated as follows:
\begin{itemize}
\item The scaling of $\tilde \bU$ and $\tilde \bW$ is chosen so that $\tilde \bU^m(0) x$ and $\tilde \bW^m(0)\tilde \bU^m(0) x$ have a variance that does not depend on $m$ for large $m$ (by the CLT);
\item The scaling of $\tilde \bV$ is of order $1/m$ in order to avoid the lazy training phenomenon~\cite{chizat2019lazy}, that leads to a linear dynamics described in~\cite{jacot2018neural}.
\end{itemize} 
\begin{rem}\label{rmk:parameterizations}
This choice of scale for initialization is referred to as \emph{Maximal Update Parametrization} ($\mu$P) in \cite{yang2021tensor}, where it is shown to lead to feature-learning for each layer\footnote{In our context, there is no feature learning \emph{per se} since the predictor is linear, but we will see that the dynamics remains non-linear in the parameters in the limit (in contrast to NTP).
}. In the introduction, we mentioned NTP, which corresponds to the scales~\eqref{eq:init1} but with $\tilde V_i(0)\sim \mathcal{N}(0,1/m)$; and IP which corresponds to~\eqref{eq:init1} but with $\tilde W_{ij}(0)\sim N(c,1/m^2)$ which is degenerated unless one chooses  $c\neq 0$ or time-dependent learning rates~\cite{hajjar2021training}.
\end{rem}

Computing the gradient using the chain rule, we get the following recursion
\[
 \left\{
\begin{aligned}
\tilde \bU^m(\kappa+1) & =\tilde  \bU^m(\kappa) - \tau {\color{red} m} \tilde \bW^m(\kappa)^\top \tilde \bV^m(\kappa) (\tilde\bxi^m_\kappa)^\top ,\\
\tilde \bW^m(\kappa+1) &= \tilde \bW^m(\kappa) - \tau \tilde \bV^m(\kappa) (\tilde\bxi^m_\kappa)^\top \tilde \bU^m(\kappa)^\top,\\
\tilde \bV^m(\kappa+1) &= \tilde \bV^m(\kappa)  - \tau {\color{red} m^{-1}}\tilde\bW^m(\kappa) \tilde \bU^m(\kappa)\tilde\bxi^m_\kappa.
\end{aligned}
\right.
\]
where we have denoted $\tilde \bxi^m_\kappa := \int \mathcal{L}'(\tilde h^m_{\kappa, \tau}(x), y) x\, d\rho_{\kappa}(x,y)\in \R^d$,

\subsection{Scale-free parameterization}  \label{ssec:settings2} In the theory, it will appear convenient to deal with objects with a scale that is independent of $m$. To this end, we let 
\[
\bZ^m := \sqrt{m}\tilde \bW^m(0)
\]
 (which is a $m\times m$ matrix with independent $\mathcal{N}(0,1)$ entries) and we define:
\begin{equation}
\label{eq:init2}
\begin{split}
\bU^m (\kappa )& := \tilde\bU^m (\kappa ),\\  
 \bW^m (\kappa ) &:= m\big(\tilde \bW^m(\kappa)-\tilde\bW^m (0)\big)=m\tilde\bW^m (\kappa )-\sqrt{m}\bZ^m,\\
\bV^m (\kappa )& :=m \tilde \bV^m (\kappa )
\end{split}
\end{equation}
where the scaling factors are adjusted so that these matrices/vectors have entries of order $1$, as the theory will verify. By definition, $\bU^m(0)$ and $\bV^m(0)$ are random arrays with entries $\mathcal{N}(0, 1)$ and $\bW^m(0)$ is the zero matrix:
\begin{equation}
\label{eq:init11}
 U^m_{j\ell}(0) \sim \mathcal{N}\left(0, 1\right),\qquad W^m_{ij}(0) = 0,\qquad  V^m_{i}(0) \sim \mathcal{N}\left(0, 1\right).
\end{equation}
 The neural network in these new variables becomes
\[
y = h^m(x, \bU^m, \bW^m, \bV^m) = \left\langle \frac{1}{m} \bV^m, \left(\frac{1}{\sqrt{m}}\bZ^m+\frac{1}{m}\bW^m\right) \bU^m x\right\rangle.
\]
The evolution of $(\bU^m(\kappa),\bW^m(\kappa),\bV^m(\kappa))_{\kappa\in \N}$ can be also interpreted as GD (with layer-wise learning rates) on the objective function
\[
F^m(\bU^m,\bW^m,\bV^m) := \int_{\R^d\times \R}\mathcal{L}\left(h^m(x, \bU^m, \bW^m, \bV^m), y\right)\, d\rho(x, y).
\]
We do not explicitly include $\bZ$ in the variables as it is fixed during the training (i.e., we interpret $F^m$ as a random function). All in all, we have
\begin{equation}
\label{eq:training_m}
 \left\{
\begin{split}
\bU^m(\kappa+1) & = \bU^m(\kappa)-  {\tau} \left[\frac{1}{\sqrt{m}}\bZ^m + \frac{1}{m}\bW^m(\kappa)\right]^\top\bV^m(\kappa) (\bxi_{\kappa, \tau}^m)^\top,\\
\bW^m(\kappa+1) &= \bW^m(\kappa) - \tau \bV^m(\kappa)  (\bxi_{\kappa, \tau}^m)^\top (\bU^m(\kappa))^\top,\\
\bV^m(\kappa+1) &= \bV^m(\kappa)  -  \tau \left[\frac{1}{\sqrt{m}}\bZ^m + \frac{1}{m}\bW^m(\kappa)\right]\bU^m(\kappa)\bxi_{\kappa, \tau}^m ,
\end{split}
\right.
\end{equation}
where we have denoted 
$\bxi_{\kappa, \tau}^m =   \int x\, \mathcal{L}'(h^m_{\kappa, \tau}(x), y)   d\rho_\kappa(x, y)\in \R^{d}$ as above, with $ h_{\kappa, \tau}^m(x)  =  h^m(x,  \bU^m(\kappa),  \bW^m(\kappa),  \bV^m(\kappa))$.


\subsection{Limit dynamics} Our   main result is that, when $m\to\infty$, the training dynamics converge, in a sense detailed below, to some dynamics which are obtained by running the same gradient-based algorithm (i.e., GD or SGD) on an infinitely wide three-layer linear neural network
\begin{align}\label{eq:limit-model-predictor}
\chi (x,\A,\B,\G) = \B^\top \big( \LL +\bG) \A x,
\end{align}
where the variables
\begin{align*}
\A \in\ell^2(\N\times \{1,\dots,d\}) \subset  \R^{\infty\times d}, &&
\G \in \ell^2(\N\times \N) \subset \R^{\infty\times\infty},&& 
\B\in \ell^2(\N) \subset \R^\infty && 
\end{align*}
are initialized with
\begin{equation}
\label{eq:init_inf}
\A(0)  =
\left(
\begin{array}{c}
{\rm Id}_{d} \\
\0_{d\times 1}\\
\0_{d\times 1}\\
\vdots
\end{array}
\right) = 
\left(
\begin{array}{c}
\A_1(0)\\
\vdots\\
\A_d(0)\\
\A_{d+1}(0)\\
\A_{d+2}(0)\\
\vdots
\end{array}
\right)
\in \R^{\infty\times d},\quad
\B(0) = \left(\begin{array}{c}
1 \\
0\\
0\\
\vdots
\end{array}
\right)\in \R^{\infty\times 1},
\end{equation}
where $\A_i(0)\in \R^{1\times d}$ for $i\in \N$ and
\begin{equation}
\label{eq:init_inf2}
(\G)_{ij}(0) = 0\quad\forall(i, j)\in \N^2,
\end{equation}
Also $\LL$ is fixed (not trained) and represents the initialization of the intermediate layer. It is given by 
\begin{equation}
\label{eq:LL_def}
\begin{matrix}
 \LL
 =
 \begin{pmatrix}
 \bovermat{d}{ 0  & \dots & 0 } &
1  & 0  & 0  &  \dots \\

 1 & 0  & 
 \dots  & 0   & 1  & 0  & \ddots \\
 0 & 1 & 0  & \dots  & 0  &1 & \ddots \\
 \vdots & \ddots  & \ddots  & \ddots  &\ddots  & \ddots  & \ddots 
  \end{pmatrix}
 \end{matrix}
 \in \R^{\infty\times\infty},
\end{equation}
 i.e., $\LL = (\Lambda_{ij})_{ij}$ where
\[
 \Lambda_{ij} = 
\left\{
\begin{array}{ll}
1 & \quad\text{if $i+d = j$ or $j+1 = i$},\\
0 & \quad\text{otherwise}.
\end{array}
\right.
\]
The dynamics are therefore given by the following recursion
\begin{equation}
\label{eq:infsystem}
\left\{
\begin{array}{rcl}
\A(\kappa+1)  & = & \A(\kappa) -\tau [\LL + \G(\kappa)]^\top\B(\kappa)\bxi^\top_{\kappa, \tau},\\
{\G}(\kappa+1) & = & \G(\kappa) -\tau \B(\kappa) \bxi^\top_{\kappa, \tau} (\A(\kappa))^\top,\\
{\B}(\kappa+1) & = & \B(\kappa) -\tau [\LL + \G(\kappa)]\A(\kappa)\bxi_{\kappa, \tau},
\end{array}
\right.
\end{equation}
with 
\begin{align*}
\chi_{\kappa, \tau}(x) = \chi(x, \A(\kappa), \G(\kappa), \B(\kappa)) &&\text{and}&&
\bxi_{\kappa, \tau} = \int x\mathcal{L}'(\chi_{\kappa, \tau}(x), y) d\rho_\kappa(x, y)\in \R^d. 
\end{align*}
When $\rho_\kappa =\rho$ for all $\kappa\in \N$, this recursion is exactly the GD on the (deterministic) objective function $\mathcal{E}$ defined by
\begin{align}\label{eq:limit-objective}
\mathcal{E}(\A,\G,\B) = \int \mathcal{L}(\B^\top (\LL+\bG)\A x, y)d\rho(x,y).
\end{align}

\subsection{Main statements} \label{ssec:main-statements} Let us consider two families of independent infinite Gaussian vectors
\begin{equation}
\label{eq:basis_gamma}
(\bGamma_1,\bGamma_2,\dots)\qquad \text{and}\qquad (\tilde \bGamma_1,\tilde \bGamma_2,\dots),
\end{equation}
where the entries of $\bGamma_k,\tilde \bGamma_k \in \R^\N$ are all independent $\mathcal{N}(0,1)$ random vectors. We define 
\begin{equation}
\label{eq:lim_evo}
\left\{
\begin{split}
\bU^\infty(\kappa) & = \sum_{i \ge 1} \bGamma_i\A_i(\kappa),\\
\bW^\infty(\kappa) & =  \sum_{i, j \ge 1}  \tilde{\bGamma}_i{\bGamma_j}^\top G_{ij}(\kappa),\\
\bV^\infty(\kappa) & =  \sum_{i \ge 1}\tilde\bGamma_i B_i(\kappa) .
\end{split}
\right.
\end{equation}


We shall prove the convergence in distribution, as $m \to \infty$, of the finite dimensional time-discretized dynamics to the infinite one (see Definition~\ref{defi:inf_dist_conv} for the precise definition of convergence that we use). The proof of convergence will rely on the method of moments: we will prove that the moments of our random variables converge to the ones of the limit as $m\to \infty$, and this implies convergence in distribution. Our main theorem is the following:

\begin{thm}[Infinite-width limit]
\label{thm:main}
Let $\tau > 0$ be fixed, let $\mathcal{L}$ be such that $\mathcal{L}''$ is bounded, and let us suppose that \eqref{eq:unif_second_moments} holds. 

Let $(\bU^m(\kappa), \bW^m(\kappa), \bV^m(\kappa))_{\kappa \in \N}$ be the solution to \eqref{eq:training_m} with initialization \eqref{eq:init11}, and let $(\bU^\infty(\kappa), \bW^\infty(\kappa), \bV^\infty(\kappa))_{\kappa \in \N}$ be given by \eqref{eq:lim_evo} (see \eqref{eq:init_inf}-\eqref{eq:init_inf2}-\eqref{eq:infsystem}).  Then, for any stopping time $\kappa_* \in \N$, 
\[
\begin{array}{c}
\big( (\bU^m(0), \bW^m(0), \bV^m(0)), \dots, (\bU^m(\kappa_*), \bW^m(\kappa_*), \bV^m(\kappa_*))\big)\\
\downarrow {\rm d.}\\
\big((\bU^\infty(0), \bW^\infty(0), \bV^\infty(0)), \dots, (\bU^\infty(\kappa_*), \bW^\infty(\kappa_*), \bV^\infty(\kappa_*))\big)
\end{array}
\]
as $m\to \infty$. Moreover, considering the vectors in $\R^d$ that represent the linear predictors of the neural network
\begin{align}
\lambda^m(\kappa) &= \bU^m(\kappa)^\top (m^{-1/2}\bZ^m +m^{-1}\bW^m(\kappa))^\top (m^{-1} \bV^m(\kappa))\\
\lambda^\infty(\kappa) &= \A(\kappa)^\top (\LL+\G(\kappa))^\top \B(\kappa)\label{eq:infinite-width-linear}
\end{align}
 we have  $\lambda^m(\kappa)\overset{a.s.}{\to}\lambda^\infty(\kappa)$ for every $\kappa\in \N$ quantitatively (see \eqref{eq:quant_convergence} below).
\end{thm}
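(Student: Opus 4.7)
My plan is to proceed by induction on the time step $\kappa$, using the structural results of Section~\ref{sec:basis} on iterated products of large random matrices. The goal is to exhibit, at every $\kappa$, a representation
\[
\bU^m(\kappa) = \sum_{i\ge 1} \bGamma_i^m\,\A_i^m(\kappa),\qquad \bV^m(\kappa)=\sum_i \tilde \bGamma_i^m\,B_i^m(\kappa),\qquad \bW^m(\kappa)=\sum_{i,j}\tilde \bGamma_i^m (\bGamma_j^m)^\top G_{ij}^m(\kappa),
\]
in which the random vectors $(\bGamma_i^m,\tilde \bGamma_i^m)$ are independent of $\kappa$ and form a fixed family obtained by iteratively applying $m^{-1/2}\bZ^m$ and $m^{-1/2}(\bZ^m)^\top$ to the initial vectors $\bU^m(0),\bV^m(0)$ (suitably orthogonalized), while the scalar coefficients $(\A^m,\G^m,\B^m)(\kappa)$ satisfy a recursion which, as $m\to \infty$, converges to \eqref{eq:infsystem}. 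The key asymptotic properties of such a basis are exactly the content of Proposition~\ref{prop:recurrence}: the moments of $\bGamma_i^m,\tilde\bGamma_i^m$ converge to those of i.i.d.\ $\mathcal{N}(0,1)$ sequences; the normalized inner products $m^{-1}\langle \bGamma_i^m,\bGamma_j^m\rangle$ and $m^{-1}\langle \tilde\bGamma_i^m,\tilde\bGamma_j^m\rangle$ concentrate on $\delta_{ij}$, the mixed products $m^{-1}\langle \bGamma_i^m,\tilde\bGamma_j^m\rangle$ are $o(1)$, and $m^{-1/2}\bZ^m$ acts on the basis as the toggle encoded by $\LL$.

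Granting such a representation, the induction step is obtained by plugging the expansions into \eqref{eq:training_m} and reading off the coefficients. Each update generates only finitely many new basis vectors (from the rank-$d$ corrections), each of which is $m^{-1/2}\bZ^m$ (or its transpose) applied to a vector already in the span, hence can be absorbed into the enlarged family $(\bGamma_i^m,\tilde\bGamma_i^m)$. The scalar residue $\bxi^m_{\kappa,\tau}\in \R^d$ is a smooth function of the predictor $\lambda^m(\kappa)$ (using the bound on $\mathcal{L}''$ together with \eqref{eq:unif_second_moments}), so the coefficient recursion mimics \eqref{eq:infsystem} up to errors that vanish as $m\to \infty$. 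The first statement now follows by the method of moments: any finite joint moment of entries of $(\bU^m(\kappa),\bW^m(\kappa),\bV^m(\kappa))_{\kappa\le \kappa_*}$ is a finite polynomial in the entries of $\bGamma^m,\tilde\bGamma^m$ and the coefficients, and each such polynomial converges to its counterpart built from the Gaussian basis $(\bGamma_i,\tilde\bGamma_i)$ and the infinite-width coefficients $(\A,\G,\B)$. Since the limiting law is Gaussian-polynomial and thus determined by its moments, this gives convergence in distribution in the sense of Definition~\ref{defi:inf_dist_conv}.

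For the quantitative almost sure convergence of the linear predictors, decompose
\[
\lambda^m(\kappa) = m^{-3/2}\bU^m(\kappa)^\top (\bZ^m)^\top \bV^m(\kappa) + m^{-2}\bU^m(\kappa)^\top \bW^m(\kappa)^\top \bV^m(\kappa)
\]
and substitute the basis expansions. Using $m^{-1/2}\bZ^m\bGamma_i^m\approx \tilde\bGamma_{\sigma(i)}^m$ (the index shift dictated by $\LL$) together with asymptotic orthonormality, the sum collapses in the limit to $\B^\top(\LL+\G)\A=\lambda^\infty(\kappa)$. The error is a sum of an $O(1)$ number of terms of the form $C_{ij}^m\bigl(m^{-1}\langle\bGamma_i^m,\bGamma_j^m\rangle-\delta_{ij}\bigr)$ and analogous cross terms, each of which is $O(m^{-1/2})$ with sub-Gaussian tails by standard concentration for Gaussian bilinear forms. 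This produces the quantitative rate announced as \eqref{eq:quant_convergence}, and almost sure convergence then follows by a Borel--Cantelli argument.

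The main obstacle is the inductive control of the basis $(\bGamma_i^m,\tilde\bGamma_i^m)$: after $\kappa$ steps, the family generated by the iterative updates must remain, in a quantitative sense, jointly Gaussian with a controlled Gram matrix, even though each step injects a new rank-$d$ perturbation that depends on $\bZ^m$ through non-trivial polynomials. This is precisely what Section~\ref{sec:basis} and Proposition~\ref{prop:recurrence} are designed to provide, and we will invoke them as a black box. A secondary subtlety is that $\bxi^m_{\kappa,\tau}$ depends on $\bZ^m$ through $\lambda^m(\kappa)$, so the almost-sure convergence of $\lambda^m(\kappa)$ must be established simultaneously with the distributional convergence of the trajectory, in order to close the induction and pass to the limit inside the update.
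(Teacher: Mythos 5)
Your outline matches the paper's proof in spirit --- expand the iterates in a basis of iterated products, read off a recursion for the coefficients, control the remainder, conclude by the method of moments --- but there are two genuine gaps. The more structural one: you posit an exact representation $\bU^m(\kappa)=\sum_i\bGamma_i^m\A_i^m(\kappa)$ with random, $m$-dependent coefficients whose recursion only "converges" to~\eqref{eq:infsystem}; carried out this way, the argument would leave a nontrivial propagation-of-chaos step for the coefficients themselves. The paper instead writes $\bU^m(\kappa)=\brU(\kappa)+\kU(\kappa)$, where $\brU$ is expanded in the loopless-chain vectors $\bJ_k^m,\bK_k^m$ with \emph{deterministic} coefficients that satisfy the limit recursion~\eqref{eq:infsystem} \emph{exactly}, and all randomness is pushed into the explicit error arrays $\kU,\kW,\kV$; the only thing left to prove is that these errors are small. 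Note also that $\bJ_k^m,\bK_k^m$ are \emph{not} a Gram--Schmidt orthogonalization: they are iterated products restricted to loopless chains~\eqref{eq:JKdef2}, only asymptotically orthonormal, and this particular choice is what makes the combinatorial moment computations tractable. You also lump three different facts into Proposition~\ref{prop:recurrence}: the Gaussian limit is Theorem~\ref{thm:main_convergence}, the orthonormality is Proposition~\ref{prop:orthonormality}, and only the shift relation involving $\LL$ is Proposition~\ref{prop:recurrence}.

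The more serious gap: the inductive control of the errors, which you correctly flag as ``the main obstacle,'' is not in fact supplied by Section~\ref{sec:basis} or Proposition~\ref{prop:recurrence}, and your outline offers no substitute. Each step hits the running error $\kU(\kappa)$ by $m^{-1/2}\bZ^m$ (and multiplies it against $\brW$, etc.), so one must show this does not amplify. The paper introduces Lemma~\ref{lem:lemtobeapplied}, built on the operator-norm concentration of Theorem~\ref{thm:rand_mat}, precisely for this purpose, and then runs a bootstrap on $\E[\|\kU(\kappa)\|^\varrho]$ for all $\varrho$ that loses a controlled $m^\delta$ per step, with $\delta=1/(2\kappa_*)$ chosen at the end. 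Without this lemma and the bootstrap (Steps 4--6 of the proof), the induction does not close. A smaller point: the $L^2$ bound $\E[\|\kE_h(\kappa)\|^2]\lesssim_\eps m^{-1+\eps}$ is not summable over $m$, so your Borel--Cantelli argument needs the higher-moment versions of the error estimates, which the bootstrap supplies.
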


We can make the following remarks :
\begin{enumerate}[(i)]
\item Since $(\bU^\infty_j(\kappa),\bW^\infty_{i,j}(\kappa),\bV^\infty_i(\kappa))_{\kappa =1}^{\kappa^*}$  is a \emph{separately exchangeable} $\mathbb{R}^{3\kappa^*}$-valued random array, the dependency structure between its entries that we obtain in Theorem~\ref{thm:main} is consistent, as it should, with the Aldous-Hover representation of infinite exchangeable arrays~\cite[Thm.~1.4]{aldous1981representations}, which is a generalization of De Finetti's theorem. See~\cite{oh2021gradient} for a study of gradient flows with a similar dependency structure.
\item A perhaps counter-intuitive consequence of this theorem is that, even if this parametrization $\mu$P preserves \emph{feature-learning} in the limit, the evolution of the entries of the intermediate layer $\tilde \bW^m_{i,j}(\kappa)-\tilde \bW^m_{i,j}(0)$ (of order $1/m$) is negligible in front of their magnitude at initialization $\tilde \bW^m_{i,j}(0)$ (of order $1/\sqrt{m}$). Still, these small variations collectively create a significant variation of the output.
\item In the proof of this theorem, the convergence of the predictor is quantified as 
\begin{equation}
\label{eq:quant_convergence}
\E\left[\|\lambda^m(\kappa) - \lambda^\infty(\kappa)\|^2\right]\le C_{\eps,\kappa} m^{-1+\eps},
\end{equation}
for any $\eps > 0$ and for some $C_{\eps,\kappa}$ depending on $\eps > 0$ and $\kappa$, but independent of $m$. As can be seen from numerical experiments (see Figure~\ref{fig:width-convergence}-(C)) this convergence is expected to be (almost) optimal, which is also consistent with the fact that it comes from a Central Limit Theorem. 
\item Our proofs are based on universality properties and only use that $\bZ^m$ has i.i.d. subgaussian entries with zero mean and unit variance. In particular, the previous statement is also true for these more  general initializations of the $\tilde \bW^m$ weights. 
\item \label{it:rmk} If we want to take more general subgaussian initializations  $ \bU^m(0)$ and $ \bV^m(0)$ we can also do it, provided that in the previous statement (more precisely, in \eqref{eq:basis_gamma}) we change $\bGamma_1$ and $\tilde \bGamma_1$ by $ \bU^\infty(0)$ and $ \bV^\infty(0)$; see Figure~\ref{fig:non-Gaussian}.
\end{enumerate}

Our second statement studies the behavior of the limit model, which is an infinitely wide linear neural network with a particular \emph{deterministic} initialization. For the sake of simplicity, we consider the continuous-time limit $\tau\to 0$ of the dynamics, that is, the gradient flow of the functional $F^\infty$ and the corresponding linear predictor $(\lambda^\infty(t))_{t\geq 0}$.
\begin{thm}
\label{thm:main2}
Consider the square loss $\mathcal{L}(\hat y,y)=\frac12 \vert y-\hat y\vert^2$ and assume that $\rho$ has finite second moments. Then $\lambda^\infty(t)$ converges at an exponential rate to the minimal $\ell_2$-norm minimizer of the risk $\lambda \mapsto \frac12 \int \vert \lambda^\top x-y\vert^2 d\rho(x,y)$.
\end{thm}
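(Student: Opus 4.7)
The theorem concerns the continuous-time gradient flow of the convex risk
$$\mathcal{E}(\A,\G,\B) = \tfrac{1}{2}\int (\lambda^\top x - y)^2\,d\rho(x,y),\qquad \lambda := \A^\top(\LL+\G)^\top\B \in \R^d,$$
obtained as the $\tau\to 0$ limit of \eqref{eq:infsystem}. Setting $\M := \LL+\G$, $\Sigma := \int xx^\top\,d\rho$, $b := \int xy\,d\rho$, and $\bxi := \Sigma\lambda - b$, the equations read $\dot\A = -\M^\top\B\,\bxi^\top$, $\dot\G = -\B\bxi^\top\A^\top$, $\dot\B = -\M\A\bxi$, and the chain rule yields $\dot\lambda = -K(t)\bxi$ with the effective kernel $K(t) := \|\M^\top\B\|^2 I_d + \|\B\|^2 \A^\top\A + \A^\top\M^\top\M\A$. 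My plan has three steps: (a) two balance conservation laws; (b) the implicit-bias lemma $\lambda(t) \in \mathrm{range}(\Sigma)$; (c) a Polyak--Łojasiewicz-type inequality forcing $\mathcal{E}(t) - \mathcal{E}_{\min}$ to decay exponentially. Since the unique minimizer of the risk lying in $\mathrm{range}(\Sigma)$ is precisely the minimum $\ell_2$-norm minimizer $\bar\lambda$, these three ingredients together give the result.

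A direct derivative computation yields $\M(t)\M(t)^\top - \B(t)\B(t)^\top \equiv \LL\LL^\top - e_1 e_1^\top$ and $\M(t)^\top\M(t) - \A(t)\A(t)^\top \equiv \LL^\top\LL - P_d$, where $P_d$ is the projection in $\R^\infty$ onto the first $d$ coordinates. Since $b\perp\ker(\Sigma)$ we have $\bxi\in\mathrm{range}(\Sigma)$ throughout, so $\dot\A v = -(\M^\top\B)(\bxi^\top v) = 0$ for every $v\in\ker(\Sigma)$ and $\A(t)v = \A(0)v = (v;0;0;\ldots)\in\R^\infty$ is frozen. A short computation then gives $\lambda(t)^\top v = \alpha_0(t)^\top v$ for $v\in\ker(\Sigma)$, where $\alpha_0(t) := (\M(t)^\top\B(t))|_{1:d}\in\R^d$, hence $\lambda(t)\in\mathrm{range}(\Sigma) \iff \alpha_0(t)\in\mathrm{range}(\Sigma)$. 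Differentiating $\alpha_0$ and using the conservation law to simplify $P_d\M^\top\M\A$, the right-hand side of $\dot\alpha_0$ can be expressed through $A := P_d\A = I_d - \int_0^t\alpha_0(s)\bxi(s)^\top\,ds$ and through the higher blocks $\alpha_k(t) := (\M(t)^\top\B(t))|_{k(d+1)+1\,:\,k(d+1)+d}$ that enter through the band structure of $\LL^\top\LL$. Repeating the same computation for each $\alpha_k$, one obtains for $\Psi := (P_{\ker(\Sigma)}\alpha_k)_{k\ge 0}$ a closed linear Volterra system $\dot\Psi(t) = \int_0^t\mathcal{K}(t,s)\Psi(s)\,ds$ with $\Psi(0) = 0$ (since $\M(0)^\top\B(0) = \LL^\top e_1 = e_{d+1}$, which annihilates every $\alpha_k(0)$); Picard uniqueness then forces $\Psi \equiv 0$ and, in particular, $\lambda(t)\in\mathrm{range}(\Sigma)$.

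Once $\lambda(t), \bar\lambda\in\mathrm{range}(\Sigma)$ is known, the Polyak--Łojasiewicz bound $\|\bxi\|^2 \ge 2\sigma_+(\mathcal{E}-\mathcal{E}_{\min})$ holds with $\sigma_+$ the smallest positive eigenvalue of $\Sigma$. Combined with
$$\dot{\mathcal{E}} = -\bxi^\top K(t)\bxi = -\|\M^\top\B\|^2\|\bxi\|^2 - \|\B\|^2\|\A\bxi\|^2 - \|\M\A\bxi\|^2,$$
one gets $\dot{\mathcal{E}} \le -2\sigma_+\|\M(t)^\top\B(t)\|^2(\mathcal{E}-\mathcal{E}_{\min})$, so the main remaining obstacle is a uniform lower bound $\|\M(t)^\top\B(t)\| \ge c > 0$ along the trajectory. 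The a priori estimate $\mathcal{E}(t)\le\mathcal{E}(0)$, together with the identities $\frac{d}{dt}\|\A\|_F^2 = \frac{d}{dt}\|\B\|^2 = -2\bxi^\top\lambda$, gives polynomial-in-$t$ control of the weight norms, and the conservation identity $\|\M^\top\B\|^2 = \|\B\|^4 + \|\LL^\top\B\|^2 - B_1^2$ combined with a bootstrap/continuity argument prevents $\|\M^\top\B\|$ from collapsing. This yields $\mathcal{E}(t) - \mathcal{E}_{\min} \lesssim e^{-\gamma t}$, and $\|\lambda(t) - \bar\lambda\|^2 \le (2/\sigma_+)(\mathcal{E}(t) - \mathcal{E}_{\min})$ then gives the claimed exponential convergence of $\lambda^\infty(t)$ to $\bar\lambda$.
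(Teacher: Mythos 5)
Your overall strategy mirrors the paper's: selection principle (implicit bias onto $\ker(\M)^\perp$) plus a Polyak--{\L}ojasiewicz-style estimate for the dissipation. The matrix-valued conservation laws $\M\M^\top - \B\B^\top\equiv \LL\LL^\top - e_1e_1^\top$ and $\M^\top\M - \A\A^\top\equiv\LL^\top\LL - P_d$ are a nice refinement of the scalar balance relations in Proposition~\ref{prop:grad_flow}, and the identification of the minimum-norm minimizer via $\lambda\in\mathrm{range}(\Sigma)$ is exactly the content of Proposition~\ref{prop:select_principle}.

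There is, however, a genuine gap in the final step, and it is precisely where the paper's Proposition~\ref{prop:expconvergence} does its real work. You reduce exponential decay to a \emph{uniform} lower bound $\|\M(t)^\top\B(t)\|\ge c>0$, and you claim this follows from the conservation identity $\|\M^\top\B\|^2 = \|\B\|^4 + \|\LL^\top\B\|^2 - B_1^2$ plus a continuity argument. But that identity only gives $\|\M^\top\B\|^2\ge \|\B\|^2(\|\B\|^2-1)$, which is vacuous whenever $\|\B\|\le 1$ --- and $\|\B(0)\|=1$, with no a priori guarantee that $\|\B\|$ stays $\ge 1$ once the loss is near its minimum. In fact, the only a priori lower bound available is $\|\M^\top\B\|\gtrsim \|\bl_t\|/\|\A\|_{\mathrm{op}}$, and $\|\A\|_F^2$ grows like $1+t$ until one already knows that $\mathcal{E}_t-\mathcal{E}_\infty$ is integrable in $\sqrt{\cdot}$ --- which is circular. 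This is why the paper's proof is a two-phase/bootstrap argument: on $[0,t_\circ]$ the kernel $\bR_t\ge\tfrac12\mathrm{Id}$ gives exponential decay; after $t_\circ$ one only gets a lower bound $\bR_t\gtrsim c_\lambda/(1+\|\bl\|\int_0^t\sqrt{\mathcal{E}_\tau-\mathcal{E}_\infty}\,d\tau)$, leading first to a polynomial rate, which is then iterated (Steps 3--4) to an $e^{-ct^{\eps_\lambda}}$ rate and finally to a genuine exponential rate once $\int_0^\infty\sqrt{\mathcal{E}_\tau-\mathcal{E}_\infty}\,d\tau<\infty$ is established. Your ``bootstrap/continuity argument'' is a placeholder for precisely this iteration, so the proposal is incomplete at its core.

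A secondary, milder concern: the Volterra system $\dot\Psi=\int_0^t\mathcal{K}(t,s)\Psi(s)\,ds$ for $\Psi=(P_{\ker(\Sigma)}\alpha_k)_{k\ge0}$ is plausible but not shown to close. The term $\M^\top\M=\A\A^\top+\LL^\top\LL-P_d$ introduces $\A\A^\top$, which has no block-band structure, and the row blocks of $\A$ you skip (the $\overline{\balpha}$-rows at indices $k(d+1)$) are dynamically coupled to the $\balpha$-blocks through $\A^\top\A$. This bookkeeping can be carried out, but it is not automatic; the paper side-steps it with the compact finite ODE system $(M_t,\bN_t,\bO_t,\bP_t)$ of Proposition~\ref{prop:select_principle}, which starts at zero and is linear with locally bounded coefficients, so uniqueness gives the conclusion with no combinatorics.
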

Note that this \emph{implicit bias} towards min-$\ell_2$ norm solutions is not a particularly impressive property as such, since just the basic gradient flow on the square-loss initialized from $0$, i.e.,
\begin{align}\label{eq:linear-dynamics}
\lambda_{\mathrm{gf}}(0)=0&& \lambda'_{\mathrm{gf}}(t) = -\int x(\lambda_{\mathrm{gf}}(t)^\top x-y)d\rho(x,y),
\end{align}
satisfies the same statement (notice, however, that our dynamics are truly non-linear, see Figure~\ref{fig:time-convergence}). This result is mostly intended to highlight the fact that our characterization of the infinite-width dynamics is precise enough to obtain such properties.

\subsection{Numerical illustrations}
We consider GD for the finite-width and infinite-width models, with input dimension $d=10$, the square loss, a data distribution given by $x\sim \mathcal{N}(0,{\rm Id}_{d})$ and $y=x^\top \lambda^*$ for some $\lambda^*\in \R^d$ that is randomly drawn from $\mathcal{N}(0,{\rm Id}_{d})$. The code to reproduce the experiments is available online\footnote{\url{https://github.com/lchizat/2022-wide-linear-NN}}.

Figure~\ref{fig:width-convergence}  illustrates the convergence to the limit model as the width $m\to \infty$, with a step-size $\tau=0.2$. In (A), we show the path of $(\lambda^m(\kappa))_{\kappa\geq 0}$ projected on two first coordinates of $\mathbb{R}^d$. We observe that, as the width increases, they follow a trajectory approaching that of the limit $(\lambda^\infty(\kappa))_{\kappa\geq 0}$, which starts at $\lambda^\infty(0)=0$ and converges to the min-$\ell_2$ norm predictor $\lambda^*$ shown as a red diamond, and computed via the pseudo-inverse formula. In (B) we represent the rate of convergence in $m$ of the predictor as a function of the width, at both initialisation and large time. As it can be seen, it corresponds to \eqref{eq:quant_convergence} with $\eps = 0$. Finally, in (C), we represent the mean square of the entries of $\bV^m(\kappa)$, computed as $v_\kappa = \frac1m \sum_{i=1}^m\bV_j(\kappa)^2$ (which is also a proxy for the variance of $\bV^m_j(\kappa)$ for $1\leq j\leq m$ since the entries of $\bV^m(\kappa)$ are asymptotically independent) and its limit which is $\Vert \B(\kappa)\Vert_2^2$ by~\eqref{eq:lim_evo}. This is just a simple example of a statistics described by our limit model.

\begin{figure}
\centering
\begin{subfigure}{0.49\linewidth}
\centering
\includegraphics[scale=0.45]{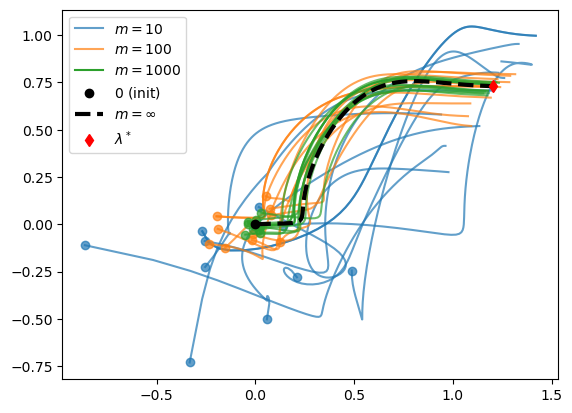}
\caption{Convergence in predictor space}
\end{subfigure}
\begin{subfigure}{0.49\linewidth}
\centering
\includegraphics[scale=0.5]{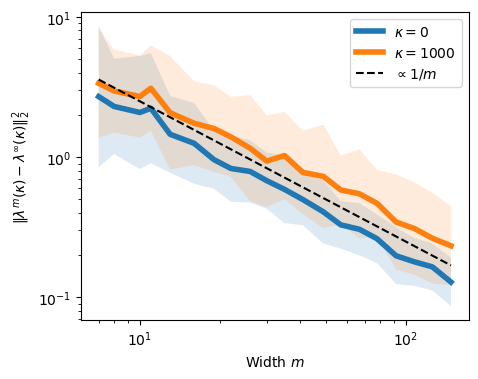}
\caption{Rate of convergence of the predictor}
\end{subfigure}
\begin{subfigure}{0.49\linewidth}
\centering
\includegraphics[scale=0.5]{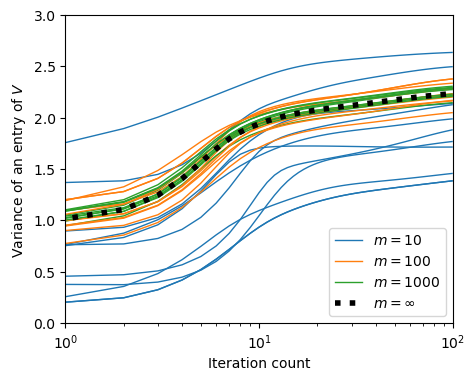}
\caption{Convergence in parameter space}
\end{subfigure}
\caption{Convergence to the limit model (A) Trajectory of the predictor $\lambda^m(t)$, projection on the two first coordinates (bullets represent $\lambda^m(0)$). (B) Rate of convergence of the predictor as a function of the width $m$, at initialization $\kappa=0$ and large time $\kappa=1000$ (shaded area represent standard deviation over $50$ repetitions). (C) Evolution of the average square of an entry of $\bV^m$ and in the limit.}\label{fig:width-convergence}
\end{figure}

In Figure~\ref{fig:time-convergence}, we 
 take a small step-size to approximate the gradient flow $\tau=0.001$ and explore the behavior of the limit model. It can be seen from the GD equations that at   $\kappa$ steps, only the first $d\cdot \kappa$ rows of $\A(\kappa)$ and of $\B(\kappa)$ are non-zero. Thus the infinite model can be trained exactly for a bounded number of steps\footnote{We also noticed that truncating the limit model~\eqref{eq:limit-model-predictor} below this size introduces an error that decays exponentially in the width, instead of the $m^{-1/2}$ rate for the randomly initialized model.}. We also introduce a fixed scale parameter $s>0$ that multiplies the predictor, which is equivalent to scaling the standard deviation of the initialization by $s^{1/3}$ at each layer. By~\cite{chizat2019lazy} and since $\lambda^\infty(0)=0$, we know that as $s\to \infty$, the dynamics converges to the linear dynamics~\eqref{eq:linear-dynamics}. This illustration confirms that the dynamics of $\lambda^\infty$ is non-linear (unless $s\to \infty$), although it has the same endpoints at $t=0$ and $t=\infty$ as the linear dynamics. For small scales, $s\ll 1$, we observe on the right plot that the objective function starts with a plateau; this is reflected by our convergence analysis in Proposition~\ref{prop:expconvergence}, which is a two-phase analysis: a first phase to escape from the initialization (which is close to a stationary point when $s\ll 1$) and a second phase with exponential convergence. We note that the convergence speeds in this plot are not directly comparable because we did not attempt to find the best step-size $\tau$ for various values of $s$.
 
 \begin{figure}
\centering
\begin{subfigure}{0.49\linewidth}
\centering
\includegraphics[scale=0.55]{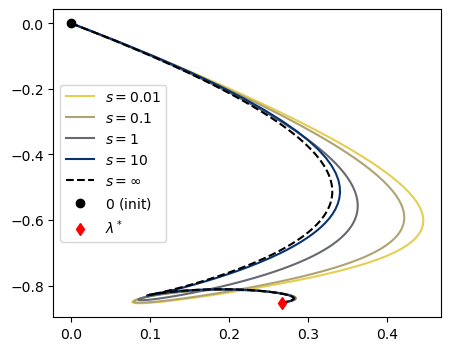}
\caption{Trajectory of GD (projection)}
\end{subfigure}
\begin{subfigure}{0.49\linewidth}
\centering
\includegraphics[scale=0.5]{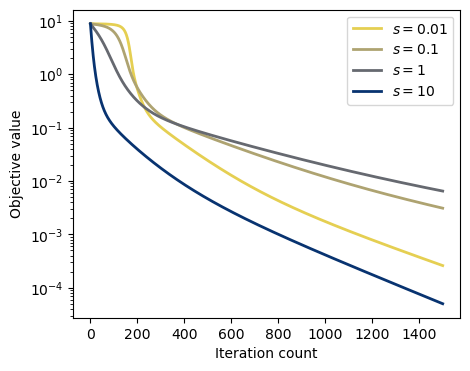}
\caption{Evolution of the objective function}
\end{subfigure}
\caption{Behavior of the limit model and effect of the scale parameter $s$. (left) Projection of $\lambda^m(t)$ on the two first coordinates. (right) Evolution of the loss.}\label{fig:time-convergence}
\end{figure}
 
 Finally, in Figure~\ref{fig:non-Gaussian} (A) we plot the distribution of the weights at large times with non-Gaussian initialization (in blue). As discussed above (remark~\eqref{it:rmk}) the weights are never Gaussian in this case (not even in the large time limit) since in general the first coefficient in the basis (e.g. $\B_1(\kappa)$) does not necessarily vanish at $\kappa = \infty$. However, the analysis described in Theorem~\ref{thm:main} still works and the non-gaussianity of the weights is only due to the interference of this first element of the basis: the other elements are still Gaussian. In the figure, this can be seen by subtracting the first element from the distribution of weights, where we recover a Gaussian profile (in orange). In any case, we still expect a rate of convergence to the minimizer given by the rate of the Central Limit Theorem (Figure~\ref{fig:non-Gaussian}~(B)).

\begin{figure}
\centering
\begin{subfigure}{0.49\linewidth}
\centering
\includegraphics[scale=0.55]{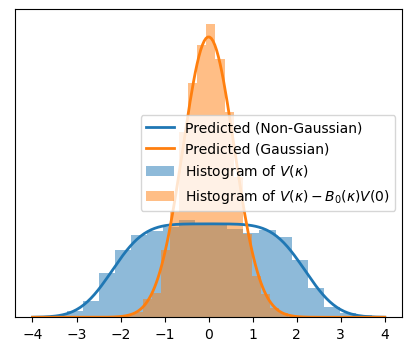}
\caption{Distribution of $\bV(\kappa=20)$.}
\end{subfigure}
\begin{subfigure}{0.49\linewidth}
\centering
\includegraphics[scale=0.5]{./images/3LNN-width-speed}
\caption{Rate of convergence of the predictor}
\end{subfigure}
\caption{Illustration for a non-Gaussian initialization (centered uniform distribution with the same variance as in the Gaussian case). (A) The distribution of parameters is non-Gaussian at all times, but its exact shape can be computed using the limit model (see remark~\eqref{it:rmk} after Theorem~\ref{thm:main}) (B) The convergence of the predictor to the limit model happens at the same rate as in the Gaussian case.}\label{fig:non-Gaussian}
\end{figure}

\section{An independent family of Gaussian vectors}\label{sec:basis}

\subsection{Notation}
\label{ssec:notation}
We denote vectors and matrices with bolded symbols (except for $x\in \R^d$), and scalars with plain symbols. Given an element $\M\in \R^{m\times n}$ with $m, n\in \N\cup\{\infty\}$, we denote 
\[
\|\M\|^2 := \sum_{i=1}^m\sum_{j = 1}^n |M_{ij}|^2.
\]
More generally, for any $p\ge 1$, we denote
\[
\|\M\|^p_p := \sum_{i=1}^m\sum_{j = 1}^n |M_{ij}|^p.
\]

When $\M$ is a random array in a probability space $(\Omega, \mathcal{F}, \PP)$, we still denote $\M\in \R^{m\times n}$ where now it is implicitly evaluated at an element of the sample space $\omega\in \Omega$. In particular, $\|\M\|^2 = \|\M(\omega)\|^2$ where the evaluation will be implicit whenever there is no ambiguity. It must not be confused with $\E[\|\M\|^2]$, which is given by 
\[
\E[\|\M\|^2] = \int_\Omega \|\M(\omega)\|^2 d\PP(\omega). 
\] 

Finally, let us give the following definition on the convergence in law/distribution for arrays of increasing size: 
\begin{defi}\label{defi:inf_dist_conv}
Given a family of random vectors $(\bX^m_i)_{i\in \N}$ with $\bX_i^m\in \R^m$, we say that they converge in distribution to a family of infinite-dimensional random vectors $(\bX^\infty_i)_{i\in \N}$ with $\bX_i^\infty\in \R^\infty$ if for every fixed $M, N\in \N$, the family $((\bX^m_i)_{1..N})_{1\le i\le M}$ converges in distribution to $((\bX^\infty_i)_{1..N})_{1\le i\le M}$ (where we have denoted, for $\bX\in \R^m$, $\bX_{1..N}$ its first $N$ components; which is always well defined, for $m$ large enough). 
\end{defi}

\subsection{The objects} 
\label{sec:theobjects}
For the sake of readability, we first construct the independent family that will act as a basis of our evolution in the unidimensional input case. We refer to Section~\ref{sec:multid} below for the statements in the multi-dimensional input case, where the proofs are essentially the same. 

Let $\bU$ and $\bV$ be two infinite random vectors,
\begin{equation}
\label{eq:JKdef0}
\bU := \begin{pmatrix}
    U_1        \\
    U_2        \\
    \vdots       
\end{pmatrix},\qquad \bV := \begin{pmatrix}
    V_1        \\
    V_2        \\
    \vdots       
\end{pmatrix},
\end{equation}
with entries $(U_i)_{i\in \N}$ and $(V_i)_{i\in \N}$ that are independent random variables with $U_i, V_i\sim \mathcal{N}(0,1)$ for $i\in \N$.
 
Let $\bZ$ be an infinite random matrix,
\[
\bZ := \begin{pmatrix}
    Z_{11}       & Z_{12} & \dots   \\
    Z_{21}       & Z_{22} & \dots \\
    \vdots       & \vdots & \ddots 
\end{pmatrix},
\]
 whose entries $(Z_{ij})_{i, j \in\N}$ are independent random variables $Z_{ij}\sim \mathcal{N}(0,1)$ for all $i,j\in \N$, and also independent from $(U_i)_{i\in \N}$ and $(V_i)_{i\in \N}$.
 
Let us denote by $\bU^m$ the restriction of $\bU$ to the first $m$  entries, $\bU^m \in \R^m$, with $U^m_i = U_i$ for $1\le i \le m$ (respectively $\bV^m$). Similarly, we denote by $\bZ^m$ the restriction of $\bZ$ to the first  $m\times m$ entries,  $\bZ^m\in \R^{m\times m}$,  with $Z^m_{ij} = Z_{ij}$ for $1\le i,j\le m$. 

Let us denote by $\bXi^m_{i}(k)$ the set of bipartite (directed) $k$-chains between two equal sets of indices $I_0 = I_1 = \{1,\dots,m\}$ that start in $I_1$ and end at $i$, where $i\in I_0$ if $k$ is odd, and $i\in I_1$ if $k$ is even. Namely, 
\[
\begin{split}
\bXi^m_{i}(k):= \{& \left((i_2,i_1), (i_2, i_3), (i_4, i_3),(i_4,i_5),  \dots, (i_{k-1}, i_k), (i, i_k)\right):\\
& ~~ i, i_{2j}\in I_0~~\text{for $1\le j\le (k-1)/2$},~~i_{2j-1}\in I_1~~\text{for $1\le j\le (k+1)/2$} \}
\end{split}
\]
if $k\in \N$ is odd, and 
\[
\begin{split}
\bXi^m_{i}(k):= \{& \left((i_2,i_1), (i_2, i_3), (i_4, i_3),(i_4,i_5),  \dots, (i_k, i_{k-1}), (i_k, i)\right):\\
& ~~ i_{2j}\in I_0~~\text{for $1\le j\le k/2$},~~i, i_{2j-1}\in I_1~~\text{for $1\le j\le k/2$} \}
\end{split}
\]
if $k\in \N$ is even. In particular, an element $\Xi\in \bXi^m_i(k)$ is of the form $\Xi = (\Xi_1,\dots,\Xi_k)$ where $\Xi_\ell = ((\Xi_\ell)_1, (\Xi_\ell)_2)$ with $(\Xi_\ell)_1\in I_0$ and $(\Xi_\ell)_2\in I_1$, for $ 1\le \ell \le k$, 
\[
\Xi_\ell = (i_\ell, i_{\ell+1})\quad\text{if $\ell$ is even}, \qquad \Xi_\ell = (i_{\ell+1}, i_{\ell})\quad\text{if $\ell$ is odd}, \qquad \text{and $i_{k+1} = i$}. 
\]

We  think of each  $\Xi_\ell$ for $1\le \ell\le k$ as possible indices of a matrix $m\times m$. In this way, if $\Xi\in \bXi^m_i(k)$  we denote  $\Xi^\top := (\Xi^\top_1,\dots, \Xi^\top_K)$ where $\Xi^\top_\ell := ((\Xi_\ell)_2,(\Xi_\ell)_1)$ (that is, transposing every matrix; or alternatively, reflecting the bipartite chain).

We can use the previous definitions to compute iterative multiplications of $(\bZ^m)^\top$ and $\bZ^m$ against $\bU^m$. That is, (using $(\Xi_1)_2 = i_1$ in the notation above)
\begin{equation}
\label{eq:mult_odd}
\begin{split}
\left(\bZ^m[(\bZ^m)^\top(\bZ^m)]^{\frac{k-1}{2}}\bU^m\right)_i & = \big(\overbrace{\bZ^m (\bZ^m)^\top\dots \bZ^m}^{k} \bU^m\big)_i\\
& \hspace{-2cm} = \sum_{i_1,\dots,i_k = 1}^m Z_{i, i_k}\dots Z_{i_2,i_3}Z_{i_2,i_1} U_{i_1}= \sum_{\Xi \in \bXi^m_{i}(k)} \left(\prod_{\ell = 1}^k Z_{\Xi_\ell}\right) U_{(\Xi_1)_2}
\end{split}
\end{equation}
if $k\in \N$ is odd, and 
\begin{equation}
\label{eq:mult_even}
\begin{split}
\left([(\bZ^m)^\top(\bZ^m)]^{\frac{k}{2}}\bU^m\right)_i & = \big(\overbrace{(\bZ^m)^\top \bZ^m\dots \bZ^m}^{k} \bU^m\big)_i\\
&  \hspace{-2cm}= \sum_{i_1,\dots,i_k = 1}^m Z_{i_k, i}\dots Z_{i_2,i_3}Z_{i_2,i_1} U_{i_1}= \sum_{\Xi \in \bXi^m_{i}(k)} \left(\prod_{\ell = 1}^k Z_{\Xi_\ell}\right) U_{(\Xi_1)_2}
\end{split}
\end{equation}
if $k\in \N$ is even. 

Let us denote by $\# v(\Xi)$ with $\Xi\in \bXi^m_i(k)$ the number of vertices seen by the $k$-chain $\Xi$, namely,\footnote{Here and in the sequel, given a finite set $A$, we denote by $|A|$ its cardinality.}
\[
\begin{split}
\# v(\Xi) &= |\left\{i_2,i_4,\dots, i_{k-1}, i\right\}|+|\left\{i_1,i_3,\dots, i_{k}\right\}|,\\
&\text{ where } \Xi = \left((i_2,i_1), (i_2, i_3),  \dots, (i_{k-1}, i_k), (i, i_k)\right)\in \bXi^m_i(k)
\end{split}
\]
if $k\in \N$ is odd, and 
\[
\begin{split}
\# v(\Xi) &= |\left\{i_2,i_4,\dots, i_{k}\right\}|+|\left\{i_1,i_3,\dots, i_{k-1}, i\right\}|,\\
&\text{ where } \Xi = \left((i_2,i_1), (i_2, i_3),  \dots, (i_{k}, i_{k-1}), (i_k, i)\right)\in \bXi^m_i(k)
\end{split}
\]
if $k\in \N$ is even. 

Let us define ${\tilde\bXi}^m_i(k)$ to be the subset of $\bXi^m_i(k)$ with chains that contain no loops (alternatively, chains that  visit  each vertex at most once), 
\[
{\tilde\bXi}^m_i(k):= \left\{\Xi \in \bXi^m_i(k) : \# v(\Xi) = k+1\right\}\subset \bXi^m_i(k).
\]

Finally, we define (cf. \eqref{eq:mult_odd}-\eqref{eq:mult_even}),
\begin{equation}
\label{eq:JKdef2}
\begin{split}
J^m_{k,i} & := \frac{1}{m^{k/2}}\sum_{\Xi \in {\tilde \bXi}^m_{i}(k)} \left(\prod_{\ell = 1}^k Z_{\Xi_\ell}\right)U_{(\Xi_1)_2},\\
K^m_{k,i} & = \frac{1}{m^{k/2}}\sum_{\Xi \in {\tilde \bXi}^m_{i}(k)} \left(\prod_{\ell = 1}^k Z_{\Xi^\top_\ell}\right)V_{(\Xi_1)_2},
\end{split}
\end{equation}
namely, we consider the product in \eqref{eq:mult_odd}-\eqref{eq:mult_even} (both against $\bU$ and $\bV$) but we keep only those elements of the sum that have no loops (and we rescale by the appropriate size, where the size-preserving objects are $m^{-1/2} \bZ^m$). Notice that in the multiplication against $\bV$ we are considering matrices $\bZ^\top$ instead of $\bZ$. In particular, we could alternatively think of $K_{k, i}^m$ as the (renormalized) sum over loopless chains starting from the set $I_0$ and ending at $i$ and with length $k$, where now $i\in I_0$ if $k$ is even and $i\in I_1$ if $k$ is odd (the opposite from before).

We define the vectors
\begin{equation}
\label{eq:JKdef}
\bJ^m_{k} := \begin{pmatrix}
    J^m_{k,1}        \\
    J^m_{k,2}        \\
    \vdots     \\
    J^m_{k,m}  
\end{pmatrix}\quad\text{and}\quad \bK^m_{k} := \begin{pmatrix}
    K^m_{k,1}        \\
    K^m_{k,2}        \\
    \vdots     \\
    K^m_{k,m}  
\end{pmatrix},\quad\text{with $J^m_{k,i}$ and $K^m_{k,i} $ given by \eqref{eq:JKdef2}}.
\end{equation}
(We denote $\bJ_0^m = \bU^m$ and $\bK_0^m = \bV^m$.) Let us also define, for $k\in \N$, $\{\bJ_k\}_{k\in \N}$ and $\{\bK_k\}_{k\in \N}$ families of independent, identically distributed (infinite) random vectors
\begin{equation}
\label{eq:JKdef_lim}
\bJ_{k} := \begin{pmatrix}
    J_{k,1}        \\
    J_{k,2}        \\
    \vdots    
\end{pmatrix}\qquad\text{and}\qquad \bK_{k} := \begin{pmatrix}
    K_{k,1}        \\
    K_{k,2}        \\
    \vdots      
\end{pmatrix}.
\end{equation}
with $J_{k, i}, K_{k, i}\sim \mathcal{N}(0,1)$ and all independent between them. 

\subsection{Convergence of the family} Let us now prove that the family of vectors $\{(\bJ_k^m, \bK_k^m)\}_{k\in \N}$  converges in distribution to the family of i.i.d. Gaussian vectors. We will in fact prove convergence of all the moments. 

\begin{thm}
\label{thm:main_convergence}
The family of vectors $\{(\bJ_k^m, \bK_k^m)\}_{k\in \N}$ defined in \eqref{eq:JKdef} converges, in distribution, to the family $\{(\bJ_k, \bK_k)\}_{k\in \N}$ (according to Definition~\ref{defi:inf_dist_conv}).
\end{thm}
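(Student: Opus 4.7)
The plan is to use the \emph{method of moments}. Since the coordinates of the limit family $\{(\bJ_k, \bK_k)\}_k$ are centered i.i.d.\ standard Gaussians, whose joint distribution is characterized by its mixed moments, it suffices to show that for every finite collection of indices $(k_\alpha, i_\alpha)_{\alpha=1}^p$ and $(\ell_\beta, j_\beta)_{\beta=1}^q$ the joint mixed moment
\[
M^m \;:=\; \E\!\left[\prod_{\alpha=1}^p J^m_{k_\alpha, i_\alpha} \cdot \prod_{\beta=1}^q K^m_{\ell_\beta, j_\beta}\right]
\]
converges as $m\to\infty$ to the corresponding moment of the limit family; Definition~\ref{defi:inf_dist_conv} then yields convergence in distribution for the full family.

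First, using the definition~\eqref{eq:JKdef2} of $J^m_{k,i}$ and $K^m_{k,i}$, I would expand $M^m$ as a multi-sum over tuples of loopless chains $(\Xi^\alpha)_{\alpha}$ and $(\Upsilon^\beta)_{\beta}$, with overall normalization $m^{-(\sum_\alpha k_\alpha + \sum_\beta \ell_\beta)/2}$. Since $(Z_{ij})$, $(U_i)$ and $(V_i)$ are mutually independent Gaussian families, each summand factorizes as a product of three independent Gaussian expectations, each of which is given by Wick's (Isserlis') theorem as a sum over perfect pairings of the occurrences of the underlying variables. Non-vanishing forces every $Z_{ab}$, $U_c$, $V_d$ to appear an even number of times in the combined product; in particular $p$ and $q$ must both be even, and the $U$-pairing (resp.\ $V$-pairing) forces the starting vertices $(\Xi^\alpha_1)_2$ (resp.\ $(\Upsilon^\beta_1)_2$) to coincide in pairs within the $J$-family (resp.\ $K$-family).

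The heart of the argument is the combinatorial classification of admissible configurations according to their $m$-weight. To each pairing configuration I associate the multigraph obtained by gluing the loopless paths $\Xi^\alpha$ and $(\Upsilon^\beta)^\top$ along the $Z$-pairings; the exponent of $m$ in the summand equals the number of free vertex choices minus $(\sum_\alpha k_\alpha + \sum_\beta \ell_\beta)/2$. Since each chain is loopless and hence contributes $k+1$ distinct vertices to its own path, a Euler-type count (edges vs.\ vertices in each connected component of the glued multigraph) shows that the maximum exponent is $0$, achieved exactly when the chains pair up two by two within the $J$-family and within the $K$-family, each pair sharing identical edge and vertex sets and—by the further $U$/$V$-pairing constraint and looplessness—identical starting and ending vertices. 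Equivalently, the dominant pairings match $J^m_{k_\alpha, i_\alpha}$ with $J^m_{k_{\alpha'}, i_{\alpha'}}$ only when $(k_\alpha, i_\alpha) = (k_{\alpha'}, i_{\alpha'})$, and similarly for the $K$'s. Each such matched pair contributes $m^{-k_\alpha}\cdot (m-1)(m-2)\cdots(m-k_\alpha) = 1 + O(m^{-1})$.

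Summing, the leading term is exactly the Wick formula for the moment of independent $\mathcal{N}(0,1)$ variables indexed by $(k, i)$ and $(\ell, j)$, which gives the prescribed limit moment. The main obstacle is the quantitative control of the sub-dominant configurations: one needs to check that any other admissible edge-pairing—edges shared between three or more chains, two chains of different lengths glued along a subpath, or an edge used four or more times—forces at least one redundant vertex identification, so that the associated multigraph has strictly fewer vertices than edges/2 plus the number of chain-pair components, yielding an extra factor of $O(m^{-1})$. This is a classical Wigner-type bookkeeping, but it must be carried out carefully with bipartite chains and with the additional constraints coming from the $U$- and $V$-pairings; it also provides the quantitative rate~\eqref{eq:quant_convergence} needed later.
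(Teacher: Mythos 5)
Your proposal is correct and follows essentially the same route as the paper: a moment computation reducing to a combinatorial classification of chain configurations by their $m$-weight, with the dominant contribution coming from chains paired identically two by two, yielding the double-factorial Gaussian moments in the limit. One stylistic difference: you invoke Wick's (Isserlis') theorem to organize the pairings, while the paper works directly with mixed moments of centered independent random variables (discarding any term where some factor appears to an odd power), which is slightly more economical and extends verbatim to non-Gaussian subgaussian entries --- the authors exploit exactly this in their universality remarks after the main theorem. The step you flag as the ``main obstacle'' --- that every non-maximal configuration loses at least a factor $m^{-1}$, and that the maximal ones consist of chains repeated identically twice and vertex-disjoint except at their fixed endpoints --- is precisely what the paper establishes with a careful edge-per-vertex double count ($2N \geq (p+q) - 2\,\#v_E + 2N$, forcing equality and hence multiplicity exactly $2$ everywhere); your sketch is consistent with it but leaves that bookkeeping to be filled in.
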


This statement says that products of the form \eqref{eq:mult_odd}-\eqref{eq:mult_even} have a simple asymptotic structure \emph{provided that we remove all the chains of indices with loops.} The chains with loops add correlations, which are described in the next proposition on the recursion property for this family. 

\begin{proof}
We  will show that, for each $N_J, N_K \in \N$ and every fixed families of pairs of indices $(k_1, i_1),\dots, (k_{N_J}, i_{N_J})$ and $(\ell_1, j_1),\dots, (\ell_{N_K}, j_{N_K})$, we have
\[
(J_{k_1,i_1}^m,\dots J^m_{k_{N_J}, i_{{N_J}}},K_{\ell_1,j_1}^m,\dots K^m_{\ell_{N_K}, j_{{N_K}}}) \xrightarrow[]{d.} (J_{k_1,i_1},\dots J_{k_{N_J}, i_{{N_J}}},K_{\ell_1,j_1},\dots K_{\ell_{N_K}, j_{{N_K}}})
\]
as $m\to \infty$. 

We use the method of moments. Let us fix the indices $(k_1, i_1),\dots, (k_{N_J}, i_{N_J})$ and $(\ell_1, j_1),\dots, (\ell_{N_K}, j_{N_K})$, and the powers $p_1,\dots,p_{N_J},q_1,\dots,q_{N_K}\in \N$, and let 
\begin{equation}
\label{eq:Em}
\mathcal{E}_m := \E\left[ (J_{k_1,i_1}^m)^{p_1}\dots (J^m_{k_{N_J}, i_{{N_J}}})^{p_{N_J}}(K_{\ell_1,j_1}^m)^{q_1}\dots (K^m_{\ell_{N_K}, j_{{N_K}}})^{q_{N_K}}\right]
\end{equation}
where we assume that $J_{k_1,i_1}^m, \dots, J_{k_{N_J},i_{N_J}}^m, K_{\ell_1,j_1}^m, \dots, K_{\ell_{N_K},j_{N_K}}^m$  are all different. We will show that
\[
\mathcal{E}_m \xrightarrow[]{m\to\infty} \mu_{p_1}\dots \mu_{p_{N_J}}\mu_{q_1}\dots\mu_{q_{N_K}},
\]
where $\mu_k$ denotes the $k$-th plain moments of a normal distribution $\mathcal{N}(0,1)$,
\[
\mu_k = \left\{
\begin{array}{ll}
0& \quad \text{if $k\in \N$ is odd},\\
(k-1)!! & \quad \text{if $k\in \N$ is even},
\end{array}
\right.
\]
with $(k-1)!! = (k-1)\cdot (k-3)\dots\cdot 5\cdot3\cdot 1$ being the  double factorial. This will directly give the desired result.

Recall that each element $J_{k_1,i_1}^m, \dots, J_{k_{N_J},i_{N_J}}^m, K_{\ell_1,j_1}^m, \dots, K_{\ell_{N_K},j_{N_K}}^m$ can be thought of as a sum over bipartite loopless chains between $I_0$ and $I_1$, starting at $I_1$ for $J_{k_\alpha, i_\alpha}^m$, starting at $I_0$ for $K_{k_{\alpha'}, i_{\alpha'}}^m$, and ending at $i_1,\dots, i_{N_J}, j_1, \dots, j_{N_k}$ with length $k_1,\dots,k_{N_J},\ell_1,\dots,\ell_{N_K}$, respectively; also, each ending vertex belongs to either $I_0$ or $I_1$ depending on the parity of the length ($k_\alpha$ or $\ell_{\alpha'}$). We denote by $\# v_{\rm end}$ the number of different  such ending vertices,
\[
\# v_{\rm end} := \left|\bigcup_{\substack{\alpha = 1\\k_\alpha \equiv 1~\text{mod}~2}}^{N_J} \{i_\alpha\}\cup\bigcup_{\substack{\alpha' = 1\\\ell_{\alpha'} \equiv 0~\text{mod}~2}}^{N_K} \{j_{\alpha'}\}\right|+\left|\bigcup_{\substack{\alpha = 1\\k_\alpha \equiv 0~\text{mod}~2}}^{N_J} \{i_\alpha\}\cup\bigcup_{\substack{\alpha' = 1\\\ell_{\alpha'} \equiv 1~\text{mod}~2}}^{N_K} \{j_{\alpha'}\}\right|.
\]

Let us define by $\mathfrak{G}$ the family of bipartite graphs (each graph is seen as a disjoint union of edges) connecting the sets of vertices $I_0$ and $I_1$ appearing in the expansion of the definition of $\mathcal{E}_m$. Namely, any graph $G\in\mathfrak{G}$ is a set of edges between $I_0$ and $I_1$ given by the (disjoint) union of $p_1$ elements in $\tilde \bXi_{i_1}^m(k_1)$, $p_2$ elements in $\tilde \bXi_{i_2}^m(k_2)$, $\dots$, and $q_{N_K}$ elements in $\tilde \bXi_{j_{N_K}}^m(\ell_{N_K})$:
\begin{equation}
\label{eq:G}
\begin{split}
\mathfrak{G} :=  \bigg\{  G & :  G = \bigsqcup_{\alpha = 1}^{N_J}\bigsqcup_{\beta = 1}^{p_\alpha} \Xi^{\beta,\alpha} \sqcup \bigsqcup_{\alpha' = 1}^{N_K}\bigsqcup_{\beta' = 1}^{q_{\alpha'}} \Theta^{\beta',\alpha'},
 \quad \text{for some}\quad \Xi^{\beta,\alpha} \in \tilde \bXi^m_{i_\alpha}(k_\alpha)\\
 & 
 \quad \text{and}\quad \Theta^{\beta',\alpha'} \in \tilde \bXi^m_{j_{\alpha'}}(\ell_{\alpha'})\quad\text{with}\quad 1 \le \alpha \le N_J, 1\le \alpha' \le N_K\bigg\}.
\end{split}
\end{equation}
Observe that each element $G\in \mathfrak{G}$ contains $\#e(G)$ edges (with multiplicity), where 
\begin{equation}
\label{eq:defN}
\# e(G) = k_1p_1+\dots +k_{N_J}p_{N_J}+\ell_1q_1+\dots +\ell_{N_K}q_{N_K} =: N,
\end{equation}
which is independent of the element $G\in \mathfrak{G}$ chosen. 

Also, given a fixed element 
\[
 \mathfrak{G}\ni G = \bigsqcup_{\alpha = 1}^{N_J}\bigsqcup_{\beta = 1}^{p_\alpha} \Xi^{\beta,\alpha} \sqcup \bigsqcup_{\alpha' = 1}^{N_K}\bigsqcup_{\beta' = 1}^{q_{\alpha'}} \Theta^{\beta',\alpha'},
\]
 we denote by 
\[
U(G) = \prod_{\alpha = 1}^{N_J} \prod_{\beta=1}^{p_\alpha} U_{(\Xi^{\beta, \alpha}_1)_2}\qquad\text{and}\qquad  V(G) = \prod_{\alpha' = 1}^{N_K} \prod_{\beta'=1}^{q_{\alpha'}} V_{(\Theta^{\beta', \alpha'}_1)_2}.
\]
(Recall that $(\Xi^{\beta, \alpha}_1)_2$ and $(\Theta^{\beta', \alpha'}_1)_2$ denote the starting vertex of the chains $\Xi^{\beta, \alpha}$ and $\Theta^{\beta',\alpha'}$ respectively.) Then, we can rewrite \eqref{eq:Em} in terms of $\mathfrak{G}$ by expanding the products as
\[
\mathcal{E}_m = m^{-\frac{N}{2}}\mathbb{E}\left[\sum_{G\in \mathfrak{G}} \left( \prod_{e\in G} Z_e \right) U(G) V(G)\right].
\]
(Recall \eqref{eq:defN}.) By denoting $\text{mult}_G(e)$ the multiplicity of an edge $e$ in $G$, we can define 
\[
\mathfrak{G}_2 := \{G\in \mathfrak{G} : \text{mult}_G(e) \ge 2\quad\text{for all}\quad e\in G\},
\]
that is, the subset of $\mathfrak{G}$ whose graphs have edges all with multiplicity 2 or higher. By linearity of the expected value, and the fact that all $Z_{ij}$, $U_i$, $V_j$ are independent between them and with average zero, we immediately have that, in fact, we can sum only over $\mathfrak{G}_2$, 
\[
\mathcal{E}_m = m^{-\frac{N}{2}}\mathbb{E}\left[\sum_{G\in \mathfrak{G}_2} \left( \prod_{e\in G} Z_e \right) U(G) V(G)\right].
\]

Let us denote, for any $G\in \mathfrak{G}_2$, $\#v(G)$ the number of different vertices seen by the edges in $G$. In particular, since each edge appears twice for $G\in \mathfrak{G}_2$, we have that 
\begin{equation}
\label{eq:vG}
\#v(G) \le \#v_{\rm end} +\frac{N}{2},
\end{equation}
where we are using that the last $\#v_{\rm end} $ vertices are fixed, that we can add edges (from the end) in such a way that they always see at most one new vertex (since $G$ is connected), and that each edge appears at least twice.

Notice that we have equality in \eqref{eq:vG}   only if each edge in $G$ has multiplicity exactly 2 (otherwise, we would be seeing less vertices than the maximum possible; at some point adding one edge on $G$ would neither contribute to a new vertex nor be a first time repetition): 
\begin{equation}
\label{eq:vG_2}
\#v(G) = \#v_{\rm end} +\frac{N}{2} \quad \Rightarrow \quad \text{for all } e\in G,~~\text{mult}_G(e) = 2. 
\end{equation}

Let us define $\mathfrak{G}_M$ as the subset of graphs in $\mathfrak{G}_2$ that see the maximum number of vertices,
\[
\mathfrak{G}_M := \left\{G\in \mathfrak{G}_2 : \#v (G) = \#v_{\rm end} + \frac{N}{2}\right\},
\]
and let us compute $|\mathfrak{G}_2\setminus \mathfrak{G}_M|$. The elements in $\mathfrak{G}_2\setminus \mathfrak{G}_M$ are all bipartite graphs $G$ between $I_0$ and $I_1$ with $\#v (G)  < \#v_{\rm end} + \frac{N}{2}$ vertices. Since the last  $\#v_{\rm end}$ vertices are fixed, the number of elements in $\mathfrak{G}_2\setminus \mathfrak{G}_M$ will be upper bounded by the number of ways to choose the remaining $\#v (G)-\#v_{\rm end}$ vertices (among $2m$, that is, $(2m-\#v_{\rm end})\cdot (2m-\#v_{\rm end}-1)\cdot \dots\cdot (2m-\#v(G) +1) \le C m^{\#v(G) -\#v_{\rm end}}$). We are also using that, for each configuration of vertices, there is a bounded number of possible graphs with such vertices that is independent of $m$ (but may depend on $N_J$, $N_K$, etc.). In all, since $\#v (G)-\#v_{\rm end} \le \frac{N}{2} - 1$, 
\[
|\mathfrak{G}_2\setminus \mathfrak{G}_M|\le C m^{\frac{N}{2}-1}
\]
for some $C$ independent of $m$. Using that all the elements $Z_{ij}$, $U_i$, $V_j$, have finite moments, we obtain that
\begin{equation}
\label{eq:combined}
\left|\mathcal{E}_m - m^{-\frac{N}{2}}\mathbb{E}\left[\sum_{G\in \mathfrak{G}_M} \left( \prod_{e\in G} Z_e \right) U(G) V(G)\right]\right|\le \frac{C}{m}
\end{equation}
for some $C$ independent of $m$. 

Let now $G\in \mathfrak{G}_M$ be fixed, a graph with maximal number of vertices, \eqref{eq:vG_2}, with $N$ edges each with multiplicity two. Let us count the edges from the vertices to obtain a further characterization of $G$:

The last $\#v_{\rm end}$ vertices are the ending points of the $p+q := p_1+\dots+p_{N_J}+q_1+\dots+q_{N_K}$ chains, and as such, they are connected to at least $p+q$ edges. From the remaining $\frac{N}{2}$ vertices, let us denote by $\#v_E(G)$ the ones that see exactly two edges (which must be the same edge, repeated twice). Then, $\#v_E(G)\le \frac12 (p+q)$. Indeed, since chains have no loops, the same edge cannot be repeated inside a chain, and elements of $v_E(G)$ are necessarily reached by two different chains (and hence they are a starting point for each one). Thus, these starting points see $2\#v_E(G)\le p+q$ edges (counting with multiplicity). 

Finally, the remaining $\#v(G) - \#v_{\rm end} - \#v_E(G) = \frac{N}{2}-\#v_E(G)$ vertices see at least four edges each one, so that the total amount of edges as seen from the vertices (i.e., the sum over vertices of the number of edges seen by each vertex) is: 
\[
\begin{split}
2N = 2\#e(G) & \ge p+q + 2\#v_E(G) +4  (\#v(G) - \#v_{\rm end} - \#v_E(G))\\
& = p+q - 2\#v_E(G) +2N \ge 2N,
\end{split}
\]
where we are also using that each edge is seen from two vertices. In particular, all the previous inequalities are, in fact, equalities, and there are exactly $p+q$ edges connected to $\#v_{\rm end}$, exactly $\frac12 (p+q)$ vertices that are starting points (seeing only two edges each), and all the remaining vertices see four edges (two edges, each with multiplicity two).

At the level of $G\in \mathfrak{G}_M$ this implies that each chain in its definition is repeated exactly identically twice, and that they never share vertices (except for the final ones). In particular, there is an even number of chains ending at each vertex: if $\mathfrak{G}_M \neq \emptyset$ then all $p_1, \dots, p_{N_J}, q_1, \dots, q_{N_K}$ are even. Observe, also, that this implies that 
\[
\mathbb{E}\left[\left( \prod_{e\in G} Z_e \right) U(G) V(G)\right] = 1\qquad\text{for all}\qquad G\in \mathfrak{G}_M,
\]
where we are using $\mathbb{E}[Z_{ij}^2] = \mathbb{E}[U_i^2] = \mathbb{E}[V_j^2] = 1$. 

A short combinatorial argument combined with \eqref{eq:combined} now gives the desired result: we need to count in how many ways we can produce graphs in $G\in \mathfrak{G}_M$ with the definition \eqref{eq:G} in such a way that each chain is repeated identically twice and they never share non-ending vertices. We choose first the chains, which can be done in $m^{\frac{N}{2}}$ ways at leading order (for each chain we choose the previous vertex starting from the end, so there is always $m-r$ possibilities, where $r$ is a bounded number independent of $m$; and we do so for each of the $N$ edges, each of which is repeated twice). For each family of $p_\alpha$ chains ending at $i_\alpha$ we now have multiple ways to produce the same graph $G\in \mathfrak{G}_M$: for each $p_\alpha$ (and $q_{\alpha'}$) we need to count the number of ways in which a family of $p_\alpha$ (and $q_{\alpha'}$) elements can be divided into couples, and then do the same for each $\alpha$ and $\alpha'$. 

In all, given a family of $2n$ elements with $n\in \N$, there are $\frac{(2n)!}{2^n n!}$ ways to split it into couples: there are $(2n)!$ ways to arrange them in a line and we now split them in order into couples. Since we can change the order within each pair, and we can change the order of the pairs, we are actually generating each possible configuration $2^n n!$ times. The number of ways to split $2n$ elements into couples is then $\frac{(2n)!}{2^n n!} = (2n-1)!!$.

Thus, given a fixed graph $G\in \mathfrak{G}_M$, we have $\prod_{\alpha'}^{N_J}(p_{\alpha}-1)!!\prod_{\alpha'}^{N_K}(q_{\alpha'}-1)!!$ ways to produce the same graph with the previous constructions. Combined with \eqref{eq:combined} and the fact that there are $m^{\frac{N}{2}}$ possible configurations (at leading order) we have
\[
\left|\mathcal{E}_m - \mu_{p_1}\dots\mu_{p_{N_J}}\mu_{q_1}\dots \mu_{q_{N_K}}\right| \le \frac{C}{m}
\]
for some $C$ independent of $m$. 
\end{proof}

\subsection{A recursion property}\label{ssec:recursion} We next show a recursion property for  the family \eqref{eq:JKdef} that will be crucial in the following section (recall the notation from subsection~\ref{ssec:notation}). 

\begin{prop}
\label{prop:recurrence}
The random vectors $\bJ_k^m$ and $\bK_k^m$ satisfy,
\[
\begin{array}{rcll}
{m^{-\frac12}} \bZ^m \bJ_k^m & = &\bJ_{k+1}^m+\bJ_{k-1}^m+\bR^m_{ k},&\qquad\text{if $k\in \N$ is even},\\
{m^{-\frac12}} (\bZ^m)^\top \bJ_k^m & = &\bJ_{k+1}^m+\bJ_{k-1}^m+\bR^m_{ k},&\qquad\text{if $k\in \N$ is odd},\\
{m^{-\frac12}} (\bZ^m)^\top \bK_k^m & = &\bK_{k+1}^m+\bK_{k-1}^m+\bS^m_{ k},&\qquad\text{if $k\in \N$ is even},\\
{m^{-\frac12}} \bZ^m \bK_k^m & = &\bK_{k+1}^m+\bK_{k-1}^m+\bS^m_{k},&\qquad\text{if $k\in \N$ is odd},
\end{array}
\]
for some random vectors $\bR_{k}^m$ and $\bS_{k}^m$ with 
\[
\E\left[\|\bR_{k}^m\|^{2p}_{2p}\right] +\E\left[\|\bS_{k}^m\|^{2p}_{2p}\right] \le C_p<+\infty,
\]
for any $p\in \N$, 
and for some $C$ independent of $m$ (but it might depend on $k$ and $p$). 
\end{prop}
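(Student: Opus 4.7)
I focus on the first of the four identities ($k$ even); the others follow by swapping the bipartite roles of $I_0$ and $I_1$, replacing $\bZ^m$ by $(\bZ^m)^\top$, and exchanging $\bU$ with $\bV$. Substituting \eqref{eq:JKdef2}, the starting point is
\[
(m^{-1/2}\bZ^m\bJ_k^m)_i = m^{-(k+1)/2} \sum_{j=1}^m \sum_{\Xi \in \tilde\bXi^m_j(k)} Z_{ij}\Bigl(\prod_{\ell=1}^k Z_{\Xi_\ell}\Bigr) U_{(\Xi_1)_2},
\]
where each summand indexes a length-$(k+1)$ walk ending at $i\in I_0$ whose first $k$ edges form a loopless $k$-chain $\Xi$ and whose last edge is $(i,j)$. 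Since $\Xi$ is loopless, its interior $I_1$-vertices $\{i_1,i_3,\ldots,i_{k-1}\}$ are distinct from $j$, so the edge $(i,j)$ can only coincide with $\Xi_k=(i_k,j)$, and only when $i=i_k$. My plan is to split the sum into three cases depending on where the fixed index $i$ lies among the $I_0$-vertices $\{i_2,i_4,\ldots,i_k\}$ of $\Xi$.

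In \emph{Case A}, where $i \notin \{i_2,\ldots,i_k\}$, the extended walk is a loopless $(k+1)$-chain, and $(j,\Xi)\mapsto(\Xi,(i,j))$ is a bijection onto $\tilde\bXi^m_i(k+1)$, so this case contributes exactly $J_{k+1,i}^m$. In \emph{Case B}, where $i=i_k$, the new edge duplicates $\Xi_k$, producing a squared factor $Z_{ij}^2$, while the residual walk $(\Xi_1,\ldots,\Xi_{k-1})$ lies in $\tilde\bXi^m_i(k-1)$. Setting $S_i:=\sum_{j=1}^m Z_{ij}^2$ and extending the $j$-sum to all of $\{1,\ldots,m\}$ at the cost of a boundary correction $E_{k,i}^B$ coming from $j$ hitting one of the (boundedly-many) $I_1$-vertices of the residual chain, I would rewrite Case~B as
\[
\frac{S_i}{m}\,J_{k-1,i}^m + E_{k,i}^B = J_{k-1,i}^m + \Bigl(\frac{S_i}{m}-1\Bigr)J_{k-1,i}^m + E_{k,i}^B.
\]
Since $S_i/m-1$ is a CLT-normalized sum of $m$ centered i.i.d.\ $\chi^2_1-1$'s, it has $L^{4p}$-norm of order $m^{-1/2}$, and $J_{k-1,i}^m$ has bounded $L^{4p}$-norm by Theorem~\ref{thm:main_convergence}; H\"older's inequality then yields $\mathbb{E}[|(S_i/m-1)J_{k-1,i}^m|^{2p}]=O(m^{-p})$, and $E_{k,i}^B$ is of the same order by direct count.

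In \emph{Case C}, where $i = i_{2s}$ for some $1 \le s < k/2$, the $k+1$ Gaussian factors remain pairwise distinct, so every individual summand has mean zero and the Case~C contribution $R_{k,i}^C$ satisfies $\mathbb{E}[R_{k,i}^C]=0$. To bound higher moments, I would expand $\mathbb{E}[|R_{k,i}^C|^{2p}]$ as a $2p$-fold sum over Case~C configurations and apply the graph-counting scheme from the proof of Theorem~\ref{thm:main_convergence}: only multigraphs in which every edge has multiplicity $\ge 2$ and every $U$-factor appears with even multiplicity contribute, and the leading configurations pair the $2p$ walks into $p$ identical pairs sharing only the endpoint $i$. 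Compared with the Case~A analogue, each Case~C summand has one fewer vertex (since $i$ coincides with an interior $I_0$-vertex), so the total vertex budget drops by $p$, giving $\mathbb{E}[|R_{k,i}^C|^{2p}]=O(m^{-p})$ and $\sum_i\mathbb{E}[|R_{k,i}^C|^{2p}]=O(m^{1-p}) \le C_p$ for $p\ge 1$. Defining $R_{k,i}^m := (S_i/m-1)J_{k-1,i}^m + E_{k,i}^B + R_{k,i}^C$ then delivers the identity with the claimed moment bound, and the statement for $\bS_k^m$ follows from the same argument with $I_0$ and $I_1$ (and $\bU,\bV$) interchanged. \textbf{The main technical obstacle} is the Case~C estimate: it is a zero-mean sum of $O(m^k)$ correlated summands with no edge repetition to exploit, so the required cancellation must be extracted via a careful adaptation of the method-of-moments combinatorics of Theorem~\ref{thm:main_convergence}, the decisive point being that the single vertex identification $i=i_{2s}$ reduces the maximum-vertex budget by exactly one per matched pair.
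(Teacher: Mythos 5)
Your proposal is correct and follows essentially the same route as the paper: expanding $m^{-1/2}\bZ^m\bJ_k^m$ as a sum over appended edges and splitting into the three cases (new vertex, repeated edge, revisited-but-not-repeated), with the first two producing $\bJ_{k+1}^m$ and $\bJ_{k-1}^m$ and the rest controlled by the same method-of-moments vertex-counting used in Theorem~\ref{thm:main_convergence}. The only cosmetic difference is in Case~B: you factor out the full row sum $S_i/m-1$ and invoke H\"older against $J_{k-1,i}^m$, whereas the paper keeps the restricted sum $\sum_{j\notin v_1(\Xi)}(Z_{ij}^2-1)$ and bounds it directly by a moment expansion (together with a separate $|v_1(\Xi)|$-correction); the two decompositions are algebraically equivalent, and both yield the required $O(m^{-1})$ per-entry bound.
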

\begin{proof}
Let us do the first equality, the others follow by analogy. Thus, we assume $k\in \N$ is even and we deal with $\bJ_k^m$. 
\[
\begin{split}
({m^{-\frac12}} \bZ^m \bJ_k^m)_j & = m^{-\frac{k+1}{2}} \sum_{i = 1}^m Z_{j,i} \sum_{\Xi \in {\tilde \bXi}^m_{i}(k)} \left(\prod_{\ell = 1}^k Z_{\Xi_\ell}\right)U_{(\Xi_1)_2}\\
& =m^{-\frac{k+1}{2}} \sum_{\Xi \in {\mathring \bXi}^m_{j}(k+1)} \left(\prod_{\ell = 1}^{k+1} Z_{\Xi_\ell}\right)U_{(\Xi_1)_2},
\end{split}
\]
where we are denoting
\[
{\mathring \bXi}^m_{j}(k+1) := \left\{\Xi\cup\{(j, i)\} : \Xi \in \tilde{\bXi}^m_{i}(k)\quad\text{for some}\quad 1\le i \le m\right\}.
\]
That is, we are taking loopless chains starting in $I_1$ and with length $k$, and adding an extra edge towards $j$ at the end. In particular, we can divide:
\[
{\mathring \bXi}^m_{j}(k+1)  = {\tilde \bXi}^m_{j}(k+1) \cup  {\mathring \bXi}^m_{j,*}(k+1)\cup {\mathring \bXi}^m_{j,r}(k+1),
\]
where 
\[
{\mathring \bXi}^m_{j,*}(k+1) := \left\{\Xi \in {\mathring \bXi}^m_{j}(k+1) \quad\text{such that}\quad \Xi_{k+1} = \Xi_{k}\right\},
\]
namely, the added edge was already part of the chain (and since we are adding it to a loopless chain, it must be the last edge); and 
\[
{\mathring \bXi}^m_{j,r}(k+1) := {\mathring \bXi}^m_{j}(k+1)\setminus \left( {\tilde \bXi}^m_{j}(k+1) \cup  {\mathring \bXi}^m_{j,*}(k+1) \right)
\]
those chains where the extra edge is not adding a new vertex, but is not a repeated edge either. Thus, 
\begin{equation}
\label{eq:join1}
({m^{-\frac12}} \bZ^m \bJ_k^m)_j = J_{k+1, j}^m + A^m_{k+1, j} + B^m_{k+1, j},
\end{equation}
where, if we denote $v_1(\Xi)$ for $\Xi \in {\bXi}^m_{j}(k-1)$ the set of vertices in $\Xi$ from $I_1$,
\[
\begin{split}
A^m_{k+1, j} & = m^{-\frac{k+1}{2}} \sum_{\Xi \in {\mathring \bXi}^m_{j, *}(k+1)} \left(\prod_{\ell = 1}^{k+1} Z_{\Xi_\ell}\right)U_{(\Xi_1)_2}\\
& = m^{-\frac{k+1}{2}}  \sum_{\Xi \in {\tilde \bXi}^m_{j}(k-1)} \sum_{i \notin v_1(\Xi)} Z_{j,i}^2\left(\prod_{\ell = 1}^{k-1} Z_{\Xi_\ell}\right)U_{(\Xi_1)_2},\\
B^m_{k+1,j} & = m^{-\frac{k+1}{2}} \sum_{\Xi \in {\mathring \bXi}^m_{j, r}(k+1)} \left(\prod_{\ell = 1}^{k+1} Z_{\Xi_\ell}\right)U_{(\Xi_1)_2}.
\end{split}
\]

Observe now that, on the one hand, using the same arguments as in Theorem~\ref{thm:main_convergence}, we can directly compute 
\begin{equation}
\label{eq:join2}
\mathbb{E}\left[(B^m_{k+1,j})^{2p}\right] \le \frac{C_p}{m},
\end{equation}
for any $p\in \N$, and for some $C$ independent of $m$. (We are using here that in the sum we are only considering elements that do not see the maximal number of vertices.)

On the other hand, we can rewrite  
\begin{equation}
\label{eq:join3}
A^m_{k+1, j} = J_{k-1, j}^m + D_{k+1, j}^m+{\tilde D}_{k+1, j}^m,
\end{equation}
with 
\[
\begin{split}
D_{k+1, j}^m & = m^{-\frac{k+1}{2}}  \sum_{\Xi \in {\tilde \bXi}^m_{j}(k-1)} \sum_{i \notin v_1(\Xi)} (Z_{j,i}^2-1) \left(\prod_{\ell = 1}^{k-1} Z_{\Xi_\ell}\right)U_{(\Xi_1)_2},\\
\tilde D_{k+1, j}^m & = m^{-\frac{k+1}{2}}  \sum_{\Xi \in {\tilde \bXi}^m_{j}(k-1)} |v_1(\Xi)|\left(\prod_{\ell = 1}^{k-1} Z_{\Xi_\ell}\right)U_{(\Xi_1)_2}.
\end{split}
\]

From the same arguments as in Theorem~\ref{thm:main_convergence} (since $|v_1(\Xi)|$ is bounded independent of $m$) we get on the one hand that 
\begin{equation}
\label{eq:join4}
\mathbb{E}\left[(\tilde D^m_{k+1,j})^{2p}\right] \le \frac{C_p}{m^2}
\end{equation}
for any $p\in \N$, and on the other hand, since $Z_{j,i}^2 - 1$ has average zero and is independent of all the other elements in each term of the sum (since $i\notin v_1(\Xi)$), the same type of reasoning done in Theorem~\ref{thm:main_convergence} also gives 
\begin{equation}
\label{eq:join5}
\mathbb{E}\left[(D^m_{k+1,j})^{2p}\right] \le \frac{C_p}{m}
\end{equation}
for any $p\in \N$. 

In all, joining \eqref{eq:join1}-\eqref{eq:join2}-\eqref{eq:join3}-\eqref{eq:join4}-\eqref{eq:join5},
\[
({m^{-\frac12}} \bZ^m \bJ_k^m)_j = J_{k+1, j}^m +J_{k-1, j}^m + R^m_{k, j},
\]
with $\mathbb{E}\left[ (R^m_{k, j})^{2p} \right] \le \frac{C_p}{m} $ for any $p\in \N$, and for some $C$ independent of $m$. Using the symmetry of the problem, we get the desired result.
\end{proof}

\subsection{Multi-dimensional output}
\label{sec:multid}
More generally, we can take $d\in \N$ i.i.d. copies of $\bU$, denoted $\bU^{(1)},\dots\bU^{(d)}$ (also independent of $\bV$) and define
\begin{equation}
\label{eq:JKdef3}
\bU^{(1\dots d)} := 
\begin{pmatrix}
    \bU^{(1)}        &      \dots&\bU^{(d)}
\end{pmatrix}
= 
\begin{pmatrix}
    U_1^{(1)}   & \dots &   U_1^{(d)}     \\ 
    U_2^{(1)}    & \dots & U_2^{(d)}      \\
    \vdots       & \ddots & \vdots
\end{pmatrix}.
\end{equation}

Similarly, we denote $\bU^{(\zeta),m}\in \R^m$ for $1\le \zeta \le d$, and 
\begin{equation}
\label{eq:JKdef4}
\bJ^{(\zeta),m}_{k} := \begin{pmatrix}
    J^{(\zeta),m}_{k,1}        \\
    J^{(\zeta),m}_{k,2}        \\
    \vdots     \\
    J^{(\zeta),m}_{k,m}  
\end{pmatrix}\qquad\text{with}\quad 1\le \zeta \le d,
\end{equation}
where
\begin{equation}
\label{eq:JKdef5}
J^{(\zeta),m}_{k,i}  := \frac{1}{m^{k/2}}\sum_{\Xi \in {\tilde \bXi}^m_{i}(k)} \left(\prod_{\ell = 1}^k Z_{\Xi_\ell}\right)U^{(\zeta)}_{(\Xi_1)_2},\qquad\text{with}\quad 1\le \zeta \le d
\end{equation}
(cf. \eqref{eq:JKdef}). Finally, we also consider, for $k\in \N$, families of independent, identically distributed (infinite) random vectors $\{\bJ_k^{(1)}\}_{k\in \N}$,...,$\{\bJ_k^{(d)}\}_{k\in \N}$ and $\{\bK_k\}_{k\in \N}$ 
\begin{equation}
\label{eq:Jddef}
\bJ^{(1)}_{k} := \begin{pmatrix}
    J^{(1)}_{k,1}        \\
    J^{(1)}_{k,2}        \\
    \vdots    
\end{pmatrix},~~\dots~~,
\bJ^{(d)}_{k} := \begin{pmatrix}
    J^{(d)}_{k,1}        \\
    J^{(d)}_{k,2}        \\
    \vdots    
\end{pmatrix},\qquad\text{and}\qquad \bK_{k} := \begin{pmatrix}
    K_{k,1}        \\
    K_{k,2}        \\
    \vdots      
\end{pmatrix}.
\end{equation}
with $J^{(1)}_{k, i},\dots,J^{(d)}_{k, i}, K_{k, i}\sim \mathcal{N}(0,1)$ and all independent between them. 

Then, Theorem~\ref{thm:main_convergence} also holds for this family as well. That is:
\begin{prop}
\label{prop:main_convergence}
The family of random vectors $\{(\bJ_k^{(1),m}, \dots, \bJ_k^{(d),m},\bK_k^m)\}_{k\in \N}$ defined by \eqref{eq:Jddef} converges, in distribution, to the family $\{(\bJ^{(1)}_k,\dots,\bJ^{(d)}_k, \bK_k)\}_{k\in \N}$ (see Definition~\ref{defi:inf_dist_conv}).
\end{prop}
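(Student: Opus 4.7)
The plan is to extend the method of moments argument used for Theorem~\ref{thm:main_convergence} to accommodate the additional label $\zeta \in \{1,\dots,d\}$. Fix finite collections of multi-indices $(\zeta_s, k_s, i_s)$ and $(\ell_t, j_t)$ with positive exponents $p_s, q_t$, assume all the involved variables are distinct, and consider the mixed moment
\begin{equation*}
\mathcal{E}_m := \E\left[\prod_s \bigl(J^{(\zeta_s),m}_{k_s, i_s}\bigr)^{p_s} \prod_t \bigl(K^m_{\ell_t, j_t}\bigr)^{q_t}\right].
\end{equation*}
Expanding every factor via \eqref{eq:JKdef5} produces a sum indexed by the family $\mathfrak{G}$ of bipartite graphs obtained as disjoint unions of loopless chains, exactly as in \eqref{eq:G}; the sole new feature is that each $J$-chain now carries the label $\zeta_s$ identifying which independent family $\bU^{(\zeta_s)}$ is evaluated at its starting vertex in $I_1$.

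I then reuse verbatim the chain of reductions from the proof of Theorem~\ref{thm:main_convergence}: edges whose $Z$-factor has odd multiplicity vanish in expectation, so the sum restricts to the subset $\mathfrak{G}_2$; the geometric bound $\# v(G) \le \# v_{\rm end} + N/2$ and the count $|\mathfrak{G}_2 \setminus \mathfrak{G}_M| \le C m^{N/2 - 1}$ continue to hold, since they depend only on the underlying graph structure and not on any label decoration. The crucial new ingredient lies at the starting vertices: if two chains with labels $\zeta \neq \zeta'$ share a starting vertex $v \in I_1$, the corresponding term contains $U^{(\zeta)}_v U^{(\zeta')}_v$, whose expectation vanishes by independence of $\bU^{(\zeta)}$ and $\bU^{(\zeta')}$. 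Similarly, a $J$-chain and a $K$-chain cannot pair at a vertex since $\bU^{(\zeta)}$ is independent of $\bV$. Therefore, in every $G \in \mathfrak{G}_M$ contributing to the leading order, each chain must be paired identically with another chain carrying the same $\zeta$-label (matching both the starting vertex and the entire edge sequence), and $K$-chains pair only with $K$-chains.

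Consequently, the pairing decomposes into independent groupings indexed by the $d+1$ families (one per $\zeta \in \{1,\dots,d\}$ and one for the $K$-variables), so the leading-order count factorizes across them. Each exponent grouped inside a single family must therefore be even, and the same $(2n-1)!!$ combinatorial argument as in Theorem~\ref{thm:main_convergence} gives
\begin{equation*}
\mathcal{E}_m = \prod_s \mu_{p_s} \prod_t \mu_{q_t} + O(1/m),
\end{equation*}
which matches exactly the moments of the product Gaussian law described by \eqref{eq:Jddef}. The principal obstacle is purely one of bookkeeping, namely verifying that the $\zeta$-labels force the pairing to split across the independent families $\bU^{(\zeta)}$; once this is checked, no genuinely new analytic estimate is required beyond those already used to prove Theorem~\ref{thm:main_convergence}, and the convergence in distribution follows from the method of moments.
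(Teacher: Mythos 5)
Your proof follows exactly the paper's route: both reduce to the method-of-moments argument of Theorem~\ref{thm:main_convergence} and observe that the only genuinely new configurations in the maximal set $\mathfrak{G}_M$ are those where two chains with different $\zeta$-labels (or a $J$- and a $K$-chain) pair, and that such pairings vanish in expectation because $\E[U^{(\zeta)}_s U^{(\zeta')}_s]=0$ for $\zeta\neq\zeta'$ (and $\E[U^{(\zeta)}_s V_s]=0$). Your write-up spells out the factorization of the pairings into the $d+1$ families somewhat more explicitly than the paper's terse proof, but there is no substantive difference in approach.
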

\begin{proof}
We can follow the same ideas as in the proof of Theorem~\ref{thm:main_convergence}, by interpreting \eqref{eq:Em} in this context. We get again that each chain must be repeated twice, and we are only interested in configurations that see a maximal amount of vertices. Now, however, chains can be repeated coming from different elements of the family, namely, $J_{k, i}^{(\zeta),m}$ and $J_{k, i}^{(\zeta'),m}$ might share a chain for $\zeta \neq \zeta'$, and still see the maximum number of vertices. Those repetitions, however, do not contribute to the expected value \eqref{eq:Em}, since they contain a single term $U_s^{(\zeta)} U_s^{(\zeta')}$ for some $1\le s\le m$, and $\E[U_s^{(\zeta)} U_s^{(\zeta')}] = 0$ for $\zeta \neq \zeta'$. 
\end{proof}

We also see the recurrence in Proposition~\ref{prop:recurrence}:
\begin{prop}
\label{prop:recurrence_d}
The random vectors $\bJ_k^{(\zeta),m}$ satisfy,
\[
\begin{array}{rcll}
{m^{-\frac12}} \bZ^m \bJ_k^{(\zeta),m} & = &\bJ_{k+1}^{(\zeta),m}+\bJ_{k-1}^{(\zeta),m}+\bR_{k}^{(\zeta), m},&\qquad\text{if $k\in \N$ is even},\\
{m^{-\frac12}} (\bZ^m)^\top \bJ_k^{(\zeta),m} & = &\bJ_{k+1}^{(\zeta),m}+\bJ_{k-1}^{(\zeta),m}+\bR_{k}^{(\zeta), m},&\qquad\text{if $k\in \N$ is odd},
\end{array}
\]
for some random vectors $\bR_{ k}^{(\zeta), m}$ with 
\[
\E\left[\|\bR_{k}^{(\zeta), m}\|^{2p}_{2p}\right] \le C_p<+\infty,
\]
for any $p\in \N$, and 
for any $1\le \zeta \le m$, and for some $C$ independent of $m$ (but it might depend on $k$ and $p$). 
\end{prop}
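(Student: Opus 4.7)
The plan is to repeat the proof of Proposition~\ref{prop:recurrence} almost verbatim, observing that the combinatorial structure only depends on the matrix $\bZ^m$ and on one scalar copy of the Gaussian initialization vector. Since for each fixed $\zeta \in \{1,\dots,d\}$ the vector $\bJ_k^{(\zeta),m}$ is built from $\bZ^m$ and $\bU^{(\zeta)}$ by exactly the same formula \eqref{eq:JKdef5} as $\bJ_k^m$ was built from $\bZ^m$ and $\bU$ in \eqref{eq:JKdef2}, and since $\bU^{(\zeta)}$ has the same distribution and independence properties as $\bU$, the statement for a single $\zeta$ is formally a restatement of the $\bJ$-case of Proposition~\ref{prop:recurrence}.

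More concretely, first I would fix $\zeta$ and $k$ (say $k$ even; the odd case and the use of $(\bZ^m)^\top$ follow by reflection of the bipartite chain exactly as in Proposition~\ref{prop:recurrence}) and expand
\[
(m^{-1/2}\bZ^m \bJ_k^{(\zeta),m})_j \;=\; m^{-(k+1)/2}\sum_{\Xi \in \mathring{\bXi}^m_j(k+1)}\Big(\prod_{\ell=1}^{k+1} Z_{\Xi_\ell}\Big)U^{(\zeta)}_{(\Xi_1)_2},
\]
where $\mathring{\bXi}^m_j(k+1)$ is the set of chains obtained by appending the edge $(j,i)$ to a chain in $\tilde{\bXi}^m_i(k)$. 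Then I would split this set as in the proof of Proposition~\ref{prop:recurrence} into the loopless chains of length $k+1$ (contributing $J^{(\zeta),m}_{k+1,j}$), the chains $\mathring{\bXi}^m_{j,*}(k+1)$ in which the appended edge duplicates the last edge of the shorter chain, and the remainder $\mathring{\bXi}^m_{j,r}(k+1)$.

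Next, as in the scalar case, the duplicated-edge term splits into $J^{(\zeta),m}_{k-1,j}$ plus a centered contribution involving $Z_{j,i}^2-1$ and a lower-order term carrying a factor $|v_1(\Xi)|/m$. For the moment bounds, I would invoke the same graph-counting argument used in the proof of Theorem~\ref{thm:main_convergence}: the $2p$-th moment of any of these residual sums is expanded as a sum over bipartite multi-graphs built from $2p$ copies of the underlying chain ensemble, only the edge-pairings that achieve the maximum number of vertices contribute at leading order, and the constraints that forbid maximality (unpaired added edge, explicit $Z^2-1$ factor, or the $|v_1(\Xi)|/m$ prefactor) each cost at least one factor $m^{-1}$. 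Combining all three pieces yields
\[
m^{-1/2}\bZ^m \bJ_k^{(\zeta),m} \;=\; \bJ_{k+1}^{(\zeta),m} + \bJ_{k-1}^{(\zeta),m} + \bR_k^{(\zeta),m},\qquad \E\bigl[\|\bR_k^{(\zeta),m}\|_{2p}^{2p}\bigr] \le C_p,
\]
with $C_p$ depending on $k,p$ but not on $m$ or $\zeta$.

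The only conceptual point requiring a brief remark is that the statement concerns a fixed $\zeta$, so the cross-correlations between different copies $\bU^{(\zeta)}$ and $\bU^{(\zeta')}$ never enter: all moment computations involve a single scalar Gaussian sequence (namely $\bU^{(\zeta)}$) together with the shared matrix $\bZ^m$, which is precisely the scalar setting of Proposition~\ref{prop:recurrence}. Hence there is no genuine new difficulty; the main (minor) obstacle is mostly notational, namely carrying the superscript $(\zeta)$ through the four cases and verifying that the constants $C_p$ produced by the combinatorial bound can be taken uniform in $\zeta$, which is clear because $\bU^{(\zeta)}$ has the same subgaussian moments as $\bU$ for every $\zeta$.
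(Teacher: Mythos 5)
Your proposal is correct and matches the paper's approach, which simply observes that Proposition~\ref{prop:recurrence_d} follows by applying Proposition~\ref{prop:recurrence} to each fixed $\zeta\in\{1,\dots,d\}$; your more detailed unwinding of the chain decomposition is a faithful expansion of the same one-line reduction.
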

\begin{proof}
Follows by Proposition~\ref{prop:recurrence} applied to each $\zeta \in \{1,\dots,d\}$. 
\end{proof}

For notational convenience, we will denote 
\begin{equation}
\label{eq:denote_matrix}
\bJ_k^m = (\bJ_k^{(1),m}, \dots, \bJ_k^{(d),m})\in \R^{m\times d}. 
\end{equation}

\begin{prop}[Almost-Orthonormality property]
\label{prop:orthonormality}
The family of random vectors $\{(\bJ_k^{m}, \bK_k^m)\}_{k\in \N}$ defined by \eqref{eq:Jddef}-\eqref{eq:denote_matrix} satisfies
\[
\E\left[\left\|\bO_{JJ}^m(k_1, k_2)\right\|^{2p}_{2p}+\left\|\bO_{JK}^m(k_1, k_2)\right\|^{2p}_{2p} + \left\|O^m_{KK}(k_1, k_2) \right\|^{2p}_{2p}\right] \le \frac{C_p}{m},
\]
for any $p\in \N$, and 
for some $C$ depending only on $\max\{k_1, k_2\}$, $d$, and $p$, where\footnote{Here, $\delta_{k_1, k_2}$ is the Kronecker delta: $\delta_{k_1, k_2}=1$ if $k_1=k_2$, and $\delta_{k_1, k_2}=0$ if $k_1\neq k_2$.}
\[
\begin{split}
\R^{d\times d}\ni \bO_{JJ}^m(k_1, k_2) & = \frac{1}{m}(\bJ_{k_1}^{m})^\top  \bJ_{k_2}^{m} -\delta_{k_1, k_2}{\rm Id}_d\\
\R^{d\times 1}\ni \bO_{JK}^m(k_1, k_2) &= \frac{1}{m}(\bJ_{k_1}^{m})^\top \bK_{k_2}^{m} \\
\R\ni O_{KK}^m(k_1, k_2)& = \frac{1}{m}\bK_{k_1}^{m} \cdot \bK_{k_2}^{m} - \delta_{k_1, k_2},
\end{split}
\]
for all $k_1, k_2\in \{1,\dots, m\}$.
\end{prop}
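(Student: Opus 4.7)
The plan is to extend the method of moments developed in Theorem~\ref{thm:main_convergence} and Proposition~\ref{prop:recurrence} to pairs of chains sharing an endpoint. I focus on the $\bO_{JJ}^m$ estimate in detail, since the $\bO_{JK}^m$ and $O_{KK}^m$ bounds follow by the same argument with only minor modifications --- indeed, for $\bO_{JK}^m$ the independence of the families $(U^{(\zeta)}_i)$ and $(V_j)$ makes the ``diagonal'' contribution to the expectation automatically vanish, which explains why no subtraction is present in the definition of $\bO_{JK}^m$.

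First I would expand the $(\zeta_1,\zeta_2)$-entry of $\bO_{JJ}^m(k_1,k_2)$ via \eqref{eq:JKdef2} and \eqref{eq:JKdef5}, obtaining
\[
(\bO_{JJ}^m(k_1,k_2))_{\zeta_1\zeta_2} = \frac{1}{m^{1+(k_1+k_2)/2}} \sum_{i=1}^m \sum_{\substack{\Xi\in\tilde\bXi_i^m(k_1)\\ \Xi'\in\tilde\bXi_i^m(k_2)}} \Bigl(\prod_\ell Z_{\Xi_\ell}\Bigr)\Bigl(\prod_{\ell'} Z_{\Xi'_{\ell'}}\Bigr) U^{(\zeta_1)}_{(\Xi_1)_2} U^{(\zeta_2)}_{(\Xi'_1)_2}\; - \;\delta_{k_1,k_2}\delta_{\zeta_1,\zeta_2}.
\]
Since $\|\cdot\|_{2p}^{2p}$ is a sum of at most $d^2$ such scalar entries, it suffices to bound each $\E\bigl[|(\bO_{JJ}^m)_{\zeta_1\zeta_2}|^{2p}\bigr]$ by $C_p/m$ with $C_p$ depending only on $\max\{k_1,k_2\}$, $d$, and $p$.

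Next I would raise this expression to the $2p$-th power and take expectation. Exactly as in the proof of Theorem~\ref{thm:main_convergence}, this produces a sum over bipartite multi-graphs $G$ on $I_0\sqcup I_1$ formed by gluing $2p$ ``pairs of chains'' (the $s$-th pair sharing a free common endpoint $i^{(s)}\in\{1,\dots,m\}$). Independence of the $Z_{ij}$, of the $U^{(\zeta)}_i$, and across the different $\zeta$-families forces every contributing $G$ to satisfy: each edge has multiplicity $\ge 2$ and the $U$-labels pair up with matching $\zeta$-indices. The vertex-counting argument from that proof then shows that graphs visiting $v$ distinct vertices contribute at order $m^{v-v_{\max}}$ with respect to the normalization, where $v_{\max}$ is the maximum possible vertex count.

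The decisive structural observation is that the \emph{maximizing} graphs are precisely those in which, within each pair $s\in\{1,\dots,2p\}$, the two chains $\Xi^{(s)}$ and $\Xi'^{(s)}$ coincide identically (which forces $k_1=k_2$ and, to get a non-vanishing $U$-expectation, $\zeta_1=\zeta_2$), and these $2p$ identical ``doubled pairs'' are then further matched two-by-two so as to pair up the remaining $U^{(\zeta)}$-labels. A short combinatorial count, analogous to the one concluding the proof of Theorem~\ref{thm:main_convergence}, shows that the sum of these maximizing contributions equals $\delta_{k_1,k_2}\delta_{\zeta_1,\zeta_2}\cdot(2p-1)!! + O(1/m)$, and that this is \emph{exactly} the contribution produced by the subtracted constant $\delta_{k_1,k_2}\delta_{\zeta_1,\zeta_2}$ after binomial expansion of the $(X-\delta)^{2p}$ power. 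Hence the leading contributions cancel, leaving only graphs visiting at most $v_{\max}-1$ vertices, which yields the desired $C_p/m$ bound. I expect the main obstacle to be precisely this last bookkeeping step: verifying that the leading cancellation is \emph{exact} across every layer of the binomial expansion, and confirming that every surviving configuration indeed loses at least one vertex after cancellation, thereby producing the announced $1/m$ gain uniformly in $\zeta_1,\zeta_2$.
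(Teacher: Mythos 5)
Your underlying mechanism (graph expansion, method of moments, vertex counting, cancellation against the subtracted $\delta$-term) is the same one driving the paper's proof, but the paper's argument is substantially shorter because it exploits exchangeability and does \emph{not} redo the chain combinatorics for the quadratic form. Concretely: the paper expands $\E\bigl[\bigl(\frac{1}{m}\bK_k^m\cdot\bK_k^m-1\bigr)^{2p}\bigr]$ as a sum over index tuples $(i_1,\dots,i_{2p})$, uses exchangeability of the coordinates $K^m_{k,i}$ to collapse the $m^{2p}$ terms into a bounded number of joint moments $\E\bigl[(K^m_{k,i_1})^{a_1}\cdots(K^m_{k,i_r})^{a_r}\bigr]$ weighted by explicit rational functions of $m$, and then invokes the $O(1/m)$ moment rates \emph{already established} in the proof of Theorem~\ref{thm:main_convergence} (each joint moment equals the corresponding Gaussian product moment up to $O(1/m)$). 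The cancellation is then simply the binomial identity $(1-1)^{2p}=0$. Redoing the graph counting from scratch, as you propose, is a valid route, but reproves part of Theorem~\ref{thm:main_convergence} and invites bookkeeping errors.

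Indeed there is one: the claim that the maximizing-graph contribution equals $\delta_{k_1,k_2}\delta_{\zeta_1,\zeta_2}\cdot(2p-1)!!+O(1/m)$ is not correct. Set $X:=\frac{1}{m}\langle\bJ_{k_1}^{(\zeta_1),m},\bJ_{k_2}^{(\zeta_2),m}\rangle$. For $\E[X^\ell]$, the maximal-vertex configurations are exactly those where the $\ell$ shared endpoints $i^{(1)},\dots,i^{(\ell)}$ are all distinct and each pair self-coincides, $\Xi^{(s)}=\Xi'^{(s)}$; that forces $k_1=k_2$, $\zeta_1=\zeta_2$, and yields \emph{one} configuration per choice of vertex set, so the leading contribution to $\E[X^\ell]$ is $(\delta_{k_1,k_2}\delta_{\zeta_1,\zeta_2})^\ell$, not $(2p-1)!!$. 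Matching two chains across distinct pairs $s\neq s'$ forces $i^{(s)}=i^{(s')}$, which costs a vertex and is therefore subleading; the $(2p-1)!!$ you expect only emerges at order $m^{-p}$, consistent with $X-\delta$ being a CLT-scale fluctuation of size $\sim m^{-1/2}$ (so the stated $C_p/m$ bound is actually quite loose for $p>1$). The correct cancellation is the elementary identity $\sum_{\ell=0}^{2p}\binom{2p}{\ell}(-\delta_{k_1,k_2}\delta_{\zeta_1,\zeta_2})^{2p-\ell}(\delta_{k_1,k_2}\delta_{\zeta_1,\zeta_2})^\ell=0$. With this corrected accounting your argument does go through, but as written the identification of what cancels against what is wrong --- and you correctly flagged this step as the fragile one.
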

\begin{proof}
Let us show 
\[
\E\left[\left(\frac{1}{m}\bK_{k}^{m}\cdot \bK_{k}^{m}-1\right)^2\right]\le \frac{C}{m},
\]
and the rest follow by analogy. We develop the square to obtain
\[
\begin{split}
\E\left[\left(\frac{1}{m}\bK_{k}^{m}\cdot \bK_{k}^{m}-1\right)^2\right] & = \frac{1}{m^2}\sum_{i,j = 1}^m \E\left[(K^m_{k,i})^2 (K^m_{k,j})^2\right]+1-\frac{2}{m}\sum_{i = 1}^m \E\left[(K^m_{k,i})^2\right]\\
& = \frac{m(m-1)}{m^2}\left(\E\left[(K^m_{k,1})^2\right]\right)^2 + \frac{1}{m}\E\left[(K^m_{k,1})^4\right]\\*
& \qquad +1-2\E\left[(K^m_{k,1})^2\right],
\end{split}
\]
where we are using the symmetry in the definition of $K_{k, i}$. 

From the proof of Theorem~\ref{thm:main_convergence} we have that if $k > 0$
\[
\left|\E\left[(K^m_{k,1})^2\right]-1\right|  \le \frac{C}{m},\qquad\text{and}\qquad
\left|\E\left[(K^m_{k,1})^4\right]-3\right|  \le \frac{C}{m},
\]
from which the first result now follows. In general, again using the proof of Theorem~\ref{thm:main_convergence} we have
 \[
\E\left[\left(\frac{1}{m}\bK_{k}^{m}\cdot \bK_{k}^{m}-1\right)^{2p}\right]\le \frac{C_p}{m},
\]
for any $p\in \N$, which gives the desired result.
\end{proof}

\section{Proof of the main result}\label{sec:main-proof}

Let us now proceed with the proof of our main result. Before doing so, we show an intermediary lemma on the possible growth of exchangeable vectors after multiplication by a random matrix. 

For that, we need the following result on random matrices with Gaussian entries, which can be found, for example, in \cite[Theorem 4.4.5]{vershynin2018high}.

\begin{thm}
\label{thm:rand_mat}
Let $\A$ be a random $m\times m$ matrix with subgaussian independent entries with mean zero. Then there exists a universal constant $C > 0$ such that, for any $t > 0$, 
\[
\|\A\|_M := \sup_{x\in \mathbb{S}^{m-1}} \langle \A x, x\rangle \le C K (\sqrt{m}+t)
\]
with probability at least $1-2e^{-t^2}$. Here $K = \max_{i,j}\|A_{i,j}\|_{\psi_2}$, where $\|X\|_{\psi_2} =\inf\left\{t > 0 : \E\left(e^{{X^2}/{t^2}}\right)\le 2\right\}$. 
\end{thm}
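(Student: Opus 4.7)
The proof follows the classical $\epsilon$-net plus concentration strategy for spectral norms of random matrices with independent subgaussian entries. Since the choice $y=x$ in a bilinear form gives $\|\A\|_M \le \sup_{x,y\in \mathbb{S}^{m-1}} |\langle \A x, y\rangle|$, it suffices to bound this bilinear supremum (which is slightly more convenient to work with since the form becomes linear in each argument). I would first construct a $(1/4)$-net $\mathcal{N} \subset \mathbb{S}^{m-1}$ with $|\mathcal{N}|\le 9^m$ by the standard volumetric argument. A routine perturbation step (decomposing $x = x_0 + (x-x_0)$ and $y = y_0 + (y-y_0)$ with $x_0, y_0 \in \mathcal{N}$ and $\|x-x_0\|_2, \|y-y_0\|_2 \le 1/4$) then gives the reduction $\sup_{x,y\in\mathbb{S}^{m-1}}|\langle \A x, y\rangle| \le 2 \sup_{x,y\in\mathcal{N}}|\langle \A x, y\rangle|$.

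For each fixed pair $x,y\in \mathcal{N}$, the bilinear form $\langle \A x, y\rangle =\sum_{i,j} A_{ij} x_i y_j$ is a weighted sum of independent mean-zero subgaussian random variables. Since weighted sums of independent $K$-subgaussians are again subgaussian with $\psi_2$-norm squared bounded by a universal constant times $K^2$ times the sum of squares of the coefficients, one obtains $\|\langle \A x, y\rangle\|_{\psi_2}^2 \le C K^2 \sum_{i,j} x_i^2 y_j^2 = CK^2 \|x\|_2^2 \|y\|_2^2 = CK^2$. By the very definition of the $\psi_2$-norm, this yields the tail estimate $\PP(|\langle \A x,y \rangle|>s)\le 2\exp(-cs^2/K^2)$ for all $s>0$ and universal constants $c,C>0$.

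A union bound over $\mathcal{N}\times\mathcal{N}$, which has cardinality at most $81^m = e^{m\log 81}$, combined with the tail estimate applied to $s = C_0 K(\sqrt{m}+t)$, yields $\PP(\sup_{x,y\in\mathcal{N}}|\langle \A x, y\rangle|>C_0 K(\sqrt{m}+t)) \le 2 \cdot 81^m \exp(-c C_0^2(m+t^2)/2)$. Choosing the absolute constant $C_0$ large enough so that $cC_0^2/2 - \log 81 \geq 1$ absorbs the entropy factor and produces the claimed bound $\le 2e^{-t^2}$, after which combining with the net approximation step completes the argument. The main (minor) technical point is ensuring that the exponential decay coming from the subgaussian concentration dominates the $81^m$ cardinality of the product net, which is exactly the mechanism producing the factor $\sqrt{m}$ in the final inequality. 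Note that the whole proof uses only the subgaussian assumption and independence, so no Gaussianity is required; this is consistent with the universality discussion in the paper.
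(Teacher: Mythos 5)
Your proof is correct and follows the standard $\eps$-net plus subgaussian Hoeffding plus union bound strategy, which is exactly the argument in the cited source \cite[Theorem 4.4.5]{vershynin2018high}; the paper itself does not reprove this theorem but merely quotes it. The only cosmetic points are a harmless index transposition in writing $\langle \A x, y\rangle=\sum_{i,j}A_{ij}x_j y_i$, and the constant-chasing at the end should be stated as requiring $cC_0^2\ge \log 81$ (to absorb the $81^m$ entropy factor via $(\sqrt m + t)^2\ge m$) together with $cC_0^2\ge 1$ (to get the $e^{-t^2}$ tail via $(\sqrt m + t)^2\ge t^2$); both are satisfied for $C_0$ large, so the argument is sound.
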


Thanks to Theorem~\ref{thm:rand_mat} we can prove the following, where we recall that $\bZ$ denotes a random matrix with i.i.d. entries of $\mathcal{N}(0, 1)$ (in this case, of size $m\times m$).

\begin{lem}
\label{lem:lemtobeapplied}
Suppose that $\kU \in \R^{m\times d}$ is a random exchangeable array that satisfies 
\[
\E\left[\|\kU\|^2\right] \le C_\circ m^\alpha,\qquad\text{and}\qquad 
\E\left[\|\kU\|^\varrho\right] \le C_\circ m^\beta,
\]
for some $\varrho > 2$, $C_\circ \ge 1$, and some $\alpha, \beta > 0$. We define 
\[
\kU' := \frac{1}{\sqrt{m}} \bZ\kU. 
\]
Then, if we let $\delta > 0$ such that $\varrho > 2+\delta$, we have
\[\begin{split}
&\E\left[\|\kU'\|^2\right]  \le C_\circ C_{\varrho} m^\alpha,\qquad\text{and}\qquad \E\left[\|\kU'\|^{\varrho-\delta}\right]  \le C^{\frac{\rho-\delta}{\rho}}_\circ C_{\varrho, \delta} m^{\beta \frac{\varrho-\delta}{\varrho}},
\end{split}
\]
for some constants $C_\varrho,C_{\varrho, \delta}>0$ independent of $m$, but that might depend on $\varrho$, $\alpha$, and $\beta$ (and also on $\delta$ in the case of $C_{\varrho, \delta}$).
\end{lem}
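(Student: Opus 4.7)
The strategy is to bound the Frobenius norm by the operator norm, $\|\bZ \kU\| \le \|\bZ\|_{\mathrm{op}} \|\kU\|$, and split the expectation according to whether $\|\bZ\|_{\mathrm{op}}$ attains its typical size $O(\sqrt m)$. By Theorem~\ref{thm:rand_mat}, there exist constants $C_0, c_0 > 0$ (depending only on the subgaussian constant of the entries of $\bZ$) such that the event $E := \{\|\bZ\|_{\mathrm{op}} \le C_0 \sqrt m\}$ satisfies $\mathbb{P}(E^c) \le 2 e^{-c_0 m}$, and integrating the same tail yields $\E[\|\bZ\|_{\mathrm{op}}^{2p}] \le C_p m^p$ together with the key exponential estimate $\E[\|\bZ\|_{\mathrm{op}}^{2p} \1_{E^c}] \le C_p m^p e^{-c_0 m/2}$ for every $p \ge 1$ (via Cauchy--Schwarz).

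\textbf{First bound.} Starting from $\|\kU'\|^2 \le m^{-1} \|\bZ\|_{\mathrm{op}}^2 \|\kU\|^2$, on $E$ we have $\|\kU'\|^2 \le C_0^2 \|\kU\|^2$, so $\E[\|\kU'\|^2 \1_E] \le C_0^2 C_\circ m^\alpha$. On $E^c$, H\"older's inequality with conjugate exponents $r = \varrho/(\varrho-2)$, $r' = \varrho/2$ gives
\[
\E[\|\kU'\|^2 \1_{E^c}] \le m^{-1} \bigl(\E[\|\bZ\|_{\mathrm{op}}^{2r} \1_{E^c}]\bigr)^{1/r} \bigl(\E[\|\kU\|^\varrho]\bigr)^{2/\varrho},
\]
whose first factor is exponentially small in $m$ and whose second is at most $C_\circ^{2/\varrho} m^{2\beta/\varrho}$. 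The full $E^c$-contribution is therefore bounded uniformly in $m$, which is absorbed into $C_\varrho C_\circ m^\alpha$ using $\alpha > 0$, $C_\circ \ge 1$, $m \ge 1$.

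\textbf{Second bound.} The same decomposition with the exponent $\varrho - \delta$ gives, on $E$, using Jensen's inequality for the concave map $t \mapsto t^{(\varrho-\delta)/\varrho}$,
\[
\E[\|\kU'\|^{\varrho-\delta} \1_E] \le C_0^{\varrho-\delta} \E[\|\kU\|^{\varrho-\delta}] \le C_0^{\varrho-\delta} \bigl(\E[\|\kU\|^\varrho]\bigr)^{\frac{\varrho-\delta}{\varrho}} \le C_0^{\varrho-\delta} C_\circ^{\frac{\varrho-\delta}{\varrho}} m^{\beta\frac{\varrho-\delta}{\varrho}}.
\]
On $E^c$, H\"older with conjugate exponents $r = \varrho/\delta$, $r' = \varrho/(\varrho-\delta)$ yields
\[
\E[\|\kU'\|^{\varrho-\delta} \1_{E^c}] \le m^{-(\varrho-\delta)/2} \bigl(\E[\|\bZ\|_{\mathrm{op}}^{(\varrho-\delta)\varrho/\delta} \1_{E^c}]\bigr)^{\delta/\varrho} \bigl(C_\circ m^\beta\bigr)^{(\varrho-\delta)/\varrho},
\]
whose first factor again decays exponentially in $m$, so this contribution is dominated by the one on $E$.

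\textbf{Main difficulty.} The only genuine point is the loss of $\delta$ in the exponent: one might hope to prove $\E[\|\kU'\|^\varrho] \lesssim C_\circ m^\beta$ directly, but H\"older's inequality on $E^c$ requires a strictly higher moment of $\|\kU\|$ than the one being estimated, and we only control up to the $\varrho$-th moment. Sacrificing a small $\delta$ leaves enough room to apply H\"older with exponents $\varrho/\delta$ and $\varrho/(\varrho-\delta)$, at the price of constants that blow up as $\delta \to 0$ --- which is precisely what the parameter $\delta$ in the statement records. Note that neither the exchangeability of $\kU$ nor any independence between $\bZ$ and $\kU$ plays a role here; the argument is a deterministic norm inequality combined with the moment hypotheses and the operator-norm tail from Theorem~\ref{thm:rand_mat}.
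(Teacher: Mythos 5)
Your proof is correct, and it takes a genuinely different route than the paper's. The paper uses a dyadic decomposition: it introduces events $\Omega_i = \{(i+1)\|\kU\|^2 \ge \|\kU'\|^2 \ge i\|\kU\|^2\}$, bounds $\PP(\Omega_i) \le Ce^{-cim}$ by the operator-norm tail of Theorem~\ref{thm:rand_mat}, and then sums the series $\sum_i (i+1)(\dots)e^{-cim(\cdot)}$ after applying H\"older on each $\Omega_i$ with $i \ge 1$. You instead perform a single binary split into the event $E=\{\|\bZ\|_{\mathrm{op}} \le C_0\sqrt m\}$ (where the operator-norm bound is applied deterministically) and its complement (where one application of H\"older absorbs the exponentially small tail). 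Both proofs draw on exactly the same two ingredients --- the subgaussian operator-norm tail and H\"older to trade a higher moment of $\kU$ against the smallness of the bad event, which is why both lose a small $\delta$ in the exponent --- and your diagnosis of where that loss comes from matches the paper's mechanism. Your version is somewhat shorter and avoids the series summation; in particular it sidesteps a minor imprecision in the paper's estimate $\PP(\Omega_i)\le Ce^{-cim}$ for small $i$, where the stated algebra only literally applies once $i > (CK)^2$ (trivially fixable since $\PP \le 1$, but you avoid the issue entirely). Your closing observation that exchangeability of $\kU$ and independence between $\bZ$ and $\kU$ are not actually used is also accurate for the paper's proof; those hypotheses are relevant to how the lemma is invoked, not to its proof.
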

\begin{proof}
We implicitly consider the random elements in a probability space $(\Omega, \mathcal{F}, \PP)$. We define, for any $i \in \N\cup\{0\}$, 
\[
\Omega_i := \left\{\omega \in \Omega :  (i+1)\left\| \kU(\omega)\right\|_2^2\ge \left\|\kU'(\omega)\right\|_2^2\ge i\left\| \kU(\omega)\right\|_2^2\right\}. 
\]
By Theorem~\ref{thm:rand_mat}, for some $C$ independent of $m$ and $i$, 
\begin{equation}
\label{eq:prev_estimate}
\begin{split}
\PP(\Omega_i)& \le \PP\left(\left\|\bZ\right\|_M^2\ge im\right)\\& 
= \PP\left(\left\|\bZ\right\|_M\ge CK\left(\sqrt{m}+\left({\sqrt{i}}{(CK)^{-1}}-1\right)\sqrt{m}\right)\right)\le Ce^{-cim},
\end{split}
\end{equation}
for some universal constants $C$ and $c$. Now observe that, by H\"older's inequality,
\[
\begin{split}
\E\left[\left\|\kU'\right\|^{2}\right] & = \int_\Omega \left\|\kU'(\omega)\right\|^{2}d\PP(\omega) \le \sum_{i\ge 0} (i+1)\int_{\Omega_i}\left\| \kU(\omega)\right\|^{2}d\PP(\omega)\\
& \le \int_{\Omega_0}\left\| \kU(\omega)\right\|^{2}d\PP(\omega) + \sum_{i\ge 1}(i+1)\left(\int_{\Omega_i}\left\| \kU(\omega)\right\|^{{\varrho}}d\PP(\omega)\right)^{\frac{2}{\varrho}}\left(\PP(\Omega_i)\right)^{\frac{\varrho-2}{\varrho}}.
\end{split}
\]

Using the previous estimate, \eqref{eq:prev_estimate},
\[
\E\left[\left\|\kU'\right\|^{2}\right]  \le \E\left[\left\|\kU\right\|^{2}\right]  +C \sum_{i\ge 1}(i+1) \E\left[\left\| \kU\right\|^{{\varrho}}\right]^{\frac{2}{\varrho}} e^{-cim \frac{\varrho-2}{\varrho}}.
\]

From our assumptions, and for some $C_\varrho$ that depends on $\varrho$,
\[
\E\left[\left\|\kU'\right\|^{2}\right]  \le C_\circ m^\alpha +C (C_\circ m^\beta)^{\frac{2}{\varrho}}  \sum_{i\ge 1}\ (i+1) e^{-cim \frac{\varrho-2}{\varrho}} = C_\circ \left(m^\alpha + C_\varrho m^{ \frac{2\beta}{\varrho}}{e^{-cm\frac{\varrho-2}{\varrho}}}\right),
\]
and hence 
\[
\E\left[\left\|\kU'\right\|^{2}\right]  \le C_\circ C_{\varrho} m^\alpha,
\]
for some possibly different $C_\varrho$.

On the other hand, following the same strategy we get:
\[
\E\left[\left\|\kU'\right\|^{\varrho-\delta}\right]  \le C \sum_{i\ge 0} (i+1)\E\left[\left\| \kU'\right\|^{{\varrho}}\right]^{\frac{\varrho-\delta}{\varrho}} e^{-cim \frac{\delta}{\varrho}}.
\]

From the assumptions again, 
\[
\E\left[\left\|\kU'\right\|^{\varrho-\delta}\right]  \le  C^{\frac{\rho-\delta}{\rho}}_\circ C_{\varrho, \delta}m^{\beta \frac{\varrho-\delta}{\varrho}}\sum_{i\ge 0}(i+1)e^{-cim \frac{\delta}{\varrho}}\le C^{\frac{\rho-\delta}{\rho}}_\circ C_{\varrho, \delta}m^{\beta \frac{\varrho-\delta}{\varrho}},
\]
so we get the desired result.
\end{proof}

We can now prove the main result, Theorem~\ref{thm:main}:

\begin{proof}[Proof of Theorem~\ref{thm:main}]
We use the notation from Section~\ref{sec:theobjects}, in particular, the random arrays $(\bJ_k^{(1),m} , \dots, \bJ_k^{(d),m} )$ and $\bK_k^m$, defined by \eqref{eq:JKdef0}-\eqref{eq:JKdef2}-\eqref{eq:JKdef}-\eqref{eq:JKdef3}-\eqref{eq:JKdef4}-\eqref{eq:JKdef5}, with $\bU^{(1\dots d)}$ and $\bV$ taken to be $\bU(0)$ and $\bV(0)$. 

We divide the proof into seven steps.
\\[0.3cm]
\noindent {\bf Step 1: The structure.} For notational convenience, we drop the subscript $\tau > 0$ and the superscript $m$, which will be implicit in the following variables; also, we denote by $\bJ_k = (\bJ_k^{(1),m} , \dots, \bJ_k^{(d),m} ) \in \R^{m\times d}$. We show by induction that we can write
\begin{equation}
\label{eq:wecanwrite}
\begin{split}
\bU(\kappa) & = \mathring\bU(\kappa)+  \kU(\kappa)\\
\bW(\kappa) & = \brW(\kappa) + \kW(\kappa)\\
\bV(\kappa)&  = \brV(\kappa)+ \kV(\kappa),
\end{split}
\end{equation}
with 
\begin{equation}
\label{eq:wecanwrite2}
\begin{split}
\brU(\kappa) & = \sum_{k\ge 0}\left(\bJ_k \balpha_k(\kappa) + \bK_k \overline{\balpha}_k(\kappa)\right)\\
\brW(\kappa) & = \sum_{i,j\ge 0}\left(\bJ_i \bgamma_{ij}(\kappa)\bJ_j^\top+\bK_i \overline{\gamma}_{ij}(\kappa)\bK_j^\top+\bJ_i \hat{\bgamma}_{ij}(\kappa)\bK_j^\top+\bK_i \doublehat{\bgamma}_{ij}(\kappa)\bJ_j^\top\right)\\
\brV(\kappa)&  = \sum_{k\ge 0}\left(\bJ_k \bbeta_k(\kappa) + \bK_k \overline{\beta}_k(\kappa)\right),
\end{split}
\end{equation}
and where 
\begin{equation}
\label{eq:errors}
\kU(\kappa)\in \R^{m\times d}, \kW(\kappa)\in \R^{m\times m}, \kV(\kappa)\in \R^{m}
\end{equation}
satisfy
\begin{equation}
\label{eq:errors2}
\E\left[\|\kU(\kappa)\|^2+\frac{1}{m}\|\kW(\kappa)\|^2+\|\kV(\kappa)\|^2\right] \le Cm^{\frac12},
\end{equation}
for some $C$ depending on $\kappa$ but independent of $m\in \N$. Moreover,
\begin{equation}
\label{eq:coefficients}
\begin{array}{ll}
\balpha_k(\kappa)\in \R^{d\times d},& \quad \balpha_k(\kappa) = 0\quad\text{if $k$ is odd},\\[0.1cm]
\overline{\balpha}_k(\kappa)\in \R^{1\times d},& \quad \overline{\balpha}_k(\kappa) = 0\quad\text{if $k$ is even},\\[0.1cm]
\bbeta_k(\kappa)\in \R^{d\times 1},& \quad \bbeta_k(\kappa) = 0\quad\text{if $k$ is even},\\[0.1cm]
\overline{\beta}_k(\kappa)\in \R,& \quad \overline{\beta}_k(\kappa) = 0\quad\text{if $k$ is odd},\\[0.1cm]
\bgamma_{ij}(\kappa)\in \R^{d\times d},& \quad \bgamma_{ij}(\kappa) = 0\quad\text{if $i$ is even \underline{or} $j$ is odd},\\[0.1cm]
\overline{\gamma}_{ij}(\kappa)\in \R,& \quad \overline{\gamma}_{ij}(\kappa) =  0\quad\text{if $i$ is odd \underline{or} $j$ is even},\\[0.1cm]
\hat{\bgamma}_{ij}(\kappa)\in \R^{d\times 1},& \quad \hat{\bgamma}_{ij}(\kappa) =  0\quad\text{if $i$ is even \underline{or} $j$ is even},\\[0.1cm]
\doublehat{\bgamma}_{ij}(\kappa)\in \R^{1\times d},& \quad \doublehat{\bgamma}_{ij}(\kappa) =  0\quad\text{if $i$ is odd \underline{or} $j$ is odd}.
\end{array}
\end{equation} 
\noindent {\bf Step 2: Computing the update.} Let us compute $(\bU(\kappa+1), \bW(\kappa+1), \bV(\kappa+1))$ in terms of \eqref{eq:wecanwrite}-\eqref{eq:wecanwrite2}, by using \eqref{eq:training_m}. By the inductive  assumption, we will assume that \eqref{eq:coefficients} holds at time $\kappa$. 

We compute first $h_{\kappa}(x)$, by expanding:
\[
\bp(\kappa) := \left[\frac{1}{\sqrt{m}}\bZ + \frac{1}{m}\bW(\kappa)\right]\bU(\kappa).
\]

On the one hand, thanks to \eqref{eq:wecanwrite}-\eqref{eq:wecanwrite2}-\eqref{eq:coefficients} and the recursion property  in Proposition~\ref{prop:recurrence_d}, we have
\[
\begin{split}
\frac{1}{\sqrt{m}}\bZ\bU(\kappa) & = \sum_{k\ge 0}\left(\left[\bJ_{k+1}+\bJ_{k-1} \right] \balpha_k(\kappa) + \left[\bK_{k+1}+\bK_{k-1} \right] \overline{\balpha}_k(\kappa)\right)+\kE_1(\kappa),
\end{split}
\]
where 
\[
\kE_1(\kappa)  = \sum_{k\ge 0}\left(\bR_k \balpha_k(\kappa) + \bS_k \overline{\balpha}_k(\kappa)\right)+ \frac{1}{\sqrt{m}}\bZ \kU(\kappa),
\]
and  where from now on we assume that whenever an index is negative, the corresponding object is identically zero (e.g. $\bJ_{-1} \equiv 0$ and $\bK_{-1} \equiv 0$), and we have denoted (from Proposition~\ref{prop:recurrence_d}), $\bR_k = (\bR^{(1),m}_k,\dots,\bR^{(d),m}_k)$. 

On the other hand, also from \eqref{eq:wecanwrite}-\eqref{eq:wecanwrite2}, we can compute the other term in $\bp$ by using the orthonormality property in Proposition~\ref{prop:orthonormality},
\[
\begin{split}
\frac{1}{{m}}\bW(\kappa)\bU(\kappa) & = \sum_{i,j\ge 0}\left(\bJ_i \bgamma_{ij}(\kappa)\balpha_j(\kappa)+\bK_i \overline{\gamma}_{ij}(\kappa)\overline{\balpha}_j(\kappa)\right)\\
&\quad +  \sum_{i,j \ge 0} \left(\bJ_i \hat{\bgamma}_{ij}(\kappa)\overline{\balpha}_j(\kappa)+\bK_i \doublehat{\bgamma}_{ij}(\kappa)\balpha_j(\kappa)\right)+ \kE_2(\kappa)
\end{split}
\]
where 
\[
\begin{split}
\kE_2(\kappa) & = \frac{1}{m} \left(\kW(\kappa)  \brU(\kappa) +  \brW(\kappa)  \kU(\kappa) +\kW(\kappa)  \kU(\kappa)\right)+\\
& \quad +\sum_{i,j, k\ge 0}\left[\bJ_i \bgamma_{ij}(\kappa)+\bK_i \doublehat{\bgamma}_{ij}(\kappa)\right]\left[\bO_{JJ}(j,k)\balpha_k(\kappa)+\bO_{JK}(j,k)\overline{\balpha}_k(\kappa)\right]\\
& \quad +\sum_{i,j, k\ge 0}\left[\bJ_i \hat{\bgamma}_{ij}(\kappa)+\bK_i \overline{\gamma}_{ij}(\kappa)\right]\left[(\bO_{JK}(k,j))^\top\balpha_k(\kappa)+O_{KK}(j,k)\overline{\balpha}_k(\kappa)\right],
\end{split}
\]
and thus
\[
\begin{split}
\bp(\kappa) & = \brp(\kappa) + \kE_p(\kappa) := \brp(\kappa) + \kE_1(\kappa)+\kE_2(\kappa),
\end{split}
\]
where
\[
\begin{split}
\brp(\kappa) & = \sum_{k\ge 0} \bJ_k \biggl(\balpha_{k+1}(\kappa)+\balpha_{k-1}(\kappa)+\sum_{j \ge 0}\left[\bgamma_{kj}(\kappa)\balpha_j(\kappa)+\hat{\bgamma}_{kj}(\kappa)\overline{\balpha}_j(\kappa)\right]\biggr)
\\& \quad + \sum_{k\ge 0} \bK_k \biggl(\overline{\balpha}_{k+1}(\kappa)+\overline{\balpha}_{k-1}(\kappa)+\sum_{j \ge 0}\left[\overline{\gamma}_{kj}(\kappa)\overline{\balpha}_j(\kappa)+\doublehat{\bgamma}_{kj}(\kappa){\balpha}_j(\kappa)\right]\biggr).
\end{split}
\]

From here, using again the orthonormality property in Proposition~\ref{prop:orthonormality}, we can compute:
\[
\begin{split}
h_{\kappa}(x) & = \frac{1}{m} (\bV(\kappa))^\top \bp(\kappa) x = \mathring{h}_{\kappa}(x) + \kE_h(\kappa) x
\end{split}
\]
with 
\[
\begin{split}
\mathring{h}_{\kappa}(x) & :=  \sum_{k\ge 0} (\bbeta_k)^\top \biggl(\balpha_{k+1}(\kappa)+\balpha_{k-1}(\kappa)+\sum_{j \ge 0}\left[\bgamma_{kj}(\kappa)\balpha_j(\kappa)+\hat{\bgamma}_{kj}(\kappa)\overline{\balpha}_j(\kappa)\right]\biggr)x
\\& \quad + \sum_{k\ge 0} \overline{\beta}_k \biggl(\overline{\balpha}_{k+1}(\kappa)+\overline{\balpha}_{k-1}(\kappa)+\sum_{j \ge 0}\left[\overline{\gamma}_{kj}(\kappa)\overline{\balpha}_j(\kappa)+\doublehat{\bgamma}_{kj}(\kappa){\balpha}_j(\kappa)\right]\biggr)x
\end{split}
\] 
and
\[
\begin{split}
\kE_h(\kappa) & = \frac{1}{m}(\brV(\kappa))^\top \kE_p(\kappa)+ \frac{1}{m}(\kV(\kappa))^\top \brp(\kappa) + \frac{1}{m}(\kV(\kappa))^\top\kE_p(\kappa)\\
& \quad + \sum_{i, k\ge 0} \left[(\bbeta_i(\kappa))^\top\bO_{JJ}(i, k)+\overline{\beta}_i(\kappa) (\bO_{JK}(k, i))^\top\right]\cdot \\
& \qquad \cdot \biggl(\balpha_{k+1}(\kappa)+\balpha_{k-1}(\kappa)+\sum_{j \ge 0}\left[\bgamma_{kj}(\kappa)\balpha_j(\kappa)+\hat{\bgamma}_{kj}(\kappa)\overline{\balpha}_j(\kappa)\right]\biggr)
\\& \quad   + \sum_{i, k\ge 0} \left[(\bbeta_i(\kappa))^\top\bO_{JK}(i, k)+\overline{\beta}_i(\kappa)O_{KK} (i, k)\right]\cdot\\
& \qquad \cdot  \biggl(\overline{\balpha}_{k+1}(\kappa)+\overline{\balpha}_{k-1}(\kappa)+\sum_{j \ge 0}\left[\overline{\gamma}_{kj}(\kappa)\overline{\balpha}_j(\kappa)+\doublehat{\bgamma}_{kj}(\kappa){\balpha}_j(\kappa)\right]\biggr).
\end{split}
\]

At this point it is important to notice that the expression for $\mathring{h}_{\kappa}(x)$ is independent of the basis, and thus, if $m$ is large enough and $\kappa$ is fixed, it is independent of $m$. 

We can also write an expression for $\bV(\kappa+1)$ using \eqref{eq:training_m} directly, where it is easy to check that $\bV(\kappa+1)$ can be written in the form \eqref{eq:wecanwrite}-\eqref{eq:wecanwrite2} with coefficients satisfying \eqref{eq:coefficients} by induction. 

Using a similar procedure (thanks to Propositions~\ref{prop:orthonormality} and \ref{prop:recurrence_d}) we find the expression for
\[
\begin{split}
\bq(\kappa) & := \left[\frac{1}{\sqrt{m}}\bZ + \frac{1}{m}\bW(\kappa)\right]^\top\bV(\kappa) = \brq(\kappa) + \kE_q(\kappa) := \brq(\kappa) + \kE_3(\kappa)+\kE_4(\kappa),
\end{split}
\]
where
\[
\begin{split}
\brq(\kappa) & := \sum_{k\ge 0} \bJ_k \biggl(\bbeta_{k+1}(\kappa)+\bbeta_{k-1}(\kappa)+\sum_{j \ge 0}\left[(\bgamma_{jk}(\kappa))^\top\bbeta_j(\kappa)+(\doublehat{\bgamma}_{jk}(\kappa))^\top\overline{\beta}_j(\kappa)\right]\biggr)
\\& \quad + \sum_{k\ge 0} \bK_k \biggl(\overline{\beta}_{k+1}(\kappa)+\overline{\beta}_{k-1}(\kappa)+\sum_{j \ge 0}\left[\overline{\gamma}_{jk}(\kappa)\overline{\beta}_j(\kappa)+(\hat{\bgamma}_{jk}(\kappa))^\top{\bbeta}_j(\kappa)\right]\biggr),
\end{split}
\]
and, as before, we have
\[
\kE_3(\kappa) = \sum_{k\ge 0}\left(\bR_k \bbeta_k(\kappa) + \bS_k \overline{\beta}_k(\kappa)\right)+ \frac{1}{\sqrt{m}}\bZ^\top \kV(\kappa),
\]
and
\[
\begin{split}
\kE_4(\kappa) & = \frac{1}{m} \left((\kW(\kappa))^\top  \brV(\kappa) +  (\brW(\kappa))^\top  \kV(\kappa) +(\kW(\kappa))^\top  \kV(\kappa)\right)+\\
& \quad +\sum_{i,j, k\ge 0}\left[\bJ_i (\bgamma_{ji}(\kappa))^\top+\bK_i (\hat{\bgamma}_{ji}(\kappa))^\top\right]\left[\bO_{JJ}(j,k)\beta_k(\kappa)+\bO_{JK}(j,k)\overline{\beta}_k(\kappa)\right]\\
& \quad +\sum_{i,j, k\ge 0}\left[\bJ_i (\doublehat{\bgamma}_{ji}(\kappa))^\top+\bK_i \overline{\gamma}_{ji}(\kappa)\right]\left[(\bO_{JK}(k,j))^\top\bbeta_k(\kappa)+O_{KK}(j,k)\overline{\beta}_k(\kappa)\right].
\end{split}
\]
\noindent {\bf Step 3: The evolution.} We can now use the expressions for $\bp(\kappa)$, $\bq(\kappa)$, and $\bV(\kappa) x^\top (\bU(\kappa))^\top$ to derive an evolution for the coefficients \eqref{eq:coefficients} from \eqref{eq:training_m}. In order to do that, let us observe that we can denote
\[
 \R^{d}\ni\bxi_{\kappa} =   \int x\, \mathcal{L}'(h_{\kappa}(x), y)   d\rho_\kappa(x, y) = \brxi_{\kappa} + \kE_\xi(\kappa), 
\]
with 
\[
\brxi_{\kappa} := \int x\, \mathcal{L}'(\mathring{h}_{\kappa}(x), y)   d\rho_\kappa(x, y)
\]
and
\[ \kE_\xi(\kappa) :=  \int x\, \left( \mathcal{L}'(h_{\kappa}(x), y)-\mathcal{L}'(\mathring{h}_{\kappa}(x), y))\right)   d\rho_\kappa(x, y),
\]
so that, since  $\mathcal{L}'$ is Lipschitz and \eqref{eq:unif_second_moments} holds, 
\[
|\kE_\xi(\kappa)| \le C \|\kE_h(\kappa)\|\int |x|^2 \, d\rho_\kappa(x, y)\le C \|\kE_h(\kappa)\| . 
\]

If we now denote 
\[
\delta \balpha_k(\kappa) := \frac{1}{\tau}  \left( \balpha_k(\kappa+1)-\balpha_k(\kappa)\right)
\]
(analogously for $\overline{\balpha}_k, \bbeta_k, \overline{\beta}_k, \bgamma_{ij}, \overline{\gamma}_{ij}, \hat{\bgamma}_{ij}, \doublehat{\bgamma}_{ij}$) we get, on the one hand,\footnote{Observe that the evolution of the system is ``mostly'' independent of $m$, and hence for $m$ very large we have a trivial limit: the only problem is when all the elements in the corresponding arrays are non-zero, due to a boundary effect at $k = m$, but this is avoided for $m$ large enough and after a fixed number of iterations thanks to the initialization \eqref{eq:initialization_coef}.}
\begin{equation}
\label{eq:simplecheck1}{\small
\begin{split}
\delta \balpha_k(\kappa) & = - \biggl(\bbeta_{k+1}(\kappa)+\bbeta_{k-1}(\kappa)+\sum_{j \ge 0}\left[(\bgamma_{jk}(\kappa))^\top\bbeta_j(\kappa)+(\doublehat{\bgamma}_{jk}(\kappa))^\top\overline{\beta}_j(\kappa)\right]\biggr)\brxi_{\kappa}^\top,\\
\delta \overline{\balpha}_k(\kappa) & = -\bigg(\overline{\beta}_{k+1}(\kappa)+\overline{\beta}_{k-1}(\kappa)+\sum_{j \ge 0}\left[\overline{\gamma}_{jk}(\kappa)\overline{\beta}_j(\kappa)+(\hat{\bgamma}_{jk}(\kappa))^\top{\bbeta}_j(\kappa)\right]\bigg)\brxi_{\kappa}^\top,\\
\delta \bbeta_k(\kappa) & =   - \bigg(\balpha_{k+1}(\kappa)+\balpha_{k-1}(\kappa)+\sum_{j \ge 0}\left[\bgamma_{kj}(\kappa)\balpha_j(\kappa)+\hat{\bgamma}_{kj}(\kappa)\overline{\balpha}_j(\kappa)\right]\bigg)\brxi_{\kappa},\\
\delta \overline{\beta_k}(\kappa) & = - \bigg(\overline{\balpha}_{k+1}(\kappa)+\overline{\balpha}_{k-1}(\kappa)+\sum_{j \ge 0}\left[\overline{\gamma}_{kj}(\kappa)\overline{\balpha}_j(\kappa)+\doublehat{\bgamma}_{kj}(\kappa){\balpha}_j(\kappa)\right]\bigg)\brxi_{\kappa},
\end{split}
}
\end{equation}
and on the other hand, from \eqref{eq:training_m} and \eqref{eq:wecanwrite}-\eqref{eq:wecanwrite2} we immediately have
\begin{equation}
\label{eq:simplecheck2}
\begin{array}{ll}
\delta \bgamma_{ij}(\kappa)  = - \bbeta_i(\kappa)\brxi_{\kappa}^\top\balpha_j(\kappa)^\top, & \delta \hat{\bgamma}_{ij}(\kappa)  = - \bbeta_i(\kappa)(\brxi_{\kappa})^\top(\overline{\balpha}_j(\kappa))^\top,\\[0.2cm]
\delta \overline{\gamma}_{ij}(\kappa)  = - \overline{\beta}_i(\kappa)\brxi_{\kappa}^\top(\overline{\balpha}_j(\kappa))^\top, & \delta \doublehat{\bgamma}_{ij}(\kappa)  = - \overline{\beta}_i(\kappa)\brxi_{\kappa}^\top({\balpha}_j(\kappa))^\top,
\end{array}
\end{equation}
and 
\begin{equation}
\label{eq:errorupdate}
\begin{split}
\delta \kU(\kappa) & =- \kE_q(\kappa) \brxi^\top_{\kappa}-\brq(\kappa) \kE_\xi^\top(\kappa) - \kE_q(\kappa)  \kE_\xi^\top(\kappa)\\
\delta \kW(\kappa) & =- \bV(\kappa)\bxi_{\kappa}^\top(\bU(\kappa))^\top+\brV(\kappa)\brxi_{\kappa}^\top(\brU(\kappa))^\top\\
\delta \kV(\kappa) & =- \kE_p(\kappa) \brxi_{\kappa}-\brp(\kappa) \kE_\xi(\kappa) - \kE_p(\kappa)  \kE_\xi(\kappa) .
\end{split}
\end{equation}

It is now a simple check that \eqref{eq:simplecheck1}-\eqref{eq:simplecheck2}  imply that, if the relations in \eqref{eq:coefficients} hold at time $\kappa$, they also hold at time $\kappa +1$. 
\\[0.3cm]
\noindent {\bf Step 4: Initial conditions and boundedness of coefficients.}
We are considering the vectors $\bJ_k$ and $\bK_k$ to be the ones constructed in subsection~\ref{sec:theobjects} where $\bU$ and $\bV$ are the initializations $\bU(0)$ and $\bV(0)$. Thus, by construction, 
\begin{equation}
\label{eq:initialization_coef}
\begin{array}{lll}
 \balpha_0(0) = {\rm Id}_d, \qquad & \balpha_k(0) = {\0}_{d\times d}&\quad\text{for}\quad k \ge 1,\\
  \overline \balpha_k(0) = \0_{1\times d},& &\quad\text{for}\quad k \ge 0,\\
   \bbeta_k(0) = {\0}_{d\times 1}, &&\quad\text{for}\quad k \ge 0,\\
    \overline{\beta}_0(0) = 1, \qquad  &\overline{\beta}_k(0) = 0&\quad\text{for}\quad k \ge 1,\\
 \end{array}
\end{equation}
and all $\bgamma$, $\overline{\gamma}$, $\hat{\bgamma}$, and $\doublehat{\bgamma}$ are initialized at 0. From the update \eqref{eq:simplecheck1}, we immediately get that 
\begin{equation}
\label{eq:zerocoef1}
\|\balpha_k(\kappa)\|=\|\overline\balpha_k(\kappa)\|=\|\bbeta_k(\kappa)\|=\|\overline\beta_k(\kappa)\|=0\qquad\text{if}\quad k \ge \kappa+1, 
\end{equation}
and 
\begin{equation}
\label{eq:zerocoef2}
\|\bgamma_{ij}(\kappa)\|=\|\overline\bgamma_{ij}(\kappa)\|=\|\hat \gamma_{ij}(\kappa)\|=\|\doublehat \bgamma_{ij}(\kappa)\| = 0\qquad\text{if}\quad i \ge \kappa+1\quad\text{or}\quad j \ge \kappa+1,
\end{equation}
and in particular, if $m$ is large enough, the coefficients are all independent of $m$. This is because in the evolution \eqref{eq:simplecheck1}-\eqref{eq:simplecheck2} each element in a position $k$ is only affected by elements in the surrounding positions (either for $\alpha$, $\beta$, or $\gamma$). 

Observe that, again by construction, the evolution of the coefficients \eqref{eq:coefficients} given by \eqref{eq:simplecheck1}-\eqref{eq:simplecheck2} is deterministic, and in particular all the coefficients are always bounded after finitely many time steps by a universal constant depending only on $\kappa_*$ by \eqref{eq:zerocoef1}-\eqref{eq:zerocoef2} (and $\tau$, but independent of $m$), and the same holds for $\brxi$ and $\mathring{h}$:
\begin{equation}
\label{eq:boundcoef}
\begin{split}
& \|\balpha_k(\kappa)\|+\|\overline\balpha_k(\kappa)\|+\|\bbeta_k(\kappa)\|+\|\overline\beta_k(\kappa)\|+ 
\\
& \quad +\|\bgamma_{ij}(\kappa)\|+\|\overline\bgamma_{ij}(\kappa)\|+\|\hat \gamma_{ij}(\kappa)\|+\|\doublehat \bgamma_{ij}(\kappa)\| +|\mathring{h}_\kappa| +\|\brxi_\kappa\|  \le C_{\kappa_*}
\end{split}
\end{equation}
for all $k, i, j \in \N$, and $1\le \kappa\le\kappa_*$.

 Hence, from \eqref{eq:wecanwrite2} and by the proof of Theorem~\ref{thm:main_convergence}, we also have that 
\begin{equation}
\label{eq:boundmainpart}
\E[|\mathring{U}_{i, \ell}(\kappa)|^{\Upsilon}+|\mathring{W}_{i,j}(\kappa)|^{\Upsilon}+|\mathring{V}_{i}(\kappa)|^{\Upsilon}+|\mathring{p}_{i, \ell}(\kappa)|^{\Upsilon}+|\mathring{q}_{i}(\kappa)|^{\Upsilon}]\le C_{\kappa_*, \Upsilon} < +\infty,
\end{equation}
for any $\Upsilon\ge 2$, $1\le i, j\le m$, $1\le \ell\le m$, and for some $C_{\kappa_*, \Upsilon}$ independent of $m$. 

In particular, we have that 
\begin{equation}
\label{eq:boundmainpart2}
\begin{split}
\E[\|\brU(\kappa)\|^{\Upsilon}+\|\brV(\kappa)\|^{\Upsilon}+\|\brp(\kappa)\|^{\Upsilon}+\|\brq(\kappa)\|^{\Upsilon}] & \le C_{\kappa_*, \Upsilon} m^{\frac{\Upsilon}{2}},
\\
\E[\|\brW(\kappa)\|^{\Upsilon}] & \le C_{\kappa_*, \Upsilon} m^{{\Upsilon}}.
\end{split}
\end{equation}

Let us now bound the error terms. Let us assume that we have for some $\alpha\ge 0$ that will be small, and for any $\varrho \ge 2$, 
\begin{equation}
\label{eq:assump}
\begin{split}
\E[\|\kU(\kappa)\|^\varrho+\|\kV(\kappa)\|^\varrho] &\le C_\varrho m^{\frac{\varrho}{2}-1+\alpha} \\ \E[\|\kW(\kappa)\|^\varrho] &\le C_\varrho m^{{\varrho}-1+\alpha}.
\end{split}
\end{equation}
Then we will show that for any $\delta > 0$
\begin{equation}
\label{eq:thesis}
\begin{split}
\E[\|\kU(\kappa+1)\|^\varrho+\|\kV(\kappa+1)\|^\varrho] &\le C'_{\varrho, \delta} m^{\frac{\varrho}{2}-1+\alpha+\delta} \\
 \E[\|\kW(\kappa+1)\|^\varrho] &\le C'_{\varrho, \delta} m^{{\varrho}-1+\alpha+\delta},
 \end{split}
\end{equation}
for some new constant $C'_{\varrho, \delta}$ that might depend on everything, but it is independent of $m$. 

In order to do that, we look at the different terms in the errors. We can always apply the same strategy to bound them, using our hypotheses \eqref{eq:assump} and that we know explicitly how the errors are being updated, \eqref{eq:errorupdate}. Let us for example show how to bound a representative case that includes all possible behaviors, to obtain a bound like \eqref{eq:thesis} for the term $\kE_p(\kappa+1)$. Namely, we will show that 
\begin{equation}
\label{eq:thesis2}
\E[\|\kE_p(\kappa+1)\|^\varrho] \le C' m^{\frac{\varrho}{2}-1+\alpha+\delta}
\end{equation}
for some $\delta$ arbitrarily small. 

We know that $\kE_p(\kappa+1) = \kE_1(\kappa+1)+\kE_2(\kappa+1)$, let us control them separately. 
\\[0.3cm]
{\bf Step 5: Bound on $\kE_1(\kappa+1)$.} The term $\kE_1(\kappa+1)$ has two parts. The first part is
\[
 \sum_{k\ge 0}\left(\bR_k \balpha_k(\kappa) + \bS_k \overline{\balpha}_k(\kappa)\right).
\]
 In this case, we use the boundedness of coefficients \eqref{eq:boundcoef} together with the fact that, from Proposition~\ref{prop:recurrence_d}, we also have that for $k \le \kappa_*+1$ and any $\Upsilon \ge 2$, 
\[
\E[\|\bR_k\|^{\Upsilon}+\|\bS_k\|^{\Upsilon}]  \le C_{\kappa_*, \Upsilon} m^{\frac{\Upsilon}{2}-1},
\]
(using the equivalence between Euclidean norms, $\|x\|_p \le m^{\frac{1}{p} - \frac{1}{q}} \|x\|_q$ for any $x\in \R^m$ and $p < q$). This gives the desired result without losing any power. 

For the second term in $\kE_1(\kappa+1)$,
\[
\frac{1}{\sqrt{m}}\bZ \kU(\kappa),
\]
we can apply Lemma~\ref{lem:lemtobeapplied} to obtain, on the one hand, 
\[
\E\left[\left\|\frac{1}{\sqrt{m}}\bZ \kU(\kappa)\right\|^2\right]\le C m^{\alpha}
\] 
and on the other hand, for any $r > 2$ and  $\delta > 0$, 
\[
\E\left[\left\|\frac{1}{\sqrt{m}}\bZ \kU(\kappa)\right\|^r\right]\le C m^{\frac{r}{2} - 1+\frac{\delta}{r}+\frac{r}{r+\delta} \alpha}\le C m^{\frac{r}{2} - 1+\delta+\alpha}
\] 
where $C$ now might depend also on $\delta$. 
\\[0.3cm]
{\bf Step 6: Bound on $\kE_2(\kappa+1)$.} The term $\kE_2(\kappa+1)$ also has two parts. Regarding all the terms involving the orthonormal errors coming from Proposition~\ref{prop:orthonormality}, we treat them as in the previous step but using Proposition~\ref{prop:orthonormality} instead of Proposition~\ref{prop:recurrence_d}. Let us then show how to bound the remaining term, 
\[
\frac{1}{m} \left(\kW(\kappa)  \brU(\kappa) +  \brW(\kappa)  \kU(\kappa) +\kW(\kappa)  \kU(\kappa)\right).
\]

We do so separately, for each of the three elements. Let us start with the first term:, by means of Cauchy--Schwarz and H\"older:
\[
\begin{split}
\E\left[\left\|\frac{1}{m} \kW(\kappa)  \brU(\kappa) \right\|^\varrho\right]& \le \frac{1}{m^\varrho}\E\left[\left\| \kW(\kappa)  \right\|^\varrho \left\|\brU(\kappa) \right\|^\varrho\right]\\
& \le \frac{1}{m^\varrho}\left(\E\left[\left\| \kW(\kappa)  \right\|^{(1+\eps)\varrho}\right]\right)^{\frac{1}{1+\eps}}\left(\E\left[\left\| \brU(\kappa)  \right\|^{\eta \varrho}\right]\right)^{\frac{1}{\eta}},
\end{split}
\]
with $\frac{\eps}{1+\eps} = \frac{1}{\eta}$. By hypothesis \eqref{eq:assump} and using \eqref{eq:boundmainpart2} we obtain 
\[
\E\left[\left\|\frac{1}{m} \kW(\kappa)  \brU(\kappa) \right\|^\varrho\right]\le C\frac{1}{m^\varrho} m^{\varrho-1+\frac{\eps+\alpha}{1+\eps}} m^{\frac{\varrho}{2}}= C m^{\frac{\varrho}{2} -1+\frac{\eps+\alpha}{1+\eps}}.
\]
 A similar computation works for the term $\frac{1}{m}\brW(\kappa)  \kU(\kappa)$. Finally, 
\[
\begin{split}
\E\left[\left\|\frac{1}{m} \kW(\kappa)  \kU(\kappa) \right\|^\varrho\right]& \le \frac{1}{m^\varrho}\E\left[\left\| \kW(\kappa)  \right\|^\varrho \left\|\kU(\kappa) \right\|^\varrho\right]\\
& \le \frac{1}{m^\varrho}\left(\E\left[\left\| \kW(\kappa)  \right\|^{2\varrho}\right]\right)^{\frac{1}{2}}\left(\E\left[\left\| \kU(\kappa)  \right\|^{2 \varrho}\right]\right)^{\frac{1}{2}}.
\end{split}
\]
Using our hypotheses in \eqref{eq:assump} we have 
\[
\E\left[\left\|\frac{1}{m} \kW(\kappa)  \kU(\kappa) \right\|^\varrho\right] \le C \frac{1}{m^\varrho}m^{{\varrho}+\frac{-1+\alpha}{2}}  m^{\frac{\varrho}{2}+\frac{-1+\alpha}{2}}\le C m^{\frac{\varrho}{2}-1+\alpha}.
\]
Thus, assuming $\eps < \delta$, we have shown that \eqref{eq:thesis2} holds.

We can do the same with all other terms in $\kU(\kappa+1)$ and $\kV(\kappa+1)$ to obtain the desired result, and a completely analogous argument also works on $\kW(\kappa+1)$. 
\\[0.3cm]
{\bf Step 7: Conclusion.} For $\kappa = 0$, there are no error terms, and in particular \eqref{eq:assump} holds with $\alpha = 0$ (recall $\alpha \ge 0$). We fix $\delta$ universally as $\delta = \frac{1}{2\kappa_*}$, in such a way that, from \eqref{eq:assump}-\eqref{eq:thesis} with $\varrho = 2$, 
\[
\E[\|\kU(\kappa)\|^2+\|\kV(\kappa)\|^2] \le C m^{\frac12} \qquad \text{and}\qquad \E[\|\kW(\kappa)\|^2] \le C m^{\frac{3}{2}},
\]
for all $\kappa\le \kappa_*$ (notice that taking $\delta$ smaller, we can make the powers of $m$ arbitrarily close to 1 and 2 respectively). In particular, by exchangeability of $\kU$, $\kV$, and $\kW$, we have that 
\[
\mkU_{i, \ell}(\kappa), \mkV_{j}(\kappa), \mkW_{ij}(\kappa)\to 0 \quad\text{in $L^2$}\quad \text{as $m\to \infty$},
\] 
with 
 \begin{equation}
 \label{eq:lim} 
 \E[|
\mkU_{i, \ell}(\kappa)|^2+ |\mkV_{j}(\kappa)|^2+ | \mkW_{ij}(\kappa)|^2 ]\le Cm^{-\frac12} \to 0 \quad \text{as $m\to \infty$},
\end{equation}
for all $i, j\in \N$, $1\le\ell\le d$ fixed.

The same analysis also yields that, for every $\eps > 0$ there exists some $C_\eps$ such that
\[
\E[\|\kE_h(\kappa)\|^2] \le C_\eps m^{-1+\eps}\to 0\quad\text{as $m \to \infty$},
\]
so that $h_\kappa(x) \to \ring{h}_\kappa(x)$ almost surely for every $x\in \R^d$, as $m\to \infty$. This gives the almost optimal quantitative convergence of the linear predictor, \eqref{eq:quant_convergence}.

We finish by taking, on the one hand
\[
\A(\kappa) = 
\left(
\begin{array}{c}
\balpha_0(\kappa)\\
\overline{\balpha}_1(\kappa)\\
\balpha_2(\kappa)\\
\overline{\balpha}_3(\kappa)\\
\vdots
\end{array}
\right)\in\R^{\infty\times d},\qquad 
\B(\kappa) = 
\left(
\begin{array}{c}
\overline{\beta}_0(\kappa)\\
{\bbeta}_1(\kappa)\\
\overline{\beta}_2(\kappa)\\
{\bbeta}_3(\kappa)\\
\vdots
\end{array}
\right)\in\R^{\infty},
\]
and 
\[
\G(\kappa) = \begin{pmatrix}
\doublehat{\bgamma}_{00}(\kappa) & \overline{\gamma}_{01}(\kappa) & \doublehat{\bgamma}_{02}(\kappa) &  \dots\\
{\bgamma}_{10}(\kappa) & \hat{\bgamma}_{11}(\kappa) & {\bgamma}_{20}(\kappa) &  \dots\\
\doublehat{\bgamma}_{20}(\kappa) & \overline{\gamma}_{21}(\kappa) & \doublehat{\bgamma}_{22}(\kappa) &  \dots\\
\vdots & \vdots & \vdots & \ddots
\end{pmatrix}\in \R^{\infty\times\infty},
\]
which are well defined independently of $m$, if $m$ is large enough for a fixed $\kappa$. On the other hand, recovering the superscripts $m$ in the notation, we know rom Theorem~\ref{thm:main_convergence} (more precisely, from Proposition~\ref{prop:main_convergence}), that the family of vectors $(\bJ_k^m, \bK_k^m)$ converges, in distribution, to a family of independent, identically distributed (infinite) random vectors \eqref{eq:JKdef_lim}, that we denote $(\bJ_k^\infty, \bK_k^\infty)$. Hence, we can take in \eqref{eq:basis_gamma}
\[
\begin{split}
(\bGamma_1,\bGamma_2,\dots) = (\bJ_0^\infty, \bK_1^\infty, \bJ_2^\infty,\dots),\\
(\tilde \bGamma_1,\tilde \bGamma_2,\dots) = (\bK_0^\infty, \bJ_1^\infty, \bK_2^\infty,\dots),
\end{split}
\]
(notice that these equalities are not elementwise, but rather as matrices; that is, $\bGamma_2 = \bJ_0^{(2),\infty}$ if $d \ge 2$).
Thanks to Proposition~\ref{prop:main_convergence} and \eqref{eq:lim}, we are done. 
\end{proof}

\section{Properties of the infinite-width dynamics}\label{sec:limit-properties}
In this section, we study the behavior of the time-continuous ($\tau\to 0$) version of the limit system~\eqref{eq:lim_evo}-\eqref{eq:infsystem}, as $\tau\downarrow 0$, namely the gradient flow of $\mathcal{E}$~\eqref{eq:limit-objective} with initialization~\eqref{eq:init_inf}.

\subsection{A gradient flow} We start by showing that the time-continuous version of \eqref{eq:lim_evo}, \eqref{eq:infsystem_P} below, is a gradient flow of the energy with respect to the Euclidean norm of the parameters (in particular, in the limiting case $m \to \infty$, it is a gradient flow in $\ell^2$), and that the variation of the squared $\ell_2$-norm of each layer is the same; a property that follows from the $1$-homogeneity of the output w.r.t.~each layer, which is often used in the analysis of linear NNs~\cite{arora2019implicit, du2018algorithmic}. This property is often used in conjunction with a \emph{balanced initialization} assumption~\cite[Eq.~(7)]{arora2019implicit}, which does not hold here, in particular because the middle layer has infinite $\ell_2$-norm at initialization. 


\begin{prop}
\label{prop:grad_flow}
Let $m\in \N\cup\{\infty\}$. Let $(\A(t), \G(t), \B(t))$  with $\A(t):[0,\infty)\to \R^{m\times d}$, $\G:[0,\infty)\to \R^{m\times m}$, and $\B(t):[0,\infty)\to \R^{m}$ be a solution to the following ODE system
\begin{equation}
\label{eq:infsystem_P}
\left\{
\begin{array}{rcl}
\dot{\A}(t)  & = & - [\LL + \G(t)]^\top\B(t)\bxi^\top_{t},\\
\dot{\G}(t)& = & -\B(t) \bxi^\top_{t} \A(t)^\top,\\
\dot{\B}(t) & = & -[\LL + \G(t)]\A(t)\bxi_{t},
\end{array}
\right.
\end{equation}
with 
\begin{equation}
\label{eq:htdef}
\bxi_{t} = \int x\mathcal{L}'(h_{t}(x), y) d\rho(x, y)\in \R^d,\qquad h_t(x) = \B(t)^\top[\LL + \G(t)]\A(t) x.
\end{equation}
and $\LL\in \R^{m\times m}$ is a fixed matrix, equal to:
\[
 \R^{m\times m}\ni \LL = (\Lambda_{ij})_{ij} = 
\left\{
\begin{array}{ll}
1 & \quad\text{if $i+d = j$ or $j+1 = i$}\\
0 & \quad\text{otherwise}. 
\end{array}
\right.
\]
Then, \eqref{eq:infsystem_P} is the gradient flow in the $\ell_2$-norm of the energy functional 
\[
\mathcal{E}(\A, \G, \B) := \int\mathcal{L}(\B^\top[\LL + \G]\A x, y)d\rho(x, y). 
\]
In particular, we have
\[
\frac{d}{dt} \int \mathcal{L}(h_t(x), y)d\rho(x, y) \le 0,
\]
and
\begin{equation}
\label{eq:AGB}
\frac{d}{dt} \|\A(t)\|^2 = \frac{d}{dt} \|\B(t)\|^2 = \frac{d}{dt} \|\LL+\G(t)\|^2 = -2\int h_t(x) \mathcal{L}'(h_t(x), y)d\rho(x, y).
\end{equation}
\end{prop}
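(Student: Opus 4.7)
The plan is to recognize \eqref{eq:infsystem_P} as the Euclidean gradient flow of $\mathcal{E}$ by a direct computation of the three partial gradients, and then to deduce the monotonicity and the norm-balance identity as immediate consequences, the former from the standard Hilbert-space identity $\tfrac{d}{dt}\mathcal{E}=-\|\nabla\mathcal{E}\|^2$ and the latter from the cyclic trace property. Since $\mathcal{E}(\A,\G,\B)=\int\mathcal{L}(\B^\top[\LL+\G]\A x, y)\,d\rho(x,y)$ is linear in each of $\A$, $\G$, $\B$ separately when the other two are held fixed, the chain rule yields
\[
(\partial_\A\mathcal{E})_{ij}=\int \mathcal{L}'(h_t(x),y)\big([\LL+\G]^\top\B\big)_i x_j\,d\rho(x,y) = \big([\LL+\G]^\top\B\,\bxi_t^\top\big)_{ij},
\]
and analogously $\partial_\G\mathcal{E}=\B\,\bxi_t^\top\A^\top$, $\partial_\B\mathcal{E}=[\LL+\G]\A\,\bxi_t$. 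Comparing with \eqref{eq:infsystem_P} shows $(\dot\A,\dot\G,\dot\B)=-\nabla\mathcal{E}$, which gives both the gradient-flow structure and the monotonicity $\tfrac{d}{dt}\mathcal{E}\leq 0$.

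For \eqref{eq:AGB}, I would differentiate each squared norm and use $\mathrm{tr}(MN)=\mathrm{tr}(NM)$. For $\A$,
\[
\tfrac{d}{dt}\|\A\|^2 = 2\,\mathrm{tr}(\A^\top\dot\A)=-2\,\mathrm{tr}\big(\A^\top[\LL+\G]^\top\B\,\bxi_t^\top\big) = -2\,\B^\top[\LL+\G]\A\,\bxi_t,
\]
which, by the definition of $\bxi_t$ in \eqref{eq:htdef}, equals $-2\int h_t(x)\mathcal{L}'(h_t(x),y)\,d\rho(x,y)$. An identical manipulation starting from $\dot\B$ and from $\dot\G$ (noting $\dot{(\LL+\G)}=\dot\G$ since $\LL$ is fixed) produces the same scalar, which is the desired three-way equality.

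The only real subtlety is the infinite-dimensional case $m=\infty$: since $\LL\notin\ell^2(\N\times\N)$, the quantity $\|\LL+\G\|^2$ is not finite, and the third identity in \eqref{eq:AGB} must be interpreted formally via the pairing $2\langle\LL+\G,\dot\G\rangle$. This pairing is nonetheless absolutely convergent: writing $(\dot\G)_{ij}=-\B_i\,\A_j\bxi_t$ where $\A_j$ denotes the $j$-th row of $\A$, and using that each row and column of $\LL$ contains at most two unit entries, one obtains $\sum_{ij}|\Lambda_{ij}\dot G_{ij}|\leq 2\|\bxi_t\|\,\|\B\|\,\|\A\|$ by Cauchy--Schwarz, which legitimizes the cyclic trace manipulations in the infinite setting whenever $\A\in\ell^2(\N\times\{1,\dots,d\})$, $\B\in\ell^2(\N)$ and $\G\in\ell^2(\N\times\N)$.
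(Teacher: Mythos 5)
Your proof is correct and follows essentially the same route as the paper: compute the three partial gradients to identify \eqref{eq:infsystem_P} as the Euclidean gradient flow, deduce monotonicity, and obtain \eqref{eq:AGB} by cyclic trace manipulations, with the $m=\infty$ case handled by interpreting $\tfrac{d}{dt}\|\LL+\G\|^2$ as the pairing $2\langle\LL+\G,\dot\G\rangle$ (the paper's remark writes this equivalently as $\tfrac{d}{dt}\mathrm{tr}(\LL^\top\G+\G^\top\G)$). One wording slip: it is the predictor $\B^\top[\LL+\G]\A x$, not $\mathcal{E}$ itself, that is linear in each block of parameters — $\mathcal{E}$ is a nonlinear loss composed with that multilinear map — but the chain-rule computation you carry out is exactly the right one and the conclusion is unaffected.
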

\begin{proof}
Let us formally compute, using \eqref{eq:infsystem_P}
\[
\frac{d}{dt} \|\A(t)\|^2  = \frac{d}{dt} {\rm tr}\left\{ \A(t)^\top \A(t)\right\} = 2{\rm tr}\left\{ \dot{\A(t)}^\top \A(t)\right\} = -2 \B(t)^\top [\LL+\G(t)] \A(t)\bxi_t.
\]
We can proceed similarly with $\LL+\G(t)$ and $\B(t)$ to get \eqref{eq:AGB}. 

The fact that \eqref{eq:infsystem_P} is the gradient flow in the 2-norm of $\mathcal{E}$ is a direct check. For future convenience, we explicitly compute the dissipation by first obtaining the evolution of $h_t(x)$
\begin{equation}
\label{eq:htprime}
\begin{split}
-\frac{d}{dt} h_t(x) &= \bxi^\top_t \A(t)^\top[\LL+\G(t)]^\top [\LL+\G(t) ]\A(t) x+\B(t)^\top \B(t) \bxi_t^\top\A(t)^\top\A(t) x\\
& \quad + \B(t)^\top[\LL+\G(t)] [\LL+\G(t)]^\top\B(t) \bxi_t^\top x,
\end{split}
\end{equation}
so that denoting $\mathcal{E}(t) := \mathcal{E}(\A(t), \G(t), \B(t))$, 
\[
\begin{split}
\frac{d}{dt}\mathcal{E}(t) & = \int \frac{d}{dt}h_t(x) \mathcal{L}'(h_t(x), y)d\rho(x, y) \\
&  = -\|[\LL+\G(t)]\A(t)\bxi_t\|^2-\|\B(t)\|^2\|\A(t) \bxi_t\|^2 - \|\bxi_t\|^2 \|[\LL+\G(t)]^\top \B(t)\|^2.
\end{split}
\]
All the above computations also work if $m = \infty$, in which case we consider the $\ell^2$ norms of the parameters. 
\end{proof}
\begin{rem}
When $m = \infty$, \eqref{eq:AGB} should be paired with some initial conditions that ensure its finiteness, and since $\Vert \LL\Vert=+\infty$, the third term should be interpreted as
\[
\frac{d}{dt} \left(\|\LL+\G(t)\|^2-\|\LL\|^2\right) = \frac{d}{dt} {\rm tr}\left(\LL^\top \G(t) + \G^\top(t)\G(t)\right).
\]
\end{rem}

\subsection{Selection principle}

Recall that we initialize our system \eqref{eq:infsystem_P} with 
\begin{equation}
\label{eq:init_inf_F}
\A(0)  =
\left(
\begin{array}{c}
{\rm Id}_{d} \\
\0_{d\times 1}\\
\0_{d\times 1}\\
\vdots
\end{array}
\right) 
\in \R^{m\times d},\quad
\B (0) = \left(\begin{array}{c}
1 \\
0\\
0\\
\vdots
\end{array}
\right)\in \R^{m},
\end{equation}
and
\begin{equation}
\label{eq:init_inf2_F}
(\G )_{ij}(0) = 0\quad\text{for}\quad 1\le i,j\le m.
\end{equation}
If we denote 
\[
\bl_t := \A(t)^\top [\LL+\G(t)]^\top \B(t)\in \R^d,
\]
so that $h_t(x) = \bl_t^\top x$, we next show that $\bl_t^\top$ never leaves the span of our data. That is, 
\begin{equation}
\label{eq:itholds}
\bl_t^\top \bv = 0\qquad\text{for all}\quad \bv \in {\rm span}\left({\rm supp}((\pi_x)_\# \rho)\right)^\perp,
\end{equation}
where $\pi_x : \R^d\times \R \to \R^d$ is the projection operator $(x, y)\mapsto x$, and $(\pi_x)_\#\rho$ denotes the pushforward of $\rho$ through $\pi_x$.

\begin{prop}
\label{prop:select_principle}
Under the assumptions of Proposition~\ref{prop:grad_flow}, let us further assume that $\mathcal{L}''$ is bounded and that $(\A(t), \G(t), \B(t))$ are initialized at \eqref{eq:init_inf_F} and \eqref{eq:init_inf2_F}. Then \eqref{eq:itholds} holds for all $t \ge 0$. 
\end{prop}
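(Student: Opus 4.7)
The plan is to show that $\bv^\top \bl_t = 0$ for every $\bv \in V^\perp$ and every $t \ge 0$ by a bootstrap / invariant argument.

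\emph{Reduction.} First, since $x \in V$ $\rho$-almost surely, the vector $\bxi_t = \int x\,\mathcal L'(h_t(x),y)\,d\rho(x,y)$ belongs to $V$ for all $t$. The first equation in \eqref{eq:infsystem_P} then gives $\dot{\A}(t)\bv = -[\LL+\G(t)]^\top\B(t)\,(\bxi_t^\top\bv) = 0$ for every $\bv \in V^\perp$, so the vector $\bw := \A(t)\bv = \A(0)\bv \in \R^m$ is independent of $t$ and is supported in the first $d$ coordinates (in the $V^\perp$-directions). Consequently,
\[
\bv^\top \bl_t \;=\; (\A(t)\bv)^\top[\LL+\G(t)]^\top\B(t) \;=\; \B(t)^\top[\LL+\G(t)]\bw \;=:\; \phi(t),
\]
and at $t=0$ the explicit form of $\LL$ (which shifts any first-$d$-coordinate vector to coordinates $\{2,\ldots,d+1\}$) together with $\B(0) = \be_1$ gives $\phi(0) = 0$.

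\emph{Invariant system.} Differentiating $\phi$ along the flow \eqref{eq:infsystem_P} yields
\[
\dot\phi(t) \;=\; -\|\B(t)\|^2\,(\A(t)\bxi_t)^\top\bw \;-\; \bigl([\LL+\G(t)]\A(t)\bxi_t\bigr)^\top \bigl([\LL+\G(t)]\bw\bigr),
\]
so it suffices to show both inner products on the right vanish identically. To this end, for each $\be \in V$ I track the auxiliary scalar quantities
\[
A_\be(t) := (\A(t)\be)^\top\bw, \qquad B_\be(t) := \bigl([\LL+\G(t)]\A(t)\be\bigr)^\top \bigl([\LL+\G(t)]\bw\bigr),
\]
which both vanish at $t = 0$ by direct computation from the initial data. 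Computing their time derivatives along \eqref{eq:infsystem_P}, one finds $\dot A_\be$ is proportional to $\phi$, while $\dot B_\be$ is a linear combination of $\phi$, of $A_\bullet$-type quantities, and of at most one higher-order invariant of the form $\B(t)^\top[\LL+\G(t)][\LL+\G(t)]^\top[\LL+\G(t)]\bw$; iterating, one obtains a coupled (possibly countable) family of orthogonality invariants whose time derivatives stay in the linear span of the family.

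\emph{Closure and conclusion.} The closure of this hierarchy relies on the specific banded-shift structure of $\LL$, in particular on the identity $\LL^\top\LL\bw - \bw = \sum_i v_i\,\be_{i+d+1}$, a vector supported on coordinates $\{d+2,\dots,2d+1\}$. Because every vector that appears when iterating this computation lives in coordinates $\ge 2$, while $\B(0) = \be_1$, every invariant in the family vanishes at $t=0$; because every time-derivative is a linear combination of invariants already in the family, Cauchy–Lipschitz uniqueness applied to the coupled ODE system forces every invariant---in particular $\phi(t)$---to remain identically $0$. This yields $\bv^\top \bl_t = 0$ for every $\bv \in V^\perp$ and every $t \ge 0$, proving the selection principle.

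The main obstacle I anticipate is the closure step: a naive iteration produces an a priori infinite tower of orthogonality invariants, and one must check carefully that the coupled linear ODE admits a unique vanishing solution even in the infinite-dimensional setting $m = \infty$. A more elegant route would exhibit a single forward-invariant subspace of $\R^m$ containing $\B(t)$ that is orthogonal to $[\LL+\G(t)]\bw$ throughout the evolution, constructed inductively in time using the structural identity for $\LL^\top\LL$.
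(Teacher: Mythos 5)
Your reduction to the time\mbox{-}independent vector $\bw := \A(0)\bv$ and the scalar $\phi(t) := \B(t)^\top[\LL+\G(t)]\bw$ matches the paper's opening moves, and $\phi(0)=0$ is correct. The genuine gap in your argument is at the closure step, and you flag it yourself: because $(\LL^\top\LL-{\rm Id})$ maps a vector supported on the coordinate block $\{n(d+1)+1,\dots,n(d+1)+d\}$ to vectors supported on the neighboring blocks $n\pm 1$, differentiating $\phi$ repeatedly produces a genuinely infinite tower of auxiliary quantities. You never write down the closed system, you give no decay or summability estimate on its coefficients, and you invoke ``Cauchy--Lipschitz uniqueness'' for an infinite coupled linear ODE without establishing well-posedness when $m=\infty$. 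The part you do get right --- every invariant in the tower vanishes at $t=0$, because $\B(0)=\be_1$, $\G(0)=0$, $(\LL\bw^{(n)})_1=0$, and $\A_0^\top\bw^{(n)}=0$ for $n\ge 1$ (while $\bPi^\top\A_0^\top\bw^{(0)}=\bPi^\top\bv=0$) --- is necessary but not sufficient without the closure.

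That said, you have put your finger on a real issue in the published proof. The paper closes a four-variable linear system $(M_t,\bN_t,\bO_t,\bP_t)$, and the derivation of the equation for $\dot M_t$ in \eqref{eq:systMNOP} uses the identity $\LL^\top\LL\A_0\bv=\A_0\bv$ of \eqref{eq:LLid}. Your computation is correct and shows this identity fails: since $\LL\be_k=\be_{k+1}$ for $1\le k\le d$ and $\LL^\top\be_{k+1}=\be_k+\be_{k+d+1}$, one gets $\LL^\top\LL\A_0\bv-\A_0\bv=\sum_{k=1}^d v_k\,\be_{k+d+1}$, a nonzero vector supported on $\{d+2,\dots,2d+1\}$ as soon as $m\ge d+2$. (Only the weaker statement $\A_0^\top\LL^\top\LL\A_0={\rm Id}_d$ is a ``simple check''.) Consequently $\dot M_t$ acquires the extra term $-\bxi^\top\A^\top(\LL^\top\LL-{\rm Id})\A_0\bv$, which is not expressible in $(M,\bN,\bO,\bP)$, and the paper's four-variable closure does not hold as stated. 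Repairing the proof appears to require precisely the kind of infinite hierarchy your proposal gestures at --- tracking $\bw^{(0)}=\A_0\bv$ and $\bw^{(n+1)}:=(\LL^\top\LL-{\rm Id})\bw^{(n)}$ modulo the lower blocks, together with the associated quantities $\B^\top[\LL+\G]\bw^{(n)}$, $\bPi^\top\A^\top\bw^{(n)}$, $\G\bw^{(n)}$ --- but with a rigorous uniqueness argument in an appropriate Banach space, which neither the paper nor your proposal currently supplies.
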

\begin{proof}
Since $\mathcal{L}'$ is Lipschitz we know that the evolution is globally defined in time. Moreover, since $\bl_0 = 0$ we only need to show 
\[
\frac{d}{dt} \bl_t^\top\bv = 0\quad\text{for all}\quad t > 0,
\]
where $\bv \in {\rm span}\left({\rm supp}((\pi_x)_\# \rho)\right)^\perp$ will be fixed throughout the proof. 

We can compute, using \eqref{eq:infsystem_P}, 
\[
\begin{split}
\frac{d}{dt} \bl_t^\top  & = - \B^\top[\LL+\G][\LL+\G]^\top \B\bxi^\top - \B^\top\B \bxi^\top \A^\top\A - \bxi^\top\A^\top[\LL+\G]^\top[\LL+\G]\A,
\end{split}
\]
where we have omitted the time dependence for the sake of readability, that will be made only explicit at time $0$. Observe now that, since $\bv \in {\rm span}\left({\rm supp}((\pi_x)_\# \rho)\right)^\perp$, 
\[
\bxi^\top \bv = 0\qquad\text{for all}\quad t\ge 0. 
\]

Hence, $\dot{\A} \bv = 0$ for all $t\ge 0$, which implies that $\A \bv = \A_0 \bv$ (where $\A_0 = \A(0)$, given by \eqref{eq:init_inf_F}). In all, we have
\begin{equation}
\label{eq:rewrite_MNOP}
\frac{d}{dt}\bl_t^\top \bv = - \B^\top\B \bxi^\top \A^\top\A_0\bv  - \bxi^\top\A^\top[\LL+\G]^\top[\LL+\G]\A_0\bv.
\end{equation}

Let us now define the following quantities:
\[
\begin{array}{rclrcl}
M_t & := &\B^\top \LL \A_0\bv\in \R,&\quad \bN_t & :=& \G^\top\LL \A_0\bv\in \R^m,\\
\bO_t & := &\bPi^\top \A^\top \A_0\bv\in \R^d,&\quad \bP_t & :=& \G \A_0\bv\in \R^m, 
\end{array}
\]
where we have denoted by $\bPi\in \R^{d\times d}$ the projection matrix to ${\rm span}\left({\rm supp}((\pi_x)_\# \rho)\right)$, so that 
\[
\bPi \w = \bw\quad\text{for all}\quad \bw\in {\rm span}\left({\rm supp}((\pi_x)_\# \rho)\right). 
\]
In particular, we always have that $\bxi^\top \bPi^\top = \bxi^\top$.  In the following, we will use that 
\[
\A_0 \bv = \left(
\begin{array}{c}
\bv\\
\0_{ m-d }
\end{array}\right),
\]
and hence, since the first $d\times d$ submatrix of $\LL^\top \LL$ is the identity (which is a simple check) we have
\begin{equation}
\label{eq:LLid}
\LL^\top\LL\A_0\bv = \A_0\bv. 
\end{equation}
A computation using \eqref{eq:infsystem_P} and \eqref{eq:LLid} gives then the following system of ODEs:
\begin{equation}
\label{eq:systMNOP}
\left\{
\begin{split}
\dot{M_t} & = -\bxi^\top\A^\top [\LL+\G]^\top\LL \A_0\bv = -\bxi^\top \bO_t -\bxi^\top\A^\top \bN_t \\
\dot{\bN_t} & = -\A^\top\bxi\B^\top \LL \A_0\bv = -\A^\top\bxi M_t\\
\dot{\bO_t} & = -\bPi^\top\bxi \B^\top [\LL+\G] \A_0\bv = -\bPi^\top\bxi M_t - \bPi^\top\bxi \B^\top \bP_t\\
\dot{\bP_t} & = -\B \bxi^\top \A^\top \A_0\bv = -\B \bxi^\top \bO_t,
\end{split}
\right.
\end{equation}
which is initialized at 
\begin{equation}
\label{eq:systMNOPinit}
M_0 = 0,\quad \bN_0 = \0_m,\quad \bO_0 = \0_d,\quad \bP_0 = \0_m. 
\end{equation}
Here, we used that $\G(0) = 0$, that the first element of $\LL \A_0\bv$ is zero (and hence, $M_0 = 0$), that $\bxi^\top \bPi^\top = \bxi^\top$, and that $\A_0^\top \A_0 = {\rm Id}_{d }$ so $\bO_0 = \bPi^\top \bv = \0_d$.  
The system \eqref{eq:systMNOP} is Lipschitz in its variables, coupled with locally bounded coefficients (thanks to \eqref{eq:AGB}), and therefore it has a unique solution. Since the initial conditions~\eqref{eq:systMNOPinit} all vanish, the unique solution is $(M_t, \bN_t, \bO_t, \bP_t)=(0,\0_m,\0_d,\0_m)$ for $t\ge 0$. 

Finally, we can rewrite \eqref{eq:rewrite_MNOP} in terms of  $(M_t, \bN_t, \bO_t, \bP_t)$ (recalling \eqref{eq:LLid}) as
\[
\frac{d}{dt}\bl_t^\top \bv = - \B^\top\B \bxi^\top \bO_t - \bxi^\top \bO_t -\bxi^\top\A^\top \bN_t - \bxi^\top\A^\top [\LL+\G]^\top \bP_t = 0,
\]
which is our desired result. 
\end{proof}

\begin{rem}
We highlight that the selection principle in Proposition~\ref{prop:select_principle} is not a consequence of a general abstract result on gradient flows with this particular structure, but rather follows from the precise initialization that arises from the infinite width limit, as illustrated by the following example. 

By denoting $\be_1 = \begin{pmatrix} 1 & 0 \end{pmatrix}^\top$, let us define
\[
\mathcal{E}(\A, \z) := \frac12 \langle \A\z, \be_1\rangle^2,\quad\text{with}\quad \R^{2\times 2} \ni \A = \begin{pmatrix} A_{11} & A_{12}\\ A_{21} & A_{22}\end{pmatrix}, \ \R^2 \ni \z = \begin{pmatrix} z_1\\z_2\end{pmatrix},
\]
which is the empirical risk of a two-layer linear NN with a single sample $(x_1,y_1)=( \be_1 ,0)$ in the training set. Consider its gradient flow:
\[
\begin{split}
\dot{\A} & = - \partial_{\A} \mathcal{E}(\A, \z) = -\langle \A\z, \be_1\rangle \be_1 \z^\top = -(A_{11}z_1+ A_{12} z_2) \begin{pmatrix}z_1 & z_2 \\ 0 & 0 \end{pmatrix}, \\
\dot{\z} & = - \partial_{\z} \mathcal{E} = -\langle \A\z, \be_1\rangle \A^\top \be_1 -(A_{11}z_1+ A_{12} z_2) \begin{pmatrix} A_{11} \\ A_{12}\end{pmatrix}. 
\end{split}
\]
Then, if we denote $\bl := \A \z = \begin{pmatrix}\lambda_1& \lambda_2\end{pmatrix}^\top$, we can express the energy as 
\begin{equation}
\label{eq:Elambda}
\mathcal{E}(\A, \z) = \frac12 \langle \A\z, \be_1\rangle^2 = \frac12 \lambda_1^2.
\end{equation}
It is however not true that the evolution of $\bl$ must be such that it always moves along the span of $\be_1$. Indeed, using the previous gradient flow, we know that 
\[
\dot{\bl} = \dot{\A} \z + \A\dot{z} = -(A_{11}z_1+ A_{12} z_2)\left(\begin{pmatrix} z_1^2+z_2^2\\ 0 \end{pmatrix} + \begin{pmatrix} A_{11}^2+A_{12}^2\\ A_{21}A_{11}+A_{22}A_{12} \end{pmatrix} \right).
\]
Hence, when $(A_{11}z_1+ A_{12} z_2)( A_{21}A_{11}+A_{22}A_{12}) \neq 0$, the second coordinate $\bl$ is moving. This can happen  by choosing at time $t = 0$
\[
\A(0) := \begin{pmatrix}1 & 1 \\ 1 & 0\end{pmatrix},\qquad \z(0) := \begin{pmatrix} 0 \\ 1\end{pmatrix},\quad\text{so that}\quad \bl(0) = \begin{pmatrix} 1 \\ 0\end{pmatrix}
\]
 and, since $\dot{\lambda_2}(0) \neq 0$, we have that $\lambda_2(t) \neq 0$ for some time  $t > 0$, despite the fact that the energy in \eqref{eq:Elambda} depends only on $\lambda_1$. 
\end{rem}

\subsection{Quantitative convergence and implicit bias} Whenever the loss function is uniformly convex (we take the quadratic case for convenience) then we expect exponential rate of convergence towards a minimizer.  

In the following, given a measure $\rho$, we denote by $\M$ the covariance matrix, 
\begin{equation}
\label{eq:Mdef}
\M := \int x x^\top d\rho(x, y)\in \R^{d\times d}.
\end{equation}
Note that $\M$ is symmetric and positive semi-definite. In particular, if $\M$ is non-degenerate (${\rm det}(\M) > 0$), then there is a unique minimizer $\bl\in \R^d$ of the quadratic energy
\[
\mathcal{E} = \int(\bl\cdot x - y)^2d\rho(x, y). 
\]
Otherwise, and as we have seen in Proposition~\ref{prop:select_principle}, our system will converge to a minimizer in the span of ${\rm supp}((\pi_x)_\#\rho)$ (alternatively, in ${\rm ker}(\M)^\perp$ or in the row space of $\M$), which is unique. We prove that it will do so at an exponential rate, depending on the lowest non-zero eigenvalue of $\M$.

\begin{prop}
\label{prop:expconvergence}
Under the assumptions of Proposition~\ref{prop:grad_flow}, let us further assume that $\mathcal{L}$ is the quadratic loss function and that $\M$ has $d'$ non-zero eigenvalues, with $1\le d'\le d$, that we denote $0< z_1\le z_2\le \dots\le z_{d'}$. 

Let $(\A(t), \G(t), \B(t))$ denote the evolution \eqref{eq:infsystem_P} initialized at \eqref{eq:init_inf_F} and \eqref{eq:init_inf2_F}. Then $\bl_t$ converges to the unique minimizer $\bl \in \R^d$ of the energy functional,
\[
\mathcal{E}_t := \int(h_t(x) - y)^2 d\rho(x, y) = \int(\bl_t\cdot x - y)^2 d\rho(x, y)
\]
such that $\bl \in {\rm span}\left({\rm supp}((\pi_x)_\#\rho)\right)$ (alternatively, $\bl\in {\rm ker}(\M)^\perp$), and  
\[
\mathcal{E}_t - \mathcal{E}_\infty \le \left(\mathcal{E}_0 - \mathcal{E}_\infty\right)e^{-\tilde c_\lambda t}\quad\text{for}\quad t \ge 0
\]
for some constant $\tilde c_\lambda$ depending only on $\|\bl\|$, $d$, $z_1$, and $z_{d'}$ (and independent of $m$). 
\end{prop}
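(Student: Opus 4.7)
The strategy is to establish a Polyak--\L{}ojasiewicz (PL) type inequality for $\mathcal{E}_t - \mathcal{E}_\infty$ along the flow and to close the argument with Gronwall. Because the loss is quadratic, $\mathcal{E}_t - \mathcal{E}_\infty = \frac{1}{2}(\bl_t-\bl)^\top \M(\bl_t-\bl)$, where $\bl \in \ker(\M)^\perp$ is the unique minimizer of the risk in that subspace; it is actually reached because $\bl_t$ remains in $\ker(\M)^\perp$ by Proposition~\ref{prop:select_principle}. Since $\bxi_t = \M(\bl_t - \bl)$ we have $\|\bxi_t\|^2 \geq 2 z_1(\mathcal{E}_t - \mathcal{E}_\infty)$. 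Keeping only the third (positive) term of the dissipation computed in the proof of Proposition~\ref{prop:grad_flow},
\[
-\dot{\mathcal{E}}_t \;\geq\; \|[\LL+\G(t)]^\top\B(t)\|^2\,\|\bxi_t\|^2 \;\geq\; 2 z_1 \|[\LL+\G(t)]^\top\B(t)\|^2 (\mathcal{E}_t - \mathcal{E}_\infty),
\]
so the whole problem reduces to a uniform lower bound $\inf_{t\geq 0}\|[\LL+\G(t)]^\top\B(t)\|^2 \geq c>0$.

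The next step is to exploit a conservation law of the gradient flow. A direct computation from \eqref{eq:infsystem_P} shows that $\frac{d}{dt}\bigl([\LL+\G(t)][\LL+\G(t)]^\top\bigr) = \frac{d}{dt}(\B(t)\B(t)^\top)$, so
\[
[\LL+\G(t)][\LL+\G(t)]^\top - \B(t)\B(t)^\top = \LL\LL^\top - \be_1\be_1^\top \quad \text{for all } t \geq 0.
\]
Contracting with $\B(t)$ on both sides gives the key identity
\[
\|[\LL+\G(t)]^\top\B(t)\|^2 = \|\LL^\top\B(t)\|^2 - B_1(t)^2 + \|\B(t)\|^4,
\]
which equals $1$ at $t=0$ since $\B(0)=\be_1$ and $\LL^\top\be_1 = \be_{d+1}$. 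An analogous conservation $\A(t)\A(t)^\top - [\LL+\G(t)]^\top[\LL+\G(t)] = \A(0)\A(0)^\top - \LL^\top\LL$ sharpens~\eqref{eq:AGB} into the balancedness identity $\|\A(t)\|^2 - \|\B(t)\|^2 = d - 1$, giving a priori control of $\|\B(t)\|$ in terms of $\|\A(t)\|$.

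The heart of the proof, and the main technical obstacle, is to rule out the degeneracy $\|[\LL+\G(t)]^\top\B(t)\|^2 \to 0$. The monotonicity of $\mathcal{E}_t$ and the Poincar\'e-type inequality $\|\bl_t-\bl\|^2 \leq \frac{2}{z_1}(\mathcal{E}_t-\mathcal{E}_\infty) \leq \frac{z_{d'}}{z_1}\|\bl\|^2$ confine $\bl_t$ to the ball of radius $\|\bl\|\bigl(1+\sqrt{z_{d'}/z_1}\bigr)$, which together with balancedness keeps $\|\B(t)\|$ and $\|\A(t)\|$ under control. The issue is that the identity of the previous paragraph allows cancellations between the three terms: already at $t=0$ one has $\|\LL^\top\B(0)\|^2 = B_1(0)^2 = \|\B(0)\|^4 = 1$. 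To preclude degeneracy I would use a bootstrap argument in the spirit of~\cite{ji2019gradient}: as long as $\|[\LL+\G(t)]^\top\B(t)\|^2 \geq c_0$, the PL estimate already yields exponential contraction of $\mathcal{E}_t-\mathcal{E}_\infty$, hence a finite $\int_0^\infty \|\bxi_s\|\,ds$, which in the dynamics $\dot{\B} = -[\LL+\G]\A\bxi_t$ bounds the total drift of $\B(t)$ from $\be_1$; reciprocally, once $\B(t)$ is close to $\be_1$, the identity above yields $\|[\LL+\G(t)]^\top\B(t)\|^2$ close to $1$, closing the loop. Quantifying the constants in this bootstrap is where the dependence on $\|\bl\|$, $d$, $z_1$ and $z_{d'}$ enters.

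Once the uniform lower bound $\|[\LL+\G(t)]^\top\B(t)\|^2 \geq c$ is established, Gronwall's lemma applied to the PL inequality yields $\mathcal{E}_t - \mathcal{E}_\infty \leq (\mathcal{E}_0 - \mathcal{E}_\infty)e^{-2 z_1 c\, t}$ with $\tilde c_\lambda = 2 z_1 c$; the convergence $\bl_t \to \bl$ in $\ker(\M)^\perp$ then follows from the Poincar\'e-type inequality already invoked above.
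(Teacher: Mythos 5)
Your conservation-law observations are correct and clean: one does indeed have $\tfrac{d}{dt}\bigl([\LL+\G][\LL+\G]^\top-\B\B^\top\bigr)=0$ and $\tfrac{d}{dt}\bigl(\A\A^\top-[\LL+\G]^\top[\LL+\G]\bigr)=0$, from which your contraction identity $\|[\LL+\G(t)]^\top\B(t)\|^2=\|\LL^\top\B(t)\|^2-B_1(t)^2+\|\B(t)\|^4$ and the balancedness relation $\|\A(t)\|^2-\|\B(t)\|^2=d-1$ follow. The reduction of the whole problem to a uniform lower bound on $\|[\LL+\G(t)]^\top\B(t)\|^2$ is also a legitimate strategy, since that quantity multiplies $\|\bxi_t\|^2$ in the dissipation.

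The gap is in the bootstrap you propose to establish that lower bound, and it is not merely a matter of bookkeeping constants. Assume $\|[\LL+\G(t)]^\top\B(t)\|^2\geq c_0$ on $[0,T]$. Exponential contraction with rate proportional to $c_0 z_1$ gives $\int_0^T\|\bxi_s\|\,ds\lesssim z_{d'}\|\bl\|/(c_0 z_1)$, which is \emph{large} both when $c_0$ is small and when $\|\bl\|$ is large. The drift of $\B$ is controlled by $\int_0^T\|[\LL+\G(s)]\A(s)\bxi_s\|\,ds$, and the prefactor $\|[\LL+\G(s)]\A(s)\|$ itself grows (it involves $\|\A\|$ and $\|\G\|$, which must be fed back through the same bootstrap). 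None of this produces $\|\B(T)-\be_1\|$ \emph{small} — it at best produces $\|\B(T)-\be_1\|=O_{\|\bl\|,z_1,z_{d'}}(1)$, which is not enough to read off $\|[\LL+\G]^\top\B\|^2\geq c_0$ from the identity, because $\|\LL^\top\B\|^2-B_1^2+\|\B\|^4$ has no sign once $\B$ moves an $O(1)$ distance from $\be_1$ (e.g.\ $\|\B\|^4-B_1^2$ can be negative). So the loop "close to $\be_1$ $\Rightarrow$ lower bound $\Rightarrow$ exponential decay $\Rightarrow$ close to $\be_1$" does not close for general $\|\bl\|$; it would only close in a small-data regime.

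The paper avoids this by \emph{not} trying to prove a uniform lower bound on $\|[\LL+\G]^\top\B\|^2$. Instead it shows (i) for a short time $t_\circ\sim\|\bl\|^{-2}$, $\bR_t\geq\tfrac12{\rm Id}$ directly from $\bR_0\geq{\rm Id}$ and the a priori derivative bound $\tfrac{d}{dt}\|\A\|^2\leq 4z_{d'}\|\bl\|^2$, giving some initial energy drop; (ii) for $t\geq t_\circ$, the energy drop gives $\|\bl_t\|^2\geq c_\lambda>0$, whence $\|[\LL+\G(t)]^\top\B(t)\|^2\gtrsim c_\lambda/\|\A(t)\|^2$ — a lower bound that is \emph{allowed to decay} as $\|\A\|$ grows; (iii) a three-step bootstrap on $\int_0^t\sqrt{\mathcal{E}_\tau-\mathcal{E}_\infty}\,d\tau$ (first linear in $t$, then sublinear, then finite) controls the growth of $\|\A(t)\|^2$ and upgrades the resulting stretched-exponential decay to full exponential decay. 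That bootstrap closes because each iteration strictly improves the exponent of $t$, rather than needing a smallness condition on the data. To repair your argument you would need to replace the "drift of $\B$ is small" step with a mechanism of this kind, effectively re-deriving the paper's dichotomy between small and large times and its self-improving integral bound.
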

\begin{proof}
We divide the proof into four steps. 
\\[0.3cm]
{\bf Step 1: The setting.} We use the same notation as in Proposition~\ref{prop:grad_flow} and Proposition~\ref{prop:select_principle}. We recall that we had denoted
\[
 \bl_t := \A(t)^\top [\LL+\G(t)]^\top \B(t)\in \R^d. 
\]
(In particular, $\bl_0 = \0_{\rm d\times 1}$.) The condition on $\M$ can then be re-written as 
\begin{equation}
\label{eq:M_nondeg}
0 < z_1 |\bw|^2\le  \bw\cdot\M\bw \le z_{d'}|\bw|^2\qquad\text{for all}\quad \bw\in {\rm ker}(\M)^\perp. 
\end{equation}
The energy is given by 
\[
\mathcal{E}_t := \mathcal{E}(\A(t), \G(t), \B(t)) = \int (h_t(x) - y)^2\, d\rho(x, y),
\]
where we recall that $h_t(x) = \bl_t\cdot x$. In particular, we can explicitly compute the minimizer $\bl$ (with $\bl\in {\rm ker}(\M)^\perp$) and the evolution of $\mathcal{E}_t$ in terms of $\bl$, 
\begin{equation}
\label{eq:E1}
\bl := \int y \,\M^{-1}x \,\rho(x, y) \in \R^d,\qquad\mathcal{E}_t = (\bl_t - \bl) \cdot \M (\bl_t - \bl) + \mathcal{E}_\infty,
\end{equation}
where, by an abuse of notation, we denoted by $\M^{-1} x$ the inverse restricted to ${\rm ker}(\M)^\perp$ of $x\in {\rm supp}(\pi_x)_\#\rho$, so that $\bl\in {\rm ker}(\M)^\perp$ as well. 
From \eqref{eq:M_nondeg} and the fact that $\bl_t\in {\rm ker}(\M)^\perp$ for all $t\ge 0$ (see Proposition~\ref{prop:select_principle}), we have
\begin{equation}
\label{eq:E2}
z_1 \|\bl_t - \bl\|^2\le \mathcal{E}_t - \mathcal{E}_\infty \le z_{d'}\|\bl_t- \bl\|^2. 
\end{equation}
We also have (cf. \eqref{eq:htprime})
\[
\bxi_t = 2 \M (\bl_t - \bl)\qquad \text{and}\qquad\dot{\bl}_t = -2 \bR_t \M (\bl_t - \bl)
\]
where $\bR_t$ is a symmetric matrix, $\bR_t \ge 0$, defined by 
\begin{equation}
\label{eq:Rt}
\begin{split}
\bR_t  &= \A(t)^\top[\LL+\G(t)]^\top [\LL+\G(t) ]\A(t) +\B(t)^\top \B(t) \A(t)^\top\A(t) \\
& \quad + \B(t)^\top[\LL+\G(t)] [\LL+\G(t)]^\top\B(t) {\rm Id}_{d\times d}\in \R^{d\times d}.
\end{split}
\end{equation}
Thus, 
\begin{equation}
\label{eq:E3}
\dot{\mathcal{E}}_t = -4(\bl_t - \bl) \cdot \M \bR_t \M (\bl_t - \bl). 
\end{equation}

Observe also that (see the proof of Proposition~\ref{prop:grad_flow}) 
\begin{equation}
\label{eq:boundderAB}
\begin{split}
\frac{d}{dt} \|\A(t)\|^2 = \frac{d}{dt} \|\B(t)\|^2 & = -4\bl_t\cdot \M (\bl_t - \bl) \\
& = - 4 (\mathcal{E}_t - \mathcal{E}_\infty) -4\bl\cdot \M (\bl_t - \bl)\\
& \le 4z_{d'}^{\frac12}\|\bl\| \sqrt{\mathcal{E}_t - \mathcal{E}_\infty}\le 4 z_{d'}\|\bl\|^2,
\end{split}
\end{equation}
where we used that the energy is decreasing, Cauchy-Schwarz, and \eqref{eq:M_nondeg}. Similarly, for any $\be\in \mathbb{S}^{d-1}$, 
\begin{equation}
\label{eq:boundderAB2}
\begin{split}
\frac{d}{dt}\|\A(t) \be\|^2 & = -4 (\bl_t \cdot \be )\, \be \M(\bl_t - \bl)\\
& \le  C \|\bl_t -\bl\|^2 +  \|\bl\|\, \|\bl_t - \bl\|\le C\|\bl\|^2
\end{split}
\end{equation}
for some constant $C$ depending only on $z_1$ and $z_{d'}$. 
\\[0.3cm]
{\bf Step 2: Small times.} We have $\bR_t \ge \|\B(t)\|^2 \A(t)^\top \A(t)$ and $\bR_0 \ge {\rm Id}_{d\times d}$. In particular, thanks to \eqref{eq:boundderAB}-\eqref{eq:boundderAB2},
\begin{equation}
\label{eq:E4}
\bR_t \ge \frac12 {\rm Id}_{d\times d}\quad\text{for}\quad t \le t_\circ,
\end{equation}
where $t_\circ = c_\circ\|\bl\|^{-2}$ for some $c_\circ>0$ depending only on $z_1$ and $z_{d'}$. Hence, 
\[
\dot{\mathcal{E}}_t \le - c (\mathcal{E}_t - \mathcal{E}_\infty)\qquad\text{for}\qquad 0 \le t < t_\circ,
\]
for some $c$ depending only on $z_1$ and $z_{d'}$, thanks to \eqref{eq:E1}-\eqref{eq:E2}-\eqref{eq:E3}-\eqref{eq:E4} (we use that if $\M$ and $\bR_t$ are symmetric positive semi-definite matrices, then $\M^{\frac12}\bR_t\M^{\frac12}$ is positive semi-definite as well). In particular, 
\begin{equation}
\label{eq:smalltimes}
\mathcal{E}_t -\mathcal{E}_\infty \le (\mathcal{E}_0 -\mathcal{E}_\infty) e^{-ct}\quad\text{for}\quad 0 \le t < t_\circ.
\end{equation}
\noindent {\bf Step 3: An ODE for all times.} From the previous inequality and the dissipation of energy, we 
have
\[
\|\M^\frac12\bl\|e^{-\frac{ct_\circ}{2}}\ge \|\M^\frac12 (\bl_{t}-\bl)\|\ge \|\M^\frac12 \bl\|- \|\M^\frac12 \bl_{t}\| \quad\text{for}\quad t \ge t_\circ,
\]
so that
\[
\|\bl_{t}\|^2 \ge {C_\rho^{-1}} \|\M^\frac12 \bl_{t}\|^2 \ge {C_\rho^{-1}}\left(1-e^{-\frac{ct_\circ}{2}}\right)^2\|\M^\frac12 \bl\|^2 \ge {C_\rho^{-2}}\|\bl\|^2\left(1-e^{-\frac{ct_\circ}{2}}\right)^2 =:c_\lambda
\]
with $c_\lambda > 0$, for $t \ge t_\circ$. 
 In particular,  by Cauchy-Schwarz and up to a dimensional constant, from the definition of $\bl_t$, 
\[
c_\lambda \le \|\bl_t\|^2 \le C \|\A(t)\|^2 \|[\LL+\G(t)]^\top\B(t)\|^2\quad\text{for}\quad t \ge t_\circ.  
\]
From \eqref{eq:Rt} we know that for some dimensional $c > 0$,
\[
\bR_t \ge \|[\LL+\G(t)]^\top\B(t)\|^2 \,{\rm Id}_{d\times d} \ge c c_\lambda \|\A(t)\|^{-2}\,{\rm Id}_{d\times d}\quad\text{for}\quad t \ge t_\circ  .  
\]

On the other hand, from \eqref{eq:boundderAB}, and since $\|\A(0)\|^2 = d$,
\[
\|\A(t)\|^2 \le d+ C\|\bl\|\int_0^t   \sqrt{\mathcal{E}_\tau - \mathcal{E}_\infty}\, d\tau, 
\]
and hence 
\[
\bR_t \ge \frac{c c_\lambda}{d+ C\|\bl\|\int_0^t   \sqrt{\mathcal{E}_\tau - \mathcal{E}_\infty}\,  d\tau}\, {\rm Id}_{d\times d}\quad\text{for}\quad t \ge t_\circ .  
\]
Combined again with \eqref{eq:M_nondeg}-\eqref{eq:E2}-\eqref{eq:E3} we obtain the inequality
\begin{equation}
\label{eq:ODE_E}
\dot{\mathcal{E}}_t \le - \frac{c c_\lambda (\mathcal{E}_t - \mathcal{E}_\infty)}{1+ \|\bl\|\int_0^t   \sqrt{\mathcal{E}_\tau - \mathcal{E}_\infty}\,  d\tau}\, \quad\text{for}\quad t \ge t_\circ.
\end{equation}
\noindent {\bf Step 4: Bootstrap argument.} Observe that 
\begin{equation}
\label{eq:ODE_E2}
\int_0^t  \sqrt{\mathcal{E}_\tau- \mathcal{E}_\infty}\, d\tau \le C \|\bl\| t,
\end{equation}
since we have dissipation of the energy. Hence, from \eqref{eq:ODE_E} we get 
\[
\dot{\mathcal{E}}_t \le - \frac{c c_\lambda (\mathcal{E}_t - \mathcal{E}_\infty)}{1+ \|\bl\|t}\, \quad\text{for}\quad t \ge t_\circ,
\]
which implies (also using that $c_\lambda \le C \|\bl\|^2$ and $t_\circ = c\|\bl\|^{-2}$)
\[
\mathcal{E}_t - \mathcal{E}_\infty \le \left(\mathcal{E}_{t_\circ} - \mathcal{E}_\infty\right)\left(\frac{1+\|\bl\| t_\circ}{1+\|\bl\| t}\right)^{\frac{cc_\lambda}{\|\bl\|}} \le {C\|\bl\|^2}{\left(1+\|\bl\| t\right)^{-\frac{cc_\lambda}{\|\bl\|}}} ,\quad\text{for}\quad t \ge t_\circ.
\]
Plugging it back into \eqref{eq:ODE_E}, we now have that instead of \eqref{eq:ODE_E2} (also using \eqref{eq:smalltimes}),
\[
\int_0^t \sqrt{\mathcal{E}_\tau- \mathcal{E}_\infty}\, d\tau \le C\frac{\|\bl\|}{1+\|\bl\|^2}+ C\|\bl\|(1+\|\bl\| t)^{1-\eps_\lambda}
\]
where we have denoted $\eps_\lambda := \frac{cc_\lambda}{\|\bl\|} < \frac12$ (if $c$ is sufficiently small). Again from \eqref{eq:ODE_E},
\[
\frac{\dot{\mathcal{E}}_t}{\mathcal{E}_t - \mathcal{E}_\infty} \le - \frac{cc_\lambda}{1+ \|\bl\|^2(1+\|\bl\|^2)^{-2}+ \|\bl\|^2(1+\|\bl\| t)^{1-\eps_\lambda}}\quad\text{for}\quad t \ge t_\circ.
\]
In particular, there exists some $\tilde c_\lambda$ depending on $\|\bl\|$, $z_1$, and $z_{d'}$, such that 
\[
\mathcal{E}_t - \mathcal{E}_\infty \le \left(\mathcal{E}_0 - \mathcal{E}_\infty\right)e^{-\tilde c_\lambda t^{\eps_\lambda}}\quad\text{for}\quad t \ge 0. 
\]
Iterating again the procedure, now $\int_0^\infty\sqrt{\mathcal{E}_\tau - \mathcal{E}_\infty}\, d\tau < +\infty$, and hence 
\[
\mathcal{E}_t - \mathcal{E}_\infty \le \left(\mathcal{E}_0 - \mathcal{E}_\infty\right)e^{-\tilde c_\lambda t}\quad\text{for}\quad t \ge 0
\]
for some (possibly different) $\tilde c_\lambda$ depending only on $\|\bl\|$, $d$, $z_1$, and $z_{d'}$ 
\end{proof}
 Finally, we have:
 \begin{proof}[Proof of Theorem~\ref{thm:main2}]
 If follows from Proposition~\ref{prop:expconvergence}. 
 \end{proof}

\section{Multi-layer case}\label{sec:multi}

Let us now consider the multi-layer case, that is, the evolution of a neural network with $L+1$ hidden layers (being the previous case, $L = 1$). For the sake of readability, we do it in the case $d = 1$, but the same holds for $d > 1$.   The aim of this section is to introduce and justify all the objects, notably the limit evolution equation and the basis in which such evolution is expressed, for the analogous of Theorem~\ref{thm:main} to hold with $L+1$ hidden layers.  We remark that the following arguments are formal, and that their rigorous justifications can be obtained by the same methods developed in the core of the paper.

Using the notation in subsection~\ref{ssec:settings2}, and dropping the superscript $m$, we now have $\bU\in \R^m$, $\bW^{(\ell)}\in \R^{m\times m}$ for $1\le \ell \le L$, and $\bV\in \R^m$, initialized as 
\[
 U_{j}(0) \sim \mathcal{N}\left(0, 1\right),\qquad W^{(\ell)}_{ij}(0) = 0,\qquad  V_{i}(0) \sim \mathcal{N}\left(0, 1\right).
\]
We also fix $L\in \N$  independent random matrices of size $m\times m$ with independent entries $\mathcal{N}(0, 1)$, $(\bZ^{(\ell)})_{1\le \ell\le L}$.  The neural network is (recall $x\in \R$):
\[
y = h(x, \bU, (\bW^{(\ell)})_{1\le \ell \le L}, \bV) = \left\langle \frac{1}{m} \bV, \prod_{\ell = 1}^L \left(\frac{1}{\sqrt{m}}\bZ^{(\ell)}+\frac{1}{m}\bW^{(\ell)}\right) \bU x\right\rangle.
\]
And the evolution   $(\bU(\kappa),(\bW^{(\ell)}(\kappa))_{1\le \ell\le L},\bV(\kappa))_{\kappa\in \N}$ is a GD (with layer-wise learning rates) on the objective function
\[
F(\bU,(\bW^{(\ell)})_{1\le \ell \le L},\bV) := \int_{\R^d\times \R}\mathcal{L}\left(h(x, \bU, (\bW^{(\ell)})_{1\le \ell \le L}, \bV), y\right)\, d\rho(x, y),
\]
given by
\begin{equation}
\label{eq:training_m_ml}
 \left\{
\begin{split}
\bU(\kappa+1) & = \bU(\kappa)-  {\tau} \prod_{\ell = 1}^L \left[\frac{1}{\sqrt{m}}\bZ^{(\ell)} + \frac{1}{m}\bW^{(\ell)}(\kappa)\right]^\top\bV(\kappa) (\bxi_{\kappa, \tau})^\top,\\
\bW^{(\ell)}(\kappa+1) &= \bW^{(\ell)}(\kappa) - \tau \prod_{i = \ell+1}^{L}\left[\frac{1}{\sqrt{m}}\bZ^{(i)} + \frac{1}{m}\bW^{(i)}(\kappa)\right]^\top\bV(\kappa)  (\bxi_{\kappa, \tau})^\top \\
& \qquad (\bU(\kappa))^\top \prod_{i = 1}^{\ell -1}\left[\frac{1}{\sqrt{m}}\bZ^{(i)} + \frac{1}{m}\bW^{(i)}(\kappa)\right]^\top ,\qquad 1\le \ell \le L,\\
\bV(\kappa+1) &= \bV(\kappa)  -  \tau \prod_{\ell = L}^1 \left[\frac{1}{\sqrt{m}}\bZ^{(\ell)} + \frac{1}{m}\bW^{(\ell)}(\kappa)\right]\bU(\kappa)\bxi_{\kappa, \tau} ,
\end{split}
\right.
\end{equation}
 with $\bxi_{\kappa, \tau} =   \int x\, \mathcal{L}'(h_{\kappa, \tau}(x), y)   d\rho_\kappa(x, y)\in \R$, where we have also denoted $ h_{\kappa, \tau}(x)  =  h(x,  \bU(\kappa),  (\bW^{(\ell)}(\kappa))_{1\le \ell \le L},  \bV(\kappa))$, and we always assume uniformly finite second moments, \eqref{eq:unif_second_moments}. 
 
 In analogy with the three-layer case, we  expect the dynamics to be expressed, up to errors which vanish as $m$ gets large, in a suitable Gaussian basis with certain orthogonality properties, and with an explicit behavior with respect to multiplication by $\bZ^{(\ell)}$. The construction of such a basis (and more precisely, of one basis for each layer $\ell$) is a nontrivial generalization of Theorem~\ref{thm:main_convergence} and it is defined in subection~\ref{ssec:basis} below. We describe now how to obtain the limit dynamics, assuming the existence of such a basis, whose properties are detailed in \eqref{eq:using1_m} and \eqref{eq:using2_m} below.
 
We assume therefore the existence of $L+1$ appropriate orthonormal bases, that we denote
 \[
 \bPsi^{0},\bPsi^1,\dots, \bPsi^{L}, \quad\text{with}\quad \bPsi^\ell = (\bPsi^\ell_1, \bPsi^\ell_2, \bPsi^\ell_3, \dots)\quad\text{for any}\quad \quad 0 \le \ell \le L,
 \]
 such that $\bPsi^\ell \in \R^{m\times\infty}$ is a matrix formed of independent $m$-dimensional Gaussian vectors (as columns), $\bPsi^\ell_i\in \R^{m}$ for all $i\in \N$, with entries $\mathcal{N}(0, 1)$, and that are going to act as the approximate bases for $m < \infty$, satisfying 
 \begin{equation}
 \label{eq:using1_m}
  \frac{1}{m} (\bPsi^{\ell})^\top \bPsi^\ell  = {\rm Id}_{\infty},\quad  \frac{1}{m} (\bPsi^{{\ell}})^\top \bPsi^{\ell'}  = {\rm \0}_{\infty \times \infty}, \qquad 0 \le \ell \neq \ell' \le L
 \end{equation}
  up to errors that vanish as $m \to \infty$ (cf. Theorem~\ref{thm:main_convergence}). Namely, we assume that we can write, up to errors that are of order $O(m^{-\frac12 +\delta})$ for any $\delta > 0$, 
 \begin{equation}
 \label{eq:rep_m}
 \left\{
 \begin{split}
 \bU(\kappa) & = \bPsi^{0}\A(\kappa),\\
  \bW^\ell(\kappa) & = \bPsi^{\ell}\G_\ell (\kappa)(\bPsi^{\ell-1})^\top,\qquad 1\le \ell \le L,\\
   \bV(\kappa) & = \bPsi^{L}\B(\kappa),
  \end{split}
  \right.
 \end{equation}
 for some coefficients $\A, \B \in \R^\infty$, $\G_\ell \in \R^{\infty\times \infty}$ for $1 \le \ell \le L$, initialized as \eqref{eq:init_inf}-\eqref{eq:init_inf2} for $d = 1$ and all $1\le \ell \le L$. Finally, we also assume the following recurrence relationship between bases under multiplication by $\bZ^{(\ell)}$ (cf. subsection~\ref{ssec:recursion}),  
 \begin{equation}
 \label{eq:using2_m}
 \begin{split}\frac{1}{\sqrt{m}}  \bZ^{(\ell)} \bPsi^{\ell-1} &= \bPsi^{\ell} \LL_\ell,\\
 \frac{1}{\sqrt{m}}( \bZ^{(\ell)})^\top \bPsi^\ell &= \bPsi^{\ell-1} \LL_\ell^\top,\qquad 1 \le \ell \le L,
 \end{split}
 \end{equation}
 for some fixed matrices $\LL_\ell\in \R^{\infty\times \infty}$ (cf. equation \eqref{eq:LL_def}). We can then write an evolution for the coefficients $\A, \B,$ and $\G_\ell$, using  \eqref{eq:training_m_ml}-\eqref{eq:using1_m}-\eqref{eq:using2_m} and the representation~\eqref{eq:rep_m}:
 \begin{equation}
\label{eq:training_m_ml2}
 \left\{
\begin{split}
\A(\kappa+1) & = \A(\kappa)-  {\tau} \prod_{\ell = 1}^L (\LL_\ell^\top + \G^\top_\ell(\kappa)) \B(\kappa),\\
\G_{ \ell }(\kappa+1) &= \G_{ \ell }(\kappa) - \tau \prod_{i = \ell+1}^{L}(\LL^\top_i + \G^\top_i(\kappa)) \B(\kappa)  \bxi_{\kappa, \tau}^\top \A^\top(\kappa) \prod_{i = 1}^{\ell -1}(\LL_i^\top+\G^\top_i(\kappa)),\\
\B(\kappa+1) &= \B(\kappa)  -  \tau \prod_{\ell = L}^1 (\LL_\ell+\G_\ell(\kappa)) \A(\kappa)\bxi_{\kappa, \tau} ,
\end{split}
\right.
\end{equation}
for $1\le \ell \le L$, with 
\begin{align*}
\chi_{\kappa, \tau}(x) &= \chi(x, \A(\kappa), (\G_\ell(\kappa))_{1\le \ell \le L}, \B(\kappa)), \\
\bxi_{\kappa, \tau} & = \int x\mathcal{L}'(\chi_{\kappa, \tau}(x), y) d\rho_\kappa(x, y)\in \R. 
\end{align*}
When $\rho_\kappa =\rho$ for all $\kappa\in \N$, this recursion is exactly the GD on the (deterministic) objective function $\mathcal{E}$ defined by
\begin{align}\label{eq:limit-objective2}
\mathcal{E}(\A,(\G_\ell)_{1\le \ell \le L},\B) = \int \mathcal{L}\left(\B^\top\prod_{\ell = L}^1 (\LL_\ell+\bG_\ell)\A x), y\right)d\rho(x,y),
\end{align}
and the linear predictor of the neural network is given by 
\[
\A^\top\prod_{\ell = 1}^L (\LL^\top_\ell+\bG^\top_\ell)\B,
\]
up to errors that disappear as $m\to \infty$. Thus, the description of the linear neural network in the general multi-layered case, \eqref{eq:training_m_ml2}, is reduced to finding bases such that \eqref{eq:using1_m} and \eqref{eq:using2_m} hold, up to errors (which is precisely what we did in Section~\ref{sec:basis} above).

\subsection{The choice of the bases}\label{ssec:basis}

Given $L \in \N$ and $0\le \ell \le L$, let us define the following set of finite sequences:
\[
\mathcal{S}^L(\ell) := \left\{(s_0, s_1, s_2, \dots, s_M) : s_0\in \{0, L\}, s_M = \ell, s_i \in \{0,\dots,L\}, |s_i-s_{i-1}| = 1 \right\},
\]
that is, $\mathcal{S}^L(\ell)$ is the set of finite sequences of numbers belonging to $\{0,\dots, L\}$, starting at $0$ or $L$, finishing at $\ell$, and such that each element of the sequence is obtained by adding or subtracting 1 to the previous element (in particular, if $s_0 = 0$, $s_1 = 1$ necessarily). This set is going to be, for each $0\le \ell\le L$, our index set for the basis $\bPsi^\ell$.  For example, when $L = 1$, the sequences in $\mathcal{S}^1(0)$ (and analogously in $\mathcal{S}^1(1)$) are just of the form $0101...0$ or $1010...0$, and can be identified with their length. This is the reason why the index set in the case $L=1$ is just given by the natural numbers, which was the case in Section~\ref{sec:basis}. 

We therefore consider $\bPsi^\ell$ to have as columns the elements $\bPsi^\ell_s$ for $s\in \mathcal{S}^L(\ell)$, and we denote it,
\[
\bPsi^\ell = (\bPsi^\ell_s)_{s\in \mathcal{S}^L(\ell)},\qquad 0 \le \ell \le L, 
\]
where we still need to define what $\bPsi^\ell_s$ is for a given $s\in \mathcal{S}^L(\ell)$. To do so, for notational convenience, given the matrices $\bZ^{(\ell)}$ for $1\le \ell \le L$, we denote
\[
\bZ^{\ell-1, \ell} := (\bZ^{(\ell)})^\top\qquad \text{and}\qquad \bZ^{\ell, \ell-1} := \bZ^{(\ell)}.
\]
Moreover, we let $\bPsi^0_{0}$ and $\bPsi^L_{L}$ be two fixed independent Gaussian vectors of size $m$ (that is, those associated to the sequences $\{0\}$ and $\{L\}$). 

Then, given $s\in \mathcal{S}^L(\ell)$ of length $M+1$, $s = (s_0,\dots,s_M)$, we define 
\begin{equation}
\label{eqn:psinew}\bPsi^\ell_s := m^{-M/2} \sum_{(i_0,\dots, i_{M})\in \mathcal{I}(s, m)} \left( \prod_{j=1}^{M} \bZ_{i_j, i_{j-1}}^{s_{j}, s_{j-1}} \right)(\bPsi^{s_0}_{s_0})_{i_0},
\end{equation}
where $\mathcal{I}(s, m)$ is the set of indices $(i_0, \dots, i_M)$ with $i_j \in \{0,\dots,m\}$  such that $(i_j, s_j) \neq (i_{k}, s_k)$ for all $1\le j\neq k\le M$.  In other words, the main novelty of the current definition with respect to the corresponding definition \eqref{eq:JKdef2} for $L=1$ lies in the fact that the basis is parametrized by an element $s \in \mathcal{S}^L(\ell)$, which identifies a fixed sequence of consecutive layers. Once the sequence is fixed, the sum in \eqref{eqn:psinew} runs over all possible loopless choices of one element between $1,...,m$ in each of the layers signposted by $s$. 


Formally, we obtain orthonormal bases  in the sense \eqref{eq:using1_m} (as in Proposition~\ref{prop:orthonormality}), and the relationships in \eqref{eq:using2_m} are of the form 
\begin{equation}
\label{eq:rel1}
\frac{1}{\sqrt{m}} \bZ^{(\ell)}\bPsi^{\ell-1}_s = \left\{
\begin{array}{ll}
\bPsi^\ell_{(s, \ell)}&\quad\text{ if }\quad s = (s', \ell-2, \ell-1),\\
\bPsi^{\ell}_{(s', \ell)}+\bPsi^\ell_{(s, \ell)}&\quad\text{ if }\quad s = (s', \ell, \ell-1),
\end{array}
\right.
\end{equation}
and 
\begin{equation}
\label{eq:rel2}
\frac{1}{\sqrt{m}} (\bZ^{(\ell)})^\top \bPsi^{\ell}_s = \left\{
\begin{array}{ll}
\bPsi^{\ell-1}_{(s, \ell-1)}&\quad\text{ if }\quad s = (s', \ell+1, \ell),\\
\bPsi^{\ell-1}_{(s', \ell-1)}+\bPsi^{\ell-1}_{(s, \ell-1)}&\quad\text{ if }\quad s = (s', \ell-1, \ell),
\end{array}
\right.
\end{equation}
 for $1 \le \ell \le L$. 
 
 \subsection{The case $L = 2$}
 
 In the case $L = 2$ (that is, a four layers neural network, or a neural network  with three hidden layers) we have a more explicit expression. In this case,  any element $s\in \mathcal{S}^2(\ell)$ is of the form 
 \[
 (s_0, 1, s_2, 1, s_4, 1, s_6, 1, s_8, 1, \dots),\quad\dots\quad s_{2i} \in \{0, 2\},
 \]
 and therefore, we can identify any element   $s$ in  $\mathcal{S}^2(0)$, $\mathcal{S}^2(1)$, or $\mathcal{S}^2(2)$, with a natural number $N(s)$, seeing it as a binary representation. Thus, we associate 
 \[
 \begin{split}
 \mathcal{S}^2(0)\ni s &\mapsto  N_0(s) := 2^{\sigma}+\sum_{i=1}^\sigma 2^{i-2} s_{2 (\sigma-i)}\\
  \mathcal{S}^2(1)\ni s &\mapsto  N_1(s) := 2^{\sigma+1}+\sum_{i=0}^\sigma 2^{i-1} s_{2 (\sigma-i)}\\
   \mathcal{S}^2(2) \ni s&\mapsto N_2(s) := 2^{\sigma}+\sum_{i=1}^\sigma 2^{i-2} s_{2 (\sigma-i)}
 \end{split}
 \]
 where we have denoted $\sigma = \lfloor M/2 \rfloor$ for $s = (s_0,\dots,s_M)$.  With this indexing, we can obtain more explicit relations \eqref{eq:rel1}-\eqref{eq:rel2}, since we now have that $\bPsi^0$, $\bPsi^1$, and $\bPsi^2$ can   be indexed by the natural numbers. That is, as an abuse of notation we denote
 \[
\begin{array}{ll}
 \bPsi^i_j = \bPsi^i_s& \quad \text{if}\quad N_i(s) = j,\quad\text{for}\quad i = 0, 1, 2,
 \end{array}
 \]
 which is well-defined for any $j \ge 2$.

The relations \eqref{eq:rel1}-\eqref{eq:rel2} correspond to
 \begin{equation}
\label{eq:rel1_2}
\frac{1}{\sqrt{m}} \bZ^{(1)}\bPsi^{0}_j = \bPsi^{1}_{j}+\bPsi^1_{2j}, 
\end{equation}
 \begin{equation}
\label{eq:rel1_25}
\frac{1}{\sqrt{m}} (\bZ^{(2)})^\top\bPsi^{2}_j = \bPsi^{1}_{j}+\bPsi^1_{2j+1}, 
\end{equation}
and 
\begin{equation}
\label{eq:rel2_2}
\frac{1}{\sqrt{m}} (\bZ^{(1)})^\top \bPsi^{1}_j = \left\{
\begin{array}{ll}
\bPsi^{0}_{j}&\quad\text{ if $j$ is odd},\\
\bPsi^{0}_{j}+\bPsi^{0}_{j/2}&\quad\text{ if $j$ is even},
\end{array}
\right.
\end{equation}
\begin{equation}
\label{eq:rel2_25}
\frac{1}{\sqrt{m}} \bZ^{(2)}\bPsi^{1}_j = \left\{
\begin{array}{ll}
\bPsi^2_{j}&\quad\text{ if $j$ is even},\\
\bPsi^{2}_{j}+\bPsi^2_{(j-1)/2}&\quad\text{ if $j$ is odd}.
\end{array}
\right.
\end{equation}

Thanks to \eqref{eq:rel1_2}-\eqref{eq:rel1_25}-\eqref{eq:rel2_2}\eqref{eq:rel2_25}, the matrices $\LL_1$ and $\LL_2$ in \eqref{eq:training_m_ml2} can be determined, which are the only missing unknowns to be able to obtain an evolution of the system \eqref{eq:training_m_ml2}:
\[
 (\Lambda_1)_{ij} = 
\left\{
\begin{array}{ll}
1 & \quad\text{if $i = j$ or $2i = j$},\\
0 & \quad\text{otherwise},
\end{array}
\right.
\quad\text{and}\quad (\Lambda_2)_{ij} = 
\left\{
\begin{array}{ll}
1 & \quad\text{if $i = j$ or $2j+1 = i$},\\
0 & \quad\text{otherwise},
\end{array}
\right.
\]
that is, 
\[
\LL_1 = 
\begin{pmatrix}
1 & 1 & 0 & 0 & 0 & 0 & 0 & 0 & 0 &\\
0 & 1 & 0 & 1 &0 & 0 & 0 & 0 & 0 &  \dots\\
0 & 0 & 1 & 0 & 0 & 1 & 0 & 0 & 0 & \\
0 & 0 & 0 & 1 & 0 & 0 & 0 & 1 & 0 & \\
 &  &  &  & \vdots &  &  &  &  & \ddots
\end{pmatrix},
\]
and 
\[
\LL_2^\top = 
\begin{pmatrix}
1 & 0 & 1 & 0 & 0 & 0 & 0 & 0 & 0 &\\
0 & 1 & 0 & 0 &1 & 0 & 0 & 0 & 0 &  \dots\\
0 & 0 & 1 & 0 & 0 & 0 & 1 & 0 & 0 & \\
0 & 0 & 0 & 1 & 0 & 0 & 0 & 0 & 1 & \\
 &  &  &  & \vdots &  &  &  &  & \ddots
\end{pmatrix}.
\]
\bibliographystyle{plain}
\bibliography{LNN}

\end{document}